\newtheorem{theorem}{Theorem}
\newtheorem{lemma}{Lemma}
\newtheorem{assumption}{Assumption}
\newtheorem{definition}{Definition}
\DeclareRobustCommand\sampleline[1]{%
  \tikz\draw[#1] (0,0) (0,\the\dimexpr\fontdimen22\textfont2\relax)
  -- (1.4em,\the\dimexpr\fontdimen22\textfont2\relax);%
}
\g@addto@macro\bfseries{\boldmath}
\newlength{\phaserulewidth}
\newcommand{\setphaserulewidth}{\setlength{\phaserulewidth}}
\newcommand{\Sec}[1]		{Sec.\,\ref{#1}}
\newcommand{\Fig}[1]		{Fig.\,\ref{#1}}
\newcommand{\Eq}[1]			{Eq.\,\ref{#1}}
\newcommand{\Expr}[1]			{Expr.\,\ref{#1}}
\newcommand{\Exprs}[1]			{Expressions\,\ref{#1}}
\newcommand{\Alg}[1]		{Alg.\,\ref{#1}}
\newcommand{\Theorem}[1]{Theorem\,\ref{#1}}
\newcommand{\Theorems}[1]{Theorems\,\ref{#1}}
\newcommand{\Lemma}[1]{Lemma\,\ref{#1}}
\newcommand{\Lemmas}[1]{Lemmas\,\ref{#1}}
\newcommand{\Assumption}[1]{Assumption\,\ref{#1}}
\newcommand{\Assumptions}[1]{Assumptions\,\ref{#1}}
\newcommand{\Problem}[1]{Problem\,\ref{#1}}	
\newcommand{\Appendix}[1]{Appendix\,\ref{#1}}	
\newcommand{\ie}   			{i.e.\@\xspace}
\newcommand{\eg}   			{e.g.\@\xspace}
\newcommand{\wrt}   		{w.r.t.\@\xspace}
\newcommand{\iid}   		{iid\@\xspace}
\newcommand{\zero}      {\mathbf{0}}
\newcommand{\zeros}[1]  {\zero_{#1}}
\newcommand{\bigO} 		  {\mathcal{O}} 
\newcommand{\ind}       {\mathds{1}}
\newcommand{\Ind}[1]    { \ind{\{#1\}} }
\newcommand{\Hilbert}      {\mathbb{H}}
\newcommand{\R}      {\mathbb{R}}
\newcommand{\N}      {\mathbb{N}}
\newcommand{\Ltwo}      {\mathcal{L}^2_{p^{\alpha}}}
\newcommand{\ltwo}{L^2_{p^{\alpha}}}
\newcommand{\LP}        {L}
\newcommand{\LQ}        {R}
\newcommand{\ELP}        {\mathbf{L}}
\newcommand{\ELQ}        {\mathbf{R}}
\newcommand{\ELPQ}        {\mathbf{LR}}
\newcommand{\CO}        {\mathcal{L}_K}
\newcommand{\transpose}		{{\mathsmaller \top}}
\newcommand{\inlinetitle}[2]  {\noindent\textbf{\emph{#1}{#2}}}
\renewcommand*{\top} 
{{\mkern-1.5mu\mathsf{T}}}
\newcommand{\PE}{P\!E}  
\newcommand{\fdiv} 	{$\phi$-divergence\xspace}
\newcommand{\fdivs} {$\phi$-divergences\xspace}
\newcommand{\KLdiv} {KL-divergence\xspace}
\newcommand{\PEdiv} {$\chi^2$-divergence\xspace}
\newcommand{\OLRE}	{OLRE\xspace} 
\newcommand{\LRE}	  {LRE\xspace} 
\newcommand{\Expec}[1]{\mathbb{E}[#1]}
\newcommand{\ExpecN}[1]{\mathbb{E}_{p(x)}[#1]}
\newcommand{\ExpecAlpha}[1]{\mathbb{E}_{p^{\alpha}(y)}[#1]}
\newcommand{\expec}{\mathbb{E}}
\newcommand{\ExpecA}[1]{\mathbb{E}_{q(x')}[#1]}
\newcommand{\ExpecRHO}[1]{\mathbb{E}_{(p(x),q(x'))}[#1]}
\newcommand{\dotH}[2]{\langle #1,#2 \rangle_{\Hilbert}}
\newcommand{\Nabla} {\nabla\!}
\newcommand{\norm}[1]{\left\lVert#1\right\rVert}
\newcommand{\id}{I}
\newcommand{\abs}[1]{\left|#1\right|}
\newcommand{\Npre}{n}
\newcommand{\Npost}{n'}
\newcommand{\Nref}{n}
\newcommand{\X}{{\mathcal{X}}}
\newcommand{\setpreunivariate}{X}
\newcommand{\setpostunivariate}{X'}
\newcommand{\sourceparameter}{\beta}
\newcommand{\q}{q}  
\newcommand{\padded}[1] {\,#1\,}
\newcommand{\void}[1] {\padded{\cdot}}
\newcommand{\pdfuncs}		{pdfs\xspace}
\newcommand{\pdf}		    {pdf\xspace}
\newcommand{\pdfs}	 	  {pdfs\xspace}
\DeclareMathOperator*{\argmin}{\arg\!\min}
\newcommand{\conj}{\star}
\newcounter{marginNoteCounter}
\newcommand{\intLTWO}[1]{\int_{\mathcal{X}}#1dP^{\alpha}(x)}
\newcommand{\idH}{\id_{\Hilbert}}
\newcommand{\pushright}[1]{\ifmeasuring@#1\else\omit\hfill$\displaystyle#1$\fi\ignorespaces}
\newcommand{\pushleft}[1]{\ifmeasuring@#1\else\omit$\displaystyle#1$\hfill\fi\ignorespaces}
\begin{document}

%
\title{Online non-parametric likelihood-ratio estimation by Pearson-divergence functional minimization.}

\author{\name Alejandro de la Concha \quad  Nicolas Vayatis \quad 
\quad  Argyris Kalogeratos \\
\centering
\addr Centre Borelli, ENS Paris-Saclay, Université Paris-Saclay,  France \\
\email \smaller\{name.surname\}@ens-paris-saclay.fr \qquad \qquad \qquad \qquad \qquad \qquad \qquad \qquad } 

\maketitle

\begin{abstract}
Quantifying the difference between two probability density functions, $p$ and $q$, using available data, is a fundamental problem in Statistics and Machine Learning. %
A usual approach for addressing this problem is the likelihood-ratio estimation (\LRE) between $p$ and $q$, which -to our best knowledge- has been investigated mainly for the offline case. This paper contributes by introducing a new framework for online non-parametric \LRE (\OLRE) for the setting where pairs of \iid 
observations $(x_t\!\sim\!p, x'_t\!\sim\!q)$ are observed over time. The non-parametric nature of our approach has the advantage of being agnostic to the forms of 
$p$ and $q$. Moreover, we capitalize on the recent advances in Kernel Methods and functional minimization to develop an estimator that can be efficiently updated online
. We provide theoretical guarantees for the performance of the \OLRE method along with empirical validation in synthetic experiments.
\end{abstract}

\section{Introduction}

The likelihood-ratio between two probability density functions (\pdfs) is a quantity omnipresent in Statistics. For instance, the likelihood-ratio test has optimal 
statistical power and it is a core tool 
in statistical hypothesis testing \citep{Neymann1933,Casella2006}. %
In one of the related problems, change-point detection, the most widely-used methods, such as CUSUM \citep{Page1954} or Sriryaev-Roberts \citep{Shiryaev1963}, depend on the likelihood-ratio (see also \cite{Tartakovsky2014,Xie2021}). In Transfer Learning, it is possible to define a weighted cost function to solve a new problem taking into account prior knowledge provided by 
a different dataset; interestingly, this weighting function coincides with the likelihood-ratio \citep{Fishman1996,Zhuang2021}.

$f$-divergences have a similar role as they are also ubiquitous in Statistics. Classical problems such as Maximum Likelihood Estimation, Dimensionality Reduction, and Generative Modeling, to mention just a few, can be restated as $f$-divergence minimization problems \citep{Liese2006,Nguyen2010,Sugiyama2012,Agrawal2021}.

From a Machine Learning perspective, the interplay between the likelihood-ratio and $f$-divergences has been described via its variational formulation \citep{Nguyen2007}. There were identified situations where the $f$-divergence estimation between two measures amounts to a likelihood-ratio estimation (LRE) \emph{as an element} in of a functional space. 
This kind of result has motivated a plethora of non-parametric methods, based on Kernel Methods and Neural Networks \citep{Moustakides2019}, which do not need any further hypotheses regarding the functional form of $p$ and $q$ and just depend on observations coming from both those probability 
densities. These techniques have a wide range of applications in different domains  \citep{Basseville2013,Liu2013,Rubenstein2019,Zhang2021}.

Despite the aforementioned success of non-parametric \LRE methods for offline 
processing, to our best knowledge, there has been hardly any investigation about how the estimation of $f$-divergence and likelihood-ratio can be adapted to online settings and streaming data. The motivation for covering this gap is to pave the way so that non-parametric \LRE-based methods 
bring gains in online learning, hypothesis testing, or various detection settings. 

\inlinetitle{Contribution}{.}~
To begin with, in this paper we introduce the new Online \LRE (\OLRE) problem, where one observes a stream of incoming pairs of observations $(x_t\!\sim\!p$, $x'_t\!\sim\!q)$, $t=1,2,...$, and the likelihood-ratio needs to be estimated on the fly. Then, we present the homonymous non-parametric \OLRE framework, along with a theoretical characterization of its convergence. Our approach nurtures mainly from three elements:%
\vspace{-3mm}
\begin{itemize}[leftmargin=1em,topsep=1em,itemsep=0px,noitemsep]
\item[$\bullet$] The formulation of the typical offline \LRE problem as a functional minimization problem seeking a solution among functions of a Reproducing Kernel Hilbert Space (RKHS) \citep{Nguyen2007}. 
\item[$\bullet$] The adaptation of a first-order optimization method, the 
stochastic functional gradient descent (\cite{Kivinen2004}), combined with the framework of regularized paths in Hilbert spaces 
\citep{Tarres2014}. 
\item[$\bullet$] The organic integration of the best practices for kernel-based \LRE 
that have been developed recently. 
\end{itemize}
\vspace{-1em}
The \OLRE framework combines the above 
elements, and enjoys the following technical properties: 
\vspace{-3mm}
\begin{itemize}[leftmargin=1em,topsep=1em,itemsep=0px,noitemsep]
    \item[$\bullet$] It does not require to know in advance the sample size, which can be even infinite. 
    \item[$\bullet$] Our stochastic approximation aims to minimize the generalization error by solving the original functional minimization problem, instead of performing empirical risk minimization, and this way avoids over-fitting.
    \item[$\bullet$] Our analysis and the performance of the proposed method, highlight the bias of existing offline approaches that are based on empirical risk minimization, which rely on simple heuristics to  manage large amounts of data. 
    \item[$\bullet$] The cost of the iteration at time $t$ is $O(t)$, hence in total $O(t^2)$ for up to time $t$. 
    \item[$\bullet$] Our convergence results provide guidelines on how to select the hyperparameters of our method and its sensibility to different configurations. 
\end{itemize}

\section{Problem statement and background}{\label{sec:Preliminaries}}


%

In this section, we begin by presenting the likelihood-ratio estimation (\LRE) problem, and by defining the Online \LRE (\OLRE) problem version. Then, we present the main building blocks we use 
for developing the homonymous \OLRE framework.

\subsection{Likelihood-ratio estimation}
\label{sec:notions_LRE}

Let us denote the feature space $\mathcal{X} \subset \R^d$ and consider two probability measures $P$ and $Q$ which are absolutely continuous with respect to the Lebesgue measure denoted by $dx$, with densities $p$ and $q$ respectively. We also define the convex $\alpha$-mixture of the probability measures $P$ and $Q$ computed by $P^{\alpha}=(1-\alpha)P+\alpha Q$, and similarly for their densities $p$, $q$, where $0 \leq \alpha<1$ is user-defined parameter.

\inlinetitle{Relative likelihood-ratio}{.}~
We focus on the approximation of the relative likelihood-ratio between the \pdfs $q$ and $p$:
\vspace{-0.5em}
\begin{equation}\label{eq:likelihood-ratio}
  r^{\alpha}(x)=\frac{q(x)}{(1-\alpha)p(x)+\alpha q(x)} \in \mathbb{R}^+, \ \ \forall x \in \mathcal{X},
\end{equation}
where $0 \leq \alpha<1$ acts 
as a user-defined regularization parameter \citep{Yamada2011}. When $\alpha=0$, \Eq{eq:likelihood-ratio} recovers the usual likelihood-ratio $r_*^{\alpha=0}(x) = r(x) = \frac{q(x)}{p(x)}$. The $\alpha$-regularization addresses certain instability issues appearing when 
approximating an unbounded function. 
Specifically, when $\alpha>0$, it holds $r^{\alpha} \leq \frac{1}{\alpha}$
, which is an upper-bound that will be proven to be important when we later study theoretically the convergence of the proposed method (see \Sec{sec:theoretical_guarantees}).  Typically, $\alpha$ should be close to $0$ to ensure that the approximated $r^{\alpha}(x)$ will remain relevant for the intended application of the likelihood-ratio, which is of course the core quantity of interest.

\inlinetitle{Defining the Online \LRE (\OLRE) setting}{.}~The online setting we introduce in this paper supposes that a new pair of \iid observations $(x_t\!\sim\! p$, $x'_t\!\sim\! q)$ is observed at every time $t$. Then, the objective is to approximate the relative likelihood-ratio $r^{\alpha}$ through a function $f_t$, which is updated at every time $t$. 
The function $f_t$ is an element of a non-parametric functional space, so there is no need to make a hypothesis about the nature of $p$ nor $q$. We denote by $\Xi_{t}$ the minimum $\sigma$-algebra generated by the incoming observations up to time $t$, \ie $\sigma(\{(x_1,x'_1),...,(x_t,x'_t)\})$.
%

\inlinetitle{Reproducing Kernel Hilbert Space}{.}~%
We aim to estimate $r^{\alpha}(x)$ with regards to a Reproducing Kernel Hilbert Space (RKHS) 
$\Hilbert$ containing as elements 
functions $f:\mathcal{X} \rightarrow \R$. 
$\Hilbert$ is equipped with the inner product $\dotH{\cdot}{\cdot}:\Hilbert \times \Hilbert \rightarrow \R$, which will be reproduced by a Mercer Kernel; \ie~by a continuous symmetric real function, which is the  positive semi-definite kernel function $K(\cdot,\cdot) : \mathcal{X} \times \mathcal{X} \rightarrow \R$. 
Then, the space $\Hilbert$ satisfies the following properties: 
\begin{equation}{\label{RKHS_properties}}
\!\!\!\!\!\!\!%
\begin{aligned}
\bullet  &  \ \ \dotH{K(x,\cdot)}{f} = f(x)\text{, for any } f \in \Hilbert;  \ \ \
 \\
\bullet  & \ \ \Hilbert= \overline{\operatorname{span}}(\{K(x,\cdot) : \forall x \in \mathcal{X}\}),%
\end{aligned}
\end{equation}\\
where $\overline{\operatorname{span}}$ refers to the closure of all the linear combinations of the elements $K(x,\cdot)$, $\forall x \in \mathcal{X}$. 

The first equality is known as the RKHS reproducing property. 

\subsection{Important notions for first-order optimization}{\label{sec:first_order_optimization}}

The main idea behind \OLRE is to see $r^{\alpha}$ as the solution of the functional optimization problem $\min_{f \in \Hilbert} L(f)$, where $L(f) : \Hilbert \rightarrow \R$ is a cost functional representing the real risk (\ie generalization error), with respect to an instantaneous loss-function $\ell(f) : \Hilbert \rightarrow \R$. Our optimization schema is based on the functional gradient of the cost function $\ell(f)$, and  produces a stochastic approximation $f_t$ that approaches the relative likelihood-ratio $r^{\alpha}$ at every time $t$. The estimation of $f_t$ requires only the previous estimate $f_{t-1}$ and the new observations $(x_t, x'_t)$. The geometry of $\Hilbert$, and more precisely the reproducing property of its elements, lead to an elegant closed-form expression for $f_t$.

\inlinetitle{Functional gradient \citep{Bauschke2011}}{.} Let $L: \Hilbert \rightarrow \R$ be a Gâteaux differentiable functional, and $[DL(f)](\cdot)$ its Gâteaux derivative. By $\Nabla_{f\!} L(f)$ we denote the functional gradient of $L$ at $f$, defined to be the element of $\Hilbert$ that satisfies: 
\begin{equation}
    [DL(f)](g)= \dotH{\Nabla_{f\!} L(f)}{g}, \ \ \ \forall g \in \Hilbert.
\end{equation}
The Riesz representation theorem tells us that $\Nabla_{f\!} L(f)$ exists and is unique. %
When $L(f)$ is also Fréchet differentiable at $f$, then the Gâteaux derivative and Fréchet derivative coincide. The Fréchet derivative has the advantage of satisfying the chain rule in a more natural way.

\inlinetitle{Functional Stochastic Gradient \citep{Kivinen2004}}{.} %
Suppose that the cost function $L(f)$ takes values in an RKHS, \ie $f \in \Hilbert$, and it has the form $L(f)=\mathbb{E}_{x}[\ell(f(x)]$. Given an independent realization $x \in \mathcal{X}$, we can compute the Fréchet derivative of $\ell(f(\cdot))$ \wrt $f$ as: 
\begin{equation}{\label{eq:gradient}}
\Nabla_{f\!}  \ell(f(x))(\cdot)  
    = \frac{\partial  \ell(f(x))}{\partial f(x)}\frac{\partial f(x)}{\partial f} (\cdot)= \frac{\partial  \ell(f(x))}{\partial f(x)} K(x,\cdot).
\end{equation}
The first equality is a consequence of the chain rule for the Fréchet derivative; the second one is due to $\Hilbert$'s reproducing property that 
due to which $\frac{\partial f(x)}{\partial f} (\cdot)=\frac{\partial \dotH{f}{K(x,\cdot)}}{\partial f}(\cdot)=K(x,\cdot)$. The operator $\Nabla_{f\!}  \ell(f(x))(\cdot)$ is 
the functional stochastic gradient of $L(f)$ at $f$.


\section{Online LRE by \fdiv  minimization\!\!}{\label{sec:framework}}

\subsection{\LRE via \fdiv minimization}

\inlinetitle{\fdiv}{.}~To avoid notation conflicts, henceforth we will be referring to $f$-divergences by \fdivs. A \fdiv functional 
quantifies the similarity between two probability measures that are described by their \pdfuncs $p$, $q$:
%
\begin{equation}\label{eq:f-divergence}
    \!\!\!\!\!\! \mathcal{D}_\phi(P \Vert Q) 
		= \!\!\int \!p(x)\,\phi\!\left(\frac{q(x)}{p(x)}\right) dx 
   = \!\!\int \!\phi\!\left(r^*\right)\!(x) \,dP(x).\!\!\!\!
\end{equation}
Interesting cases are those when the likelihood-ratio can be defined, hence when the support of $q$ is included in the support of $p$, and also when $\phi: \R \rightarrow \R$ is a convex and semi-continuous real function with $\phi(1) = 0$ \citep{Csiszar1967}.

The formulation of our optimization problem relies mainly on the following 
variational formulation for \fdivs. 
\begin{lemma}{\label{lemma:var_for}} (Lemma 1 in \cite{Nguyen2007}). 
For any class of functions 
$\mathcal{F} : \mathcal{X} \rightarrow \R$, the lower-bound for the similarity between two probability measures is:
\begin{subequations}{\label{eq:variational_formulation}}
\begin{align}
    \!\!\!\! \mathcal{D}_{\phi}(P \Vert Q) &= \!\!\int\!\! \phi\left(\frac{q}{p}\right)\!(x) \,dP(x) \label{eq:variational_formulation_1}\\ &\geq\, \sup_{ g \in \mathcal{F}} \int\!\! g(x') dQ(x') - \!\!\int\!\! \phi^{\conj}(g)(x) \,dP(x)\!\!\!\label{eq:variational_formulation_2}%
\end{align}%
\end{subequations}%
where $\phi^{\conj}$ denotes the convex conjugate of $\phi: \R \rightarrow \R$. %
The equality \Eq{eq:variational_formulation_1} 
holds if and only if the subdifferential $\nabla \phi (\frac{q}{p})$ contains an element of $\mathcal{F}$.
\end{lemma}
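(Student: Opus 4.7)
The statement is a straightforward consequence of Fenchel--Young duality applied pointwise and then integrated. The plan is to (i) recall the Fenchel--Young inequality for the convex $\phi$ and its conjugate $\phi^{\conj}$, (ii) apply it pointwise with arguments that reconstruct the likelihood-ratio, (iii) integrate against $dP$ to obtain the variational lower-bound, and (iv) read off the equality condition from the equality case of Fenchel--Young.

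First I would recall that, since $\phi:\R\to\R$ is convex and lower semi-continuous, it is biconjugate: for every $u\in\R$,
\begin{equation*}
\phi(u) \,=\, \sup_{v \in \R}\bigl\{\,uv \,-\, \phi^{\conj}(v)\,\bigr\},
\end{equation*}
and by Fenchel--Young the supremum is attained precisely at those $v$ lying in the subdifferential $\partial\phi(u)$. Setting $u = r^{*}(x) = q(x)/p(x)$ and $v = g(x)$ for an arbitrary $g\in\mathcal{F}$ yields the pointwise bound
\begin{equation*}
\phi\!\left(\frac{q(x)}{p(x)}\right) \,\geq\, g(x)\,\frac{q(x)}{p(x)} \,-\, \phi^{\conj}\!\bigl(g(x)\bigr),
\qquad \forall x\in\mathcal{X}.
\end{equation*}

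Next I would integrate this inequality against $dP(x)=p(x)\,dx$. Because $p(x)\cdot(q(x)/p(x))\,dx = dQ(x)$ (on the support of $p$, which contains the support of $q$ by assumption), the cross-term transforms cleanly:
\begin{equation*}
\int \phi\!\left(\frac{q}{p}\right)\!(x)\,dP(x) \,\geq\, \int g(x)\,dQ(x) \,-\, \int \phi^{\conj}(g)(x)\,dP(x).
\end{equation*}
Taking the supremum over $g\in\mathcal{F}$ on the right-hand side gives \eqref{eq:variational_formulation_2}; the left-hand side equals $\mathcal{D}_\phi(P\Vert Q)$ by definition \eqref{eq:f-divergence}, yielding \eqref{eq:variational_formulation_1}.

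For the equality characterization, the sufficient direction is immediate: if some $g^{\star}\in\mathcal{F}$ satisfies $g^{\star}(x)\in\partial\phi(q(x)/p(x))$ for $P$-a.e.\ $x$, then the Fenchel--Young inequality is tight pointwise, so integration preserves equality and the supremum in \eqref{eq:variational_formulation_2} is attained at $g^{\star}$. Conversely, if equality holds, pointwise Fenchel--Young forces the maximizer to belong to $\partial\phi(q/p)$ almost everywhere, so $\mathcal{F}$ must contain such a selection. The main subtlety, and what I would be most careful with, is the measurability/integrability bookkeeping in this converse direction: one must argue that an element of $\mathcal{F}$ realizing the supremum actually exists (not merely approaches it), or otherwise phrase the equivalence as the existence of a measurable selection from $\partial\phi(q/p)$ that lies in $\mathcal{F}$; this is the only non-mechanical ingredient beyond Fenchel--Young, and it is the reason the original statement phrases the condition as ``$\nabla\phi(q/p)$ contains an element of $\mathcal{F}$.''
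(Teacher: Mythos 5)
The paper does not prove this lemma itself; it is quoted verbatim from Nguyen et al.\ (2007), and your Fenchel--Young argument is precisely the standard proof given there: pointwise conjugate duality, integration against $dP$ with the change of measure $p\cdot(q/p)\,dx = dQ$, and the equality case read off from attainment in Fenchel--Young. Your argument is correct, and you rightly flag the one genuine subtlety in the converse direction (the supremum must actually be attained by a measurable selection lying in $\mathcal{F}$, which is exactly why the statement is phrased as ``$\nabla\phi(q/p)$ contains an element of $\mathcal{F}$''). Nothing further is needed.
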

%
The likelihood-ratio in terms of the solution to \Problem{eq:variational_formulation_2}, 
$g^* = \sup_{ g \in \mathcal{F}} \!\int\!  g(x') dQ(x') - \!\int\! \phi^{\conj}(g)(x) d P(x)$, can be inferred by simply applying $\frac{q}{p}= (\nabla \phi)^{-1}(g^*)= \nabla \phi^{\conj} (g^*)$. For 
this to be possible, $\phi$ needs to be continuously-differentiable and strictly convex. 
%
As stated in the introduction, though, we will focus on approximating the relative likelihood-ratio $r^{\alpha}$ instead of the usual likelihood-ratio. Then, by fixing $\phi(y)=\frac{(y-1)^2}{2}$ for $y \in \R$, whose convex conjugate is $\phi^{\conj}(y^*)=\frac{(y^*)^2}{2}+ y^*$ for $y^* \in \R$, we recover the \PEdiv  (also known as Pearson-divergence):
\begin{equation}\label{eq:PEaPQ}
 \PE(P^{\alpha} \Vert Q)=\int\! \left[\frac{(r^{\alpha}-1)^2}{2}\right]\!(x) \,dP^{\alpha}(x).
\end{equation}
Note that the factor $\frac{1}{2}$ is only introduced to facilitate later calculations. According to \Lemma{lemma:var_for}, the latter can be lower-bounded via its variational formulation:
\begin{subequations}
\begin{align}
&\!\!\!\PE(P^{\alpha}\Vert Q)  \geq
\sup_{f \in \Hilbert}
     \int \! (f-1)(x') \,dQ(x') \\
&  \ \ \ \ - \int \! \left[ \frac{(f-1)^2}{2} + (f-1) \right]\!(x)\,dP^{\alpha}(x)  \label{eq:PE_variational_1}\\
& =  \sup_{f \in \Hilbert}
     \int \! f(x') \,dQ(x') - \int \!  \frac{f^2(x)}{2} \,dP^{\alpha}(x) - \frac{1}{2}   \label{eq:PE_variational_2}\\
& =  \sup_{f \in \Hilbert}
     \int \! \bigg[f r^{\alpha} - \frac{f^2}{2}\bigg]\!(x)\,dP^{\alpha}(x) - \frac{1}{2}.\label{eq:PE_variational_3}%
\end{align}%
\label{eq:PE_variational}%
\end{subequations}
\Eq{eq:PE_variational_2} is explained by the change of measure identity $\ExpecAlpha{f(y) r^{\alpha}(y)}=\ExpecA{g(x')}$. Then, thanks to 
\Lemma{lemma:var_for}, we can obtain an estimator of the $r^{\alpha}$ by solving the following quadratic functional optimization problem defined in terms of the RKHS $\Hilbert$:%
\begin{subequations}
\begin{align}
\argmin_{f \in \Hilbert}  L^{\text{PE}}(f) & = \argmin_{f \in \Hilbert}
     \int \! \bigg[\frac{f^2(x)}{2} - f(x) r^{\alpha}(x) \bigg] dP^{\alpha}(x) + \frac{1}{2} \label{eq:PE_optimization_2_1} \\
& =  \argmin_{f \in \Hilbert}\int \! \frac{(f-r^{\alpha})^2(x)}{2}  dP^{\alpha}(x) + C \label{eq:PE_optimization_2_2}\\
 & =
  \argmin_{f \in \Hilbert}\  
 (1-\alpha) \! \int\! \frac{f^2(x)}{2} dP(x)  
 + \alpha \! \int \!\frac{f^2(x')}{2} dQ(x') -  \!\int\! f(x') \,dQ(x') + C. \label{eq:PE_optimization_2_3}%
\end{align}%
\label{eq:PE_optimization_2}%
\end{subequations}
To get \Eq{eq:PE_optimization_2_2}, we used that $\ExpecAlpha{r^{\alpha}(y)^2}=C$, where $C$ is some constant. For the final \Eq{eq:PE_optimization_2_3}, we used $\ExpecAlpha{f(y)}=\alpha \ExpecN{f(x)} +(1-\alpha)\ExpecA{f(x')}$. 

RULSIF is a popular offline \LRE algorithm  \citep{Yamada2011,Liu2013} aiming to solve \Problem{eq:PE_optimization_2} when the  user has access to $\setpreunivariate=\{x_t\}_{t=1}^{\Npre}$ and $\setpostunivariate=\{x_t\}_{t=1}^{\Npost}$ observations. \Problem{eq:PE_optimization_2} is then approximated via penalized empirical risk minimization. In this surrogate problem,  the space $\Hilbert$ is replaced by a finite-dimension subspace $S \subset \Hilbert$, a finite linear combination of $M$ elements selected uniformly at random from $\setpostunivariate$.  We will see in the experiments how this strategy along with the choice of a penalization term defined in terms of the Euclidean norm $\Vert{\cdot}\Vert_{2}$ (instead of the Hilbert norm) leads to an approximation error that does not disappear as $\Nref$ and $\Npost$ increase. (For further  details see Appendix A) 

\subsection{Online LRE by \PEdiv minimization}{\label{sec:online_minimization}}

In the previous section, we put forward our goal to solve the functional optimization \Problem{eq:PE_optimization_2} as new pairs of \iid observations $(x_t\!\sim\!p,x'_t\!\sim\!q)$ arrive over time. 
The proposed homonymous algorithm, which is called as well \OLRE, makes use of the approach of regularization paths.  

Let us define next the 
regularized cost function with the help of a time-dependent regularization parameter $\lambda_t >0$:%
\begin{equation}{\label{eq:PE_optimization_3}}
\begin{aligned}
\!\!
\!\!\!\!\!\!\!\!\!\!\!\!\!\!\!\!\!\!\!\!\!\!\!\!\!\!\!\!\!\! \min_{f \in \Hilbert}\ 
 (1-\alpha) \!\!\int\! \frac{f^2(x)}{2} dP(x)  
 + \alpha \!\!\int\! \frac{f^2(x')}{2} dQ(x')  -  \!\int\! f(x') \,dQ(x') + \frac{\lambda_t}{2}\norm{f}_{\Hilbert}
\end{aligned}\!\!\!\!\!\!\!\!\!\!\!\!\!%
\end{equation}%

The idea of stochastic approximation via regularization paths is to use a decreasing regularization sequence $\{\lambda_t \in \R^+\}_t$ in the regularized \Problem{eq:PE_optimization_3} in order to generate a sequence of estimated relative likelihood-ratios $\{f_t\}_{t \in \N}$ that converges to the target: $f_t \rightarrow r^{\alpha}$ as $\lambda_t \rightarrow 0$. 

To describe precisely the stochastic approximation strategy for the online setting, we first define the regularized instantaneous cost function 
$\ell_t^{\text{PE}}(f)$, $f \in \Hilbert$, based on \Eq{eq:PE_optimization_3}%
:%
\begin{equation}
\ell_t^{\text{PE}}(f)=(1-\alpha)\frac{f^2(x_t)}{2}+\alpha \frac{f^2(x'_t)}{2}-f(x'_t)+\frac{\lambda_t}{2}\norm{f}^2_{\Hilbert}.
\end{equation}
The functional stochastic gradient $\Nabla_{f\!}(\ell_t^{\text{PE}}(f))(\cdot)$ gives the random direction of the stochastic update. Thanks to its properties listed in \Sec{sec:first_order_optimization}, we can compute it easily by:
\begin{equation}{\label{eq:PE_gradient}}
\begin{aligned}
\Nabla_{f\!}(\ell_t^{\text{PE}}(f))(\cdot) &= \frac{(1-\alpha)}{2}\Nabla_{f\!}\left(f^2(x_t)\right)(\cdot)  + \frac{\alpha}{2}\Nabla_{f\!}\left(f^2(x'_t)\right)(\cdot) -\Nabla_{f\!}(f(x'_t))(\cdot) +\lambda_t f(\cdot) \\
&=(1-\alpha) f(x_t) K(x_t,\cdot) +(\alpha f(x'_t)-1) K(x'_t,\cdot) +\lambda_t f(\cdot). 
\end{aligned}
\end{equation}
where the last equality is a consequence of \Expr{eq:gradient}.

Let us denote by $SL(\Hilbert)$ be the set of self-adjoint bounded linear operators in $\Hilbert$.
Then, we can define the random variables $A_t:\mathcal{X}\times\mathcal{X} \rightarrow SL(\Hilbert)$ and $b_t:\mathcal{X}\times\mathcal{X} \rightarrow \Hilbert$ as: 
\begin{equation}
\begin{aligned}
A_t &=(1-\alpha) \dotH{\cdot}{K(x_t,\cdot)}K(x_t,\cdot)  + \alpha \dotH{\cdot}{K(x'_t,\cdot)}K(x'_t,\cdot) + \lambda_t \idH \\
& = A(x_t,x'_t) + \lambda_t \idH
\\ 
b_t &= K(x'_t,\cdot),
\end{aligned}
\end{equation}
where $\idH$ is the identity operator in $\Hilbert$, and 
$A:\mathcal{X}\times\mathcal{X} \rightarrow SL(\Hilbert)$ is such that when applied to $f \in \Hilbert$: 
\begin{equation}
A(x,x')f= (1-\alpha)f(x)K(x,\cdot)+ \alpha f(x')K(x',\cdot).
\end{equation}
Then, the functional stochastic gradient can be rewritten in terms of $A_t$  and $b_t$ as:
\begin{equation}
\!\!\!\Nabla_{f\!}(\ell_t^{\text{PE}}(f))(\cdot)=A(x_t,x'_t)f + \lambda_t f(\cdot) - b_t = A_t f - b_t,
\end{equation}
and the stochastic update for \Problem{eq:PE_optimization_2} becomes: 
\begin{equation}{\label{eq:PE_gradient_step}}
\begin{aligned}
 f_{t}(\cdot) &= f_{t-1}(\cdot)-\eta_t  \Nabla_{f\!}(\ell_t^{\text{PE}}(f_{t-1}))(\cdot)
 \\
&= (1- \eta_t \lambda_t) f_{t-1}(\cdot)  - \eta_t \large[ A(x_t,x_t)f_{t-1} - b_t\large] \\
&= (1- \eta_t \lambda_t) f_{t-1}(\cdot)  -\eta_t \large[ (1-\alpha) f_{t-1}(x_t) K(x_t,\cdot)  +  (\alpha f_{t-1}(x'_t)-1) K(x'_t,\cdot) \large],
\end{aligned}
\end{equation}
where $\eta_t>0$ is a given step-size at time $t$. We will discuss in \Sec{sec:theoretical_guarantees} which are the conditions to be satisfied by the sequence $\{ \eta_t \}_{t \in \N}$ and $\{ \lambda_t \}_{t \in \N}$ so that $f_t$ converges.

Suppose a dictionary $D_{t-1}$ made of $M_{t-1}$ basis functions, $\{x_1, ..., x_{M_{t-1}} \in \Hilbert \}$, and a kernel function $K : \X \rightarrow \R^{M_{t-1}}$ that maps input data to 
vectors:
\begin{equation}
K(D_{t-1},\cdot) = (K(x_1,\cdot),...,K(x_{M_{t-1}},\cdot))^\top
\end{equation}
Then, if we express $f_{t-1}(\cdot)$ using $D_{t-1}$ and a weight vector $\theta_{t-1} \in \R^{M_{t-1}}$:
\begin{equation}
\!\!\!\!f_{t-1}(\cdot) = \sum_{m=1}^{M_{t-1}} \theta_{t-1,m} K(x_m,\cdot)=  K(D_{t-1},\cdot)^{\top} \theta_{t-1},\!\!
\end{equation}
we can also express the subsequent $f_{t}$ using the extended dictionary $D_{t}=D_{t-1} \cup \{K(x_t,\cdot), K(x'_t,\cdot)\}$. In that case, the new weights come from the concatenation of the previous weights and two new terms depending on  $f_{t-1}$ evaluated at $x_{t}$ and $x'_{t}$: 
\begin{equation}
\theta_{t}=[(1- \eta_t \lambda_t)\theta_{t-1},\eta_t (\alpha-1)f_{t-1}(x_t), \eta_t (1-\alpha f_{t-1}(x_t))] \in \R^{M_{t-1}+2}.
\end{equation}
The relationship between $f_{t-1}$ and $f_{t}$ implies that the cost per iteration is mainly for computing $f_{t-1}(x_t)$, which requires $2(t-1)$ kernel function evaluations. Therefore, the cost per iteration scales rate $\bigO(t)$, and that the number of kernel function evaluations up to time $t$ is $\bigO(t^2)$. A sketch of the \OLRE algorithm is shown in \Alg{alg:LRE_PEdiv}.

\begin{algorithm}[t]
\SetAlgoLined
\SetKwInOut{Input}{Input}
\SetKwInOut{Output}{Output}
\Input{$\{x_t\!\sim\! p ,x'_t\!\sim\! \q \}_{t=1,...}$: stream of observation pairs;\\
$t_0$: size of the warm-up period;  \\
 $a \geq 4,\frac{1}{2} \leq \sourceparameter \leq 1$: fixed constants \\
$0<\alpha<1$: prefixed regularization parameter;\\
$K$: predefined kernel (form and hyperparameters).\!\!\!\\
}
\Output{$\{f_t\}_{t=1}^{T}$:  set of estimated relative likelihood-ratios.}
\vspace{1mm}
\hrule
\vspace{1mm}

Initialize $f_0(\cdot)=0; \ D_0=[\,]; \ \theta_0=[\,]$

\For{$t=1,2,...$}
{Get the incoming \iid  pair of observations $(x_t,x'_t)$\!\!\! \\
Compute the step-size and the penalization parameter:%
\vspace{-3mm}
\begin{equation}
\!\!\!\!\!\eta_t= a \left( \frac{1}{t_0+t} \right)^{\frac{2\sourceparameter}{2\sourceparameter+1}}\!\!\!\!, \ \ \ \ \lambda_t= \frac{1}{a} \left( \frac{1}{t_0+t} \right)^{\frac{1}{2\sourceparameter+1}} 
\end{equation}
\vspace{-2mm}

Update the dictionary: 
$D_{t}=D_{t-1} \cup \{x_t,x'_t\}$ \\

 Update the weights: \\%
 $\theta_{t}=[(1 \!-\! \eta_t \lambda_t)\theta_{t-1},\eta_t (\alpha-1)f_{t-1}(x_t), \eta_t (1\!-\!\alpha f_{t-1}(x'_t))]$\!\!\!\!\!\!\!\!\!\!\\
 Update the relative likelihood-ratio estimate: $f_t(\cdot)=K(D_{t},\cdot)^{\transpose} \theta_{t}$
}
\textbf{return} $\{f_t\}_{t=1}^{T}$
\caption{Online LRE (\OLRE)}
\label{alg:LRE_PEdiv}
\end{algorithm}

\section{Theoretical guarantees}{\label{sec:theoretical_guarantees}}


Previous convergence analyses of \LRE  are restricted to the offline setting where $\Npre$ pairs of observations from $p$ and $\Npost$ $q$ are available at the time of estimation. Works such as \cite{Sugiyama2007,Nguyen2007,Nguyen2010,Yamada2011,Sugiyama2012}, capitalized over available theoretical results for $M$-estimators \citep{Van_de_Geer2000}. That framework is successfully adapted to derive convergence rates as most of the \LRE rely on a penalized cost function based on an empirical approximation of \fdivs
. The metrics that were used to describe the convergence of the likelihood-ratio estimates, which we will denote by $\hat{f}_{\lambda_n}$, depend on the \fdiv that is used for estimation. More precisely, it is common to define an estimator $D_{n}(\hat{f}_{\lambda_n})$ aiming to approximate the real \fdiv $\mathcal{D}_{\phi}(P \Vert Q)$ (\Eq{eq:f-divergence}) to then describe the convergence of the method via an upper-bound of the quantity $\abs{D_{n}(\hat{f}_{\lambda_n})-\mathcal{D}_{\phi}(P \Vert Q)}$. It is common as well to derive convergence rates in terms of a similarity measure between $\hat{f}_{\lambda_n}$ and the real likelihood-ratio $r$; the similarity measure is chosen as well based on the \fdiv. For example, 
\cite{Sugiyama2007} and \cite{Nguyen2007,Nguyen2010} study the \LRE problem based on the Kullback-Leibler divergence, and the convergence rates between $\hat{f}_{\lambda_n}$ and $r$ are given in terms of the Hellinger distance.

The $M$-estimation approach requires further hypotheses over the functional space $\mathcal{F}$ and the real likelihood-ratio function $r$. For example, the convergence rates depend on the complexity of $\mathcal{F}$  summarized in quantities such as covering numbers or bracketing numbers. It is common to set unrealistic assumptions over the real likelihood-ratios, such as a strictly positive lower-bound and a finite upper-bound even when $r$ is unregularized \citep{Nguyen2007,Nguyen2010,Sugiyama2012}. Moreover, although those results assume that all observations are used in the estimation process, their numerical implementations require fixing a finite-dimensional dictionary. The impact of the dictionary selection on those convergence rates has not been detailed. 

Theorems \ref{thm:convergence_results} and \ref{thm:convergence_results_2} summarize the \OLRE convergence rates in terms of the $\ltwo$ and the Hilbert norms. The theoretical approach used to produce these results differs from previous works as we deal directly with the functional optimization problem described in \Lemma{lemma:var_for} without using the empirical risk as a surrogate cost function, nor the hypothesis of a fixed number of observations (\ie fixed horizon). This implies that the proofs of both theorems (see Appendix B) no longer depend on $M$-estimation nor the required restrictive hypotheses of that approach. Instead, we employ 
stochastic approximation of regularized paths \citep{Tarres2014}, which deals with the online solution of a linear operator equation defined in a Hilbert space $\Hilbert$. In fact, we show in the appendix how the Pearson-based optimization of \Problem{eq:PE_optimization_2} is connected with the regression problem in $\Hilbert$ as both can be written as linear operator equations in an RKHS. This stochastic approach allows us to obtain for the first time convergence rates in terms of the Hilbert norm and 
with milder hypotheses. Furthermore, as we use all the observations in the numerical implementation, there is no gap between theory and practice regarding the convergence rates analyzed in both theorems.

\subsection{Convergence guarantees for \OLRE}
%

\inlinetitle{Covariance operator}{.}~%
The covariance operator is a key component for studying \OLRE's convergence properties (see Appendix B). Let $\Ltwo$ be the space of square integrable functions with respect to  $p^{\alpha}$, and $\ltwo$ its quotient space, which is a Hilbert space whose norm is denoted by $\Vert{\cdot}\Vert_{\ltwo}$. Notice that if $p^{\alpha}$ has full support on $\mathcal{X}$, then we can do the usual identification of the elements of $\Ltwo$ and its equivalent classes in $\ltwo$.

Let us denote by $\CO : \ltwo \rightarrow \ltwo$ the linear operator defined by the following integral transform: 
\begin{equation}{\label{eq:covariance_operator}}
    \CO(f)(t)= \intLTWO{\!K(t,x) f(x)\,}.
\end{equation}
The operator $\CO$  has been studied in detail in \cite{Dieuleveut2016}. $\CO$ is a bounded self-adjoint semi-definite positive operator on $\ltwo$ and it is trace-class. Furthermore, it is possible to show that there exists an orthonormal eigensystem $\{\mu_{a},\psi_{a}\}_{a \in \N}$ in $\ltwo$, where $\mu_{a}$ is a basis of $\Hilbert$, and that the eigenvalues $\{\mu_{k}\}_{k \in \N}$ are strictly positive and arranged in decreasing order (see Proposition 2.2 in \cite{Dieuleveut2017}).  The eigen-elements can be used to define the operator $\CO^{\sourceparameter }: \ltwo \rightarrow \ltwo$, for $\sourceparameter  \in \R$: 
\begin{equation}{\label{Lr}}
\CO^{\sourceparameter }\bigg( \sum_{k \in \N} c_k \psi_{k}\bigg)= \sum_{k \in \N} c_k \mu_k^\sourceparameter  \psi_{k}.
\end{equation}
The operator $\CO^{\sourceparameter}$ is relevant as it encodes how well the chosen kernel approximates the relative likelihood-ratio. More precisely, the norm $\Vert{\CO^{\sourceparameter }r^{\alpha}}\Vert_{\Hilbert}$ defines a notion of smoothness of $r^{\alpha}$ \wrt $\Hilbert$. In particular, for $\sourceparameter =\frac{1}{2}$, $\CO^{\frac{1}{2}}$ defines an isometric isomorphism of Hilbert spaces (see Proposition\,3 in \cite{Dieuleveut2016}), that is $\norm{f}_{\ltwo}= \norm{\CO^{\frac{1}{2}} f}_{\Hilbert}$.

When $\CO$ is restricted to elements $f \in \Hilbert \subset \ltwo$, we recover the covariance operator, which is known to satisfy that $ \forall f,g \in \Hilbert$, $\dotH{f}{\CO(g)}=\ExpecAlpha{f(y)g(y)}$.

\inlinetitle{Main convergence results}{.}

\begin{assumption}\label{ass:independence}
The pairs of observations $(x_t,x'_t),t=1,2,...$ are iid in time and satisfy $x_t \sim p$ and $x'_t \sim q$.
\end{assumption}%
\vspace{-0.5em}
The independence hypothesis is present in the seminal work of \cite{Nguyen2007, Nguyen2010} and in the general theoretical framework for \LRE of \cite{Sugiyama2012}. 

\begin{assumption}\label{ass:kernel_map_upperbound} 
The reproducing kernel map can be upper-bounded by a constant $C > 0$: $\sup_{x \in \mathcal{X}} \sqrt{K(x,x)} \leq C < \infty$.
\end{assumption}%
\vspace{-0.5em}
This assumption allows 
to bound the functions $f \in \Hilbert$ in terms of the $\Vert \cdot \Vert_\Hilbert$ 
. It is satisfied by commonly used kernels, such as the Gaussian and the Laplacian kernels, and in general for any continuous $K(\cdot,\cdot)$ defined in a compact input feature space $\mathcal{X}$.  
\begin{assumption}{\label{ass:support_PA}}
$p^{\alpha}$ has full support on the feature space $\mathcal{X}$.\!\!
\end{assumption}%
\vspace{-0.5em}
This statement enhances the use of the covariance operator \citep{Dieuleveut2016} and it is an important hypothesis for the framework presented in \cite{Tarres2014}. 
\begin{assumption}{\label{ass:smoothness_t}}
  $r^{\alpha} \in L_{K}^{\sourceparameter}(\Ltwo)$  for $\frac{1}{2} \leq \sourceparameter \leq 1$.
\end{assumption}%
\vspace{-0.5em}
The parameter $\sourceparameter$ controls the smoothness of $r^\alpha$ 
in $\Hilbert$.  \Assumption{ass:smoothness_t} implies that the proposed model is well-defined, in the sense that $r^{\alpha} \in \Hilbert$, which is the usual hypothesis made in the \LRE literature \citep{Sugiyama2012}. Moreover, as $\sourceparameter$ increases, $\CO^{\sourceparameter}(\ltwo)$ defines a sequence of decreasing subspaces of $\ltwo$, \ie higher $\sourceparameter$ values assume a stronger smoothness of $r^{\alpha}$.

\Theorem{thm:convergence_results} 
gives \OLRE's convergence with respect to the space $\Ltwo$. The norm $\norm{f_t-r^{\alpha}}^2_{\Ltwo}$ equals to the real least-squared error $\ExpecAlpha{(f_t-r^{\alpha})^2(y)}$. Moreover, this convergence result can be easily applied to describe the convergence with respect to the excess risk $L^{\text{PE}}(f)-L^{\text{PE}}(r^{\alpha})$.

\begin{theorem}{\label{thm:convergence_results}}(\textbf{\OLRE's convergence in $\Ltwo$})
Given \Assumptions{ass:independence}-\ref{ass:smoothness_t}, $a \geq 4$ and $t_0 \geq (2+4 C^2 a)^{\frac{(2\sourceparameter +1)}{2\sourceparameter }}$. Then if the learning rate sequence is fixed as $\eta_t=a \left( \frac{1}{\bar{t}} \right)^{\frac{2\sourceparameter }{2\sourceparameter +1}}$ and $\lambda_t=\frac{1}{a} \left( \frac{1}{\bar{t}} \right)^{\frac{1}{2\beta+1}}$. Then for all $t \in \N$ and $\delta \in (0,1)$, with probability at least $1- \delta$:
\begin{equation}
\begin{aligned}
& \norm{f_t- r^{\alpha}}_{\Ltwo} \leq \frac{C_1}{\bar{t}} + \left( C_2 a^{(-\sourceparameter)} + C_3 \sqrt{a}  \log(\frac{2}{\delta}) \right) \left(  \frac{1}{\bar{t}} \right)^{\frac{\sourceparameter}{2\sourceparameter+1}} \\
& + \left( C_4 a^{\frac{5}{2}}  + C_5 a^{\frac{7}{2}}  \sqrt{log(\bar{t})}  \right) (\log^2 (\frac{2}{\delta})) \left( \frac{1}{\bar{t}} \right)^{\frac{4\sourceparameter-1}{4\sourceparameter+2}},
\end{aligned}
\end{equation}
where:
\begin{equation*}
\begin{aligned}
C_1=\frac{2 t_0}{\alpha}, C_2 = \frac{5\sourceparameter+1}{\sourceparameter(1+\sourceparameter)} \norm{L_K^{(-\sourceparameter)}r^{\alpha}}_{\Ltwo},C_3= \frac{16 C}{\alpha},   C_4 = \frac{32 C^3}{\alpha},C_5 = \frac{8 C^3(10C+3)}{\alpha}.
\end{aligned}
\end{equation*}
\end{theorem}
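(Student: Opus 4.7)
The plan is to recast the \PEdiv minimization \Problem{eq:PE_optimization_2} as a linear operator equation in $\Hilbert$, placing the analysis inside the stochastic-approximation framework of \cite{Tarres2014}. Taking the functional Fréchet derivative of $L^{\text{PE}}$ at the population level and using the change-of-measure identity of \Eq{eq:PE_variational_2} yields $\Nabla_{f\!} L^{\text{PE}}(f) = \CO f - g^{*}$, where $g^{*} := \mathbb{E}_{Q}[K(x',\cdot)] \in \Hilbert$. Consequently $r^{\alpha}$ is characterized as the minimum-norm solution of the linear operator equation $\CO f = g^{*}$, and the one-step update in \Eq{eq:PE_gradient_step} is precisely a Tikhonov-regularized stochastic iteration for this equation.

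Next I would introduce the deterministic regularization path $f_{\lambda} := (\CO + \lambda \idH)^{-1} g^{*}$ and use the error decomposition
\begin{equation*}
f_t - r^{\alpha} \;=\; \underbrace{(f_t - f_{\lambda_t})}_{\text{sample error}} \;+\; \underbrace{(f_{\lambda_t} - r^{\alpha})}_{\text{approximation error}},
\end{equation*}
which splits the total error into a stochastic fluctuation around the deterministic path and a deterministic bias. The bias is handled spectrally: \Assumption{ass:smoothness_t} gives $r^{\alpha} = \CO^{\sourceparameter} u$ for some $u \in \ltwo$ with $\norm{u}_{\ltwo} = \norm{\CO^{-\sourceparameter} r^{\alpha}}_{\ltwo}$, and functional calculus on $\CO$ yields
\begin{equation*}
\norm{f_{\lambda_t} - r^{\alpha}}_{\ltwo} \;=\; \lambda_t \, \norm{(\CO + \lambda_t \idH)^{-1} \CO^{\sourceparameter} u}_{\ltwo} \;\leq\; \lambda_t^{\sourceparameter} \, \norm{u}_{\ltwo};
\end{equation*}
inserting $\lambda_t \sim \bar{t}^{-1/(2\sourceparameter+1)}$ produces the $\bar{t}^{-\sourceparameter/(2\sourceparameter+1)}$ term with prefactor $C_2$.

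The harder task is bounding the sample error $f_t - f_{\lambda_t}$. Subtracting from \Eq{eq:PE_gradient_step} the population recursion that leaves $f_{\lambda_t}$ stationary gives
\begin{equation*}
f_t - f_{\lambda_t} \;=\; (\idH - \eta_t A_t)(f_{t-1} - f_{\lambda_{t-1}}) \;+\; \eta_t \xi_t \;+\; (f_{\lambda_{t-1}} - f_{\lambda_t}),
\end{equation*}
where $\xi_t := \mathbb{E}[A_t f_{\lambda_{t-1}} - b_t \mid \Xi_{t-1}] - (A_t f_{\lambda_{t-1}} - b_t)$ is a zero-mean $\Hilbert$-valued martingale difference. \Assumption{ass:kernel_map_upperbound} provides $\norm{A_t}_{\Hilbert \to \Hilbert} \leq C^2 + \lambda_t$ and a uniform bound on $\norm{\xi_t}_{\Hilbert}$; the factors $1/\alpha$ in $C_3, C_4, C_5$ enter through $r^{\alpha} \leq 1/\alpha$. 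The condition $t_0 \geq (2 + 4 C^2 a)^{(2\sourceparameter+1)/(2\sourceparameter)}$ guarantees $\eta_t(C^2+\lambda_t) \leq 1$, so $\idH - \eta_t A_t$ is on average a contraction. Unfolding the recursion, a Pinelis-type Hilbert-space concentration inequality applied to the accumulated martingale $\sum_{s \leq t} \big[\prod_{s < r \leq t}(\idH - \eta_r A_r)\big]\eta_s \xi_s$ produces the $\log(2/\delta)$-dependent high-probability terms, while the drift $f_{\lambda_{t-1}} - f_{\lambda_t}$ is absorbed via the resolvent-identity bound $\norm{f_{\lambda_{t-1}} - f_{\lambda_t}}_{\Hilbert} \lesssim |\lambda_{t-1} - \lambda_t|$.

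Plugging in $\eta_t = a\,\bar{t}^{-2\sourceparameter/(2\sourceparameter+1)}$, $\lambda_t = a^{-1}\bar{t}^{-1/(2\sourceparameter+1)}$ and balancing the variance contribution against the regularization drift yields the variance-dominated rate $\bar{t}^{-(4\sourceparameter-1)/(4\sourceparameter+2)}$; its $a^{5/2}$ and $a^{7/2}\sqrt{\log \bar{t}}$ prefactors reflect, respectively, the bare Pinelis bound and the extra logarithm arising when $\sum_s \eta_s^2 / \lambda_s$ is summed under the non-stationary regularization. The $C_1 / \bar{t}$ term tracks the decay of the initial condition $f_0 = 0$ through the accumulated contraction. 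I expect the main obstacle to be this third step: the Hilbert-space martingale concentration combined with the moving target $f_{\lambda_t}$, since the classical fixed-target analysis of \cite{Tarres2014} must be refined so that the drift $f_{\lambda_{t-1}} - f_{\lambda_t}$ does not degrade the variance rate, and matching the prescribed exponent on $\eta_t$ requires a delicate coordination of the step-size and regularization schedules.
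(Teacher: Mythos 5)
Your overall architecture---recasting the problem as the linear operator equation $\CO f = \mathbb{E}_{Q}[K(x',\cdot)]$, splitting the error into an approximation term $f_{\lambda_t}-r^{\alpha}$ and a sample term $f_t-f_{\lambda_t}$, bounding the former spectrally and the latter via a Pinelis-type Hilbert-space martingale inequality---is exactly the paper's framework, and your treatment of the approximation error is essentially Theorem VI.2 of \cite{Tarres2014} and is fine. The gap is in the sample-error recursion. You keep the \emph{random} contraction $(\idH-\eta_t A_t)$ and center the noise at the deterministic target $f_{\lambda_{t-1}}$. Unfolding that recursion produces products $\prod_{s<r\leq t}(\idH-\eta_r A_r)$ of random operators; the resulting sum is then not a forward martingale in $s$ (each product depends on the \emph{future} observations), but a reversed one, and---more importantly---this is precisely the decomposition the paper reserves for the weaker $\Hilbert$-norm result (\Theorem{thm:convergence_results_2}). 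To obtain the $\Ltwo$ rates claimed here one must write $\normLTWO{\sum_j \eta_j\bar{\Pi}_{j+1}^t\epsilon_j}=\normH{\CO^{1/2}\sum_j\eta_j\bar{\Pi}_{j+1}^t\epsilon_j}$ and exploit $\norm{\bar{\Pi}_{j+1}^t\CO\,\bar{\Pi}_{j+1}^t}\leq\sup_k\mu_k\prod_i\left(1-\eta_i(\lambda_i+\mu_k)\right)^2$, which requires the \emph{deterministic} semigroup $\bar{\Pi}_{j}^t=\prod_i(\idH-\eta_i\mathbf{A}_i)$. With random operators this spectral smoothing is unavailable; the best one gets is $C$ times the $\Hilbert$-norm bound, i.e.\ rate $\bar{t}^{-(2\sourceparameter-1)/(4\sourceparameter+2)}$, which for $\sourceparameter=\tfrac{1}{2}$ does not converge at all, whereas the theorem claims $\bar{t}^{-1/4}$ up to logarithms.

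The price of using the deterministic semigroup, however, is that the martingale increment becomes $\epsilon_j=(\mathbf{A}_j-A_j)f_{j-1}+(b_j-\mathbf{b}_j)$, which depends on the random iterate $f_{j-1}$ and is therefore \emph{not} uniformly bounded; your claim that \Assumption{ass:kernel_map_upperbound} yields a uniform bound on the noise no longer applies in that setting. The paper spends most of Appendix B.5 on exactly this point: it splits $f_t-r^{\alpha}=g_t+h_t$ into a deterministic and a random process, introduces truncated operators such as $\overline{L}_t=\Ind{|h_{t-1}(x_t)|\leq M_t}L_t$, establishes a high-probability envelope $\norm{h_{j-1}}_{\Hilbert}\leq B_{t,\delta}(j+t_0)^{\frac{1}{2}-\theta}$ (Lemmas \ref{lemma:g_upperbound}--\ref{lemma:upperbound_h_2}), and only then applies Pinelis--Bernstein to the indicator-truncated increments $\Upsilon_j$. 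This machinery---which is also where the $a^{5/2}$ and $a^{7/2}\sqrt{\log\bar{t}}$ prefactors and the constants $C_4,C_5$ actually come from---is absent from your plan, so your third step as written would fail. A smaller inaccuracy: the resolvent identity gives $\norm{f_{\lambda_{t-1}}-f_{\lambda_t}}_{\Hilbert}=|\lambda_{t-1}-\lambda_t|\,\normH{(\CO+\lambda_t\idH)^{-1}f_{\lambda_{t-1}}}$, so the drift carries an extra factor of order $\lambda_t^{-1}$ that your bound $\lesssim|\lambda_{t-1}-\lambda_t|$ omits; the accumulated drift still closes (this is Theorem VI.3 in \cite{Tarres2014}), but not by the estimate you state.
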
%
\vspace{-0.5em}
Notice that the convergence rate in $\Ltwo$ can be decomposed into three terms. The first depends on the initialization, and decreases at rate $\bigO(t^{-1})$. The second one is related to the smoothness of the likelihood-ratio in $\Hilbert$ and the noise in the observations, and decreases at rate $\bigO(t^{-\frac{2}{2\sourceparameter+1}})$. The third term is related to the variance of the observations, and decreases at a rate $\bigO(\log^{\frac{1}{2}}(t) t^{- \frac{4\sourceparameter-1}{4\sourceparameter+2}})$. When $\sourceparameter \in (\frac{1}{2}, 1]$, the second term becomes dominant, which implies 
a faster convergence as the smoothness of $r^{\alpha}$ increases. When $\sourceparameter=\frac{1}{2}$, the convergence rate becomes $\bigO(\log^{\frac{1}{2}}(t) t^{- \frac{1}{4}})$.  

The convergence rate with respect to $\Hilbert$ is more restrictive than in $\Ltwo$. For $\Hilbert$, \Assumption{ass:smoothness_t} needs to be replaced by \Assumption{ass:smoothness_t_2}; the main difference is that $r^{\alpha}$ is required to be smoother with respect to $\Hilbert$ 
for higher $\sourceparameter$ values. 
\begin{assumption}{\label{ass:smoothness_t_2}}
  $r^{\alpha} \in L_{K}^{\sourceparameter}(\Ltwo)$  for $\frac{1}{2} <\sourceparameter \leq \frac{3}{2}$.
\end{assumption}
\begin{theorem}{\label{thm:convergence_results_2}}(\textbf{\OLRE's convergence in $\Hilbert$})   Given \Assumptions{ass:independence}-\ref{ass:support_PA} and \ref{ass:smoothness_t_2}, $a \geq 4$ and $t_0 \geq (a C^2 + 1 )^{\frac{(2\sourceparameter+1)}{2\sourceparameter}}$. Then if the learning rate sequence is fixed as $\eta_t=a \left( \frac{1}{t+t_0} \right)^{\frac{2 \sourceparameter}{2\sourceparameter+1}}$ and $\lambda_t=\frac{1}{a} \left( \frac{1}{t+t_0} \right)^{\frac{1}{2\sourceparameter+1}}$. Then for all $t \in \N$ and $\delta \in (0,1)$, with probability at least $1- \delta$:
\vspace{-0.5em}
\begin{equation}
\begin{aligned}
& \norm{f_t- r^{\alpha}}_{\Hilbert} \leq \frac{C'_1}{\bar{t}} + \left(\! C'_2 a^{\frac{1}{2}-\sourceparameter}  + C'_3 a \log \left( \frac{2}{\delta} \right)\!\! \right)  \left( \frac{1}{\bar{t}}\right)^{\frac{2\sourceparameter-1}{4\sourceparameter+2}}\!\!\!\!,
\end{aligned}
\end{equation}
where $\bar{t}=t+t_0$ and,
\begin{equation*}
\begin{aligned}
C'_1=\frac{2 \sqrt{a} t_0^{\frac{4\sourceparameter+1}{4\sourceparameter+2}}}{\alpha}, C'_2= \frac{20\sourceparameter-2}{(2\sourceparameter-1)(2\sourceparameter+3)} \norm{L_K^{(-\sourceparameter)}r^{\alpha}}_{\Ltwo}  C'_3= 6 \left(\frac{(C+1)^2}{C \alpha}\right) .
\end{aligned}
\end{equation*}
\end{theorem}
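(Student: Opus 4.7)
The plan is to analyze the error via the classical bias--variance decomposition built around the regularization path. Define the population-level regularized solution
\begin{equation*}
f_{\lambda_t} := (\CO + \lambda_t \idH)^{-1} \CO\, r^{\alpha},
\end{equation*}
which is the unique minimizer in $\Hilbert$ of the deterministic penalized problem associated with \Problem{eq:PE_optimization_3}; this is well-defined since, restricted to $\Hilbert$, the expectation $\Exp[A(x_t,x'_t)]$ coincides with the covariance operator $\CO$ and $\Exp[b_t] = \CO r^{\alpha}$ by the change-of-measure identity used in \Eq{eq:PE_variational_2}. Writing $f_t - r^{\alpha} = (f_t - f_{\lambda_t}) + (f_{\lambda_t} - r^{\alpha})$, the two pieces are handled independently and combined by a triangle inequality at the end.

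For the approximation error $\norm{f_{\lambda_t} - r^{\alpha}}_{\Hilbert}$, I would use spectral calculus on $\CO$. Since $\Assumption{ass:smoothness_t_2}$ gives $r^{\alpha} = \CO^{\sourceparameter}g$ for some $g\in\ltwo$, one has
\begin{equation*}
f_{\lambda_t} - r^{\alpha} = -\lambda_t (\CO + \lambda_t \idH)^{-1} \CO^{\sourceparameter} g,
\end{equation*}
and a standard Hilbert-norm spectral bound yields $\norm{f_{\lambda_t} - r^{\alpha}}_{\Hilbert} \lesssim \lambda_t^{\sourceparameter - 1/2}\norm{g}_{\ltwo}$. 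Plugging in $\lambda_t = a^{-1}\bar{t}^{-1/(2\sourceparameter+1)}$ produces exactly the deterministic rate $a^{1/2-\sourceparameter}\bar{t}^{-(2\sourceparameter-1)/(4\sourceparameter+2)}$ multiplying the constant $C'_2$. The restriction $\sourceparameter \leq 3/2$ is essential here: for larger $\sourceparameter$, the $\Hilbert$-norm spectral function $\lambda \mapsto \lambda^{\sourceparameter-1/2}(\CO+\lambda\idH)^{-1}\CO^{\sourceparameter}$ is no longer uniformly bounded and the path-drift estimate used below also breaks down.

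For the sample error $\norm{f_t - f_{\lambda_t}}_{\Hilbert}$, the plan is to follow the stochastic-approximation-on-regularized-paths framework of \cite{Tarres2014}. Subtracting $f_{\lambda_t}$ from \Eq{eq:PE_gradient_step} and inserting $\pm f_{\lambda_{t-1}}$ gives the perturbed recursion
\begin{equation*}
f_t - f_{\lambda_t} = (\idH - \eta_t A_t)(f_{t-1} - f_{\lambda_{t-1}}) + \eta_t\bigl[(\CO - A(x_t,x'_t))f_{\lambda_{t-1}} + b_t - \CO r^{\alpha}\bigr] + (f_{\lambda_{t-1}} - f_{\lambda_t}),
\end{equation*}
where the middle bracket is a centered, $\Xi_t$-measurable martingale increment and the last term is the deterministic drift along the regularization path. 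Unrolling the recursion and applying $\Hilbert$-norm operator bounds on the product $\prod_{i=k}^t (\idH - \eta_i A_i)$ (using $\Assumption{ass:kernel_map_upperbound}$ and the choice of $a, t_0, \eta_t, \lambda_t$ to guarantee a $1 - \eta_i \lambda_i$-type contraction), I would bound the three resulting pieces separately: a geometrically decaying initialization contribution yielding the $C'_1/\bar{t}$ term; a noise sum controlled by a Pinelis--Sakhanenko-type concentration inequality for Hilbert-space-valued martingales, contributing the $C'_3 a \log(2/\delta)$ factor; and a path-drift sum $\sum_i \norm{f_{\lambda_{i-1}}-f_{\lambda_i}}_{\Hilbert}$ bounded via the derivative of $\lambda \mapsto f_\lambda$ and another spectral calculus argument.

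The main obstacle is the last two pieces, because switching from the $\Ltwo$-norm (as in \Theorem{thm:convergence_results}) to the stronger $\Hilbert$-norm loses a factor of $\CO^{1/2}$ in the spectral bounds. This is precisely what forces $\sourceparameter > 1/2$ in $\Assumption{ass:smoothness_t_2}$, so that $\lambda^{\sourceparameter - 3/2}$ remains integrable in the drift sum and the martingale variance stays summable; and it also explains why the resulting rate exponent $(2\sourceparameter-1)/(4\sourceparameter+2)$ is slower than the $\Ltwo$ rate by exactly one spectral half-power. Carrying out the induction over $t$ with a union bound over $\delta/t^2$-type events (standard in the Tarrès--Yao argument) then gives the stated high-probability bound.
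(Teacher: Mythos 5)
Your proposal follows essentially the same route as the paper: a triangle inequality around the regularized path $f_{\lambda_t}=(\CO+\lambda_t\idH)^{-1}\CO r^{\alpha}$, spectral calculus under $r^{\alpha}=\CO^{\sourceparameter}g$ for the approximation term, and an unrolled perturbed recursion whose three pieces (initialization, noise sum, path drift) are bounded separately, with the noise sum handled by a Pinelis--Bernstein concentration inequality. This is exactly the paper's decomposition into $\mathcal{E}'_{\text{init}},\mathcal{E}'_{\text{approx}},\mathcal{E}'_{\text{drift}},\mathcal{E}'_{\text{sample}}$, the first three of which are imported from Theorems V.1--V.3 of Tarr\`es--Yao.

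One point in your sketch needs care, and it is the one place the paper's argument is genuinely delicate. After unrolling your recursion, the $j$-th noise summand is $\eta_j\,\Pi_{j+1}^{t}\,(A_j f_{\lambda_j}-b_j)$ where $\Pi_{j+1}^{t}=\prod_{i=j+1}^{t}(\idH-\eta_i A_i)$ is a \emph{random} operator depending on the observations $(x_{j+1},x'_{j+1}),\dots,(x_t,x'_t)$, i.e.\ on the \emph{future} relative to index $j$. These summands are therefore not adapted to the forward filtration $\Xi_j$ and do not form a forward martingale difference sequence, so a standard Hilbert-space martingale concentration bound (and a union bound over $\delta/t^2$ events indexed forward in time) does not apply as stated. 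The paper resolves this by observing that the sequence is a \emph{reversed} martingale difference with respect to the decreasing filtration $\mathcal{B}_j=\sigma((x_j,x'_j),(x_{j+1},x'_{j+1}),\dots)$, since $\expec[A_jf_{\lambda_j}-b_j\,|\,\mathcal{B}_{j+1}]=\mathbf{A}_jf_{\lambda_j}-\mathbf{b}_j=0$ by independence and the defining identity of $f_{\lambda_j}$; Pinelis--Bernstein is then invoked in its reversed-martingale form. This is precisely why the paper uses the reversed decomposition (with random $\Pi_j^t$ and deterministic $f_{\lambda_j}$ in the noise) for the $\Hilbert$-norm result, while the forward decomposition (deterministic $\bar{\Pi}_j^t$, random $f_{j-1}$ in the noise) is reserved for the $\ltwo$ result. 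With that adjustment your outline matches the paper's proof; the remaining steps (the $\lambda_{t-1}$ versus $\lambda_t$ bookkeeping in your centered bracket, and the bounds $\norm{A_jf_{\lambda_j}-b_j}_{\Hilbert}\lesssim\lambda_j^{-1/2}$ and $\expec\norm{A_jf_{\lambda_j}-b_j}^2_{\Hilbert}\lesssim C^2/\alpha^2$ feeding the concentration inequality) are routine.
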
%
We can see that the upper-bound appearing in \Theorem{thm:convergence_results_2} is made of two components. The first component is related to the constant $C'_1$ and summarizes the impact of the initialization. This term converges at rate $\bigO(t^{-1}))$. The second term, which is the leading term of the expression, converges at rate $\bigO(t^{-\frac{2\sourceparameter-1}{4\sourceparameter+2}})$ and it mainly depends on the smoothness parameter $\sourceparameter$. The bigger $\sourceparameter$
, the faster the convergence. Notice that the case $\sourceparameter=\frac{1}{2}$ is not considered in this theorem, in fact, the algorithm may not converge in $\Hilbert$, as indicated by Theorem A in \cite{Tarres2014}.

Both Theorems $\ref{thm:convergence_results_2}$ and $\ref{thm:convergence_results}$ provide useful information on how to fix the step sizes $\{\eta_t\}_{t \in \N}$ and regularization constants $\{\lambda_t\}_{t \in \N}$, and explain their its impact to the convergence rates. Notice that there is an interplay between the selection of $a$ and the smoothness parameter $\sourceparameter$.  The results suggest that \OLRE converges faster in $\Ltwo$ than in $\Hilbert$ if the hyperparameters are the same. Both results shed light on the impact of the parameter $\alpha$, as values close to $1$ will lead to better convergence rates. Nevertheless, $\alpha=1$ render $r^{\alpha}$ a constant, which is meaningless for most applications. For this reason, the value of $\alpha$ should take into account both the convergence rate of the optimization schema and the intended application. 

Convergence results for likelihood-ratio estimates based on the Pearson-divergence can be found in \cite{Yamada2011}. 
Those results are given in terms of the difference between the real Pearson-divergence $L^{\text{PE}}(r^{\alpha})$ and an empirical approximation $L^{\text{PE}}_n(\hat{f}_{\lambda_n})$. It was shown that if the regularization constant decreases at speed $\lambda_n=\bigO(n^{-\frac{2}{2+\gamma}})$, where the parameter 
$\gamma \in (0,2)$ quantifies the complexity of 
$\Hilbert$, then RULSIF could achieve a convergence rate $L^{\text{PE}}(r^{\alpha})-L^{\text{PE}}_n(\hat{f}_{\lambda}) \leq \bigO(n^{-\frac{1}{2+\gamma}})$ with high probability. 
 
\begin{figure*}[t]
\begin{minipage}[t]{\linewidth}
\ \quad \colorbox{gray!15}{ \  \qquad \qquad \qquad \small \ \ \ \ Setup \ \ \ \ \qquad \qquad \qquad \ \ \ \ \ }\qquad %
\colorbox{gray!15}{ \ \qquad \qquad  \quad \quad \ \small Results\phantom{p} \ \qquad \qquad \qquad \quad }\\%
\end{minipage}\\%
{\centering%
\begin{minipage}[t]{0.03\linewidth}
	\rotatebox{90}{\ \ \ \colorbox{gray!15}{\ \ \small \phantom{I}Experiment I\phantom{I}\ \ }}
\end{minipage}
\hspace{-0.5em}
\includegraphics[width=0.25\linewidth, viewport=0 -15 730 640, clip]{./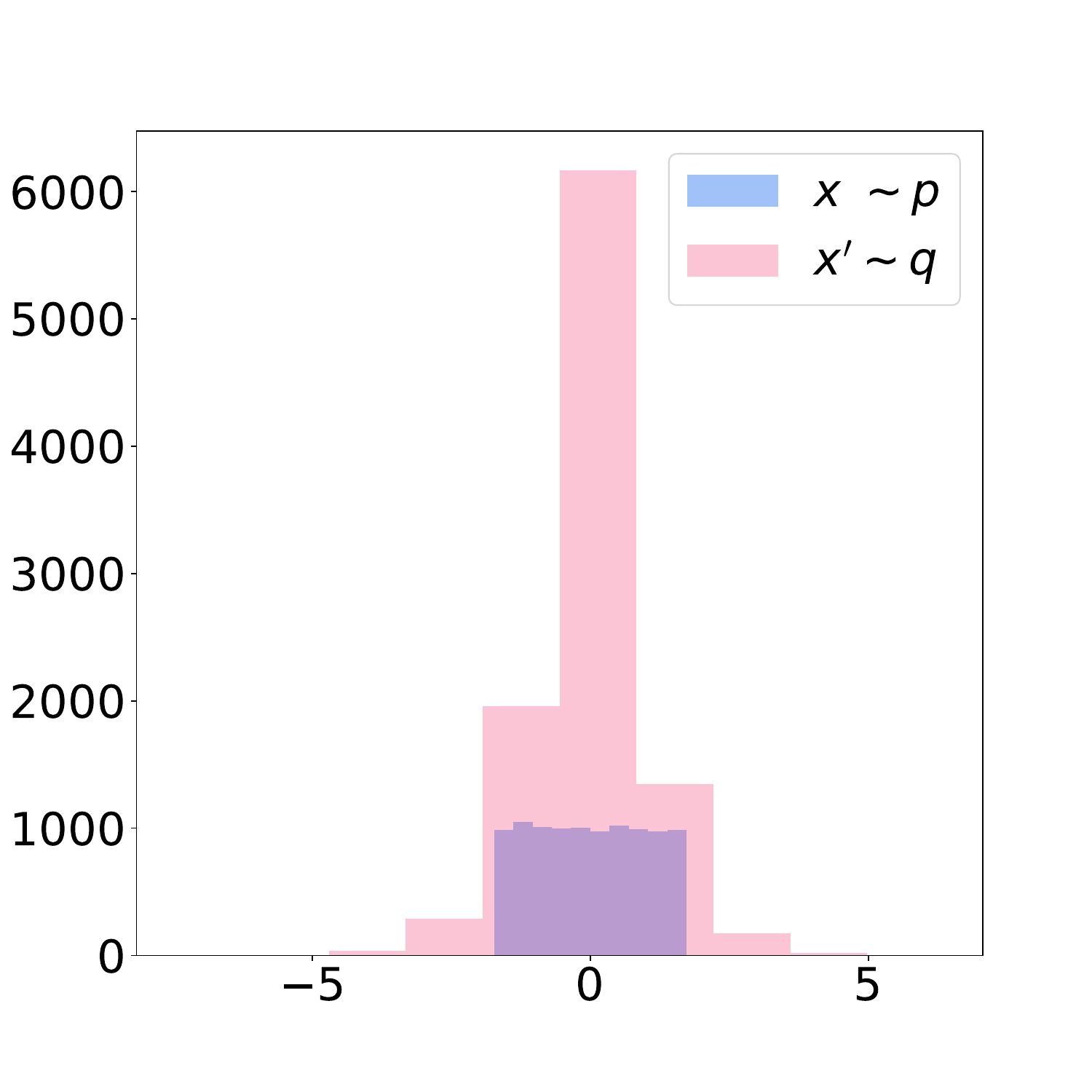}
\includegraphics[width=0.25\linewidth, viewport=0 -7 365 318, clip]{./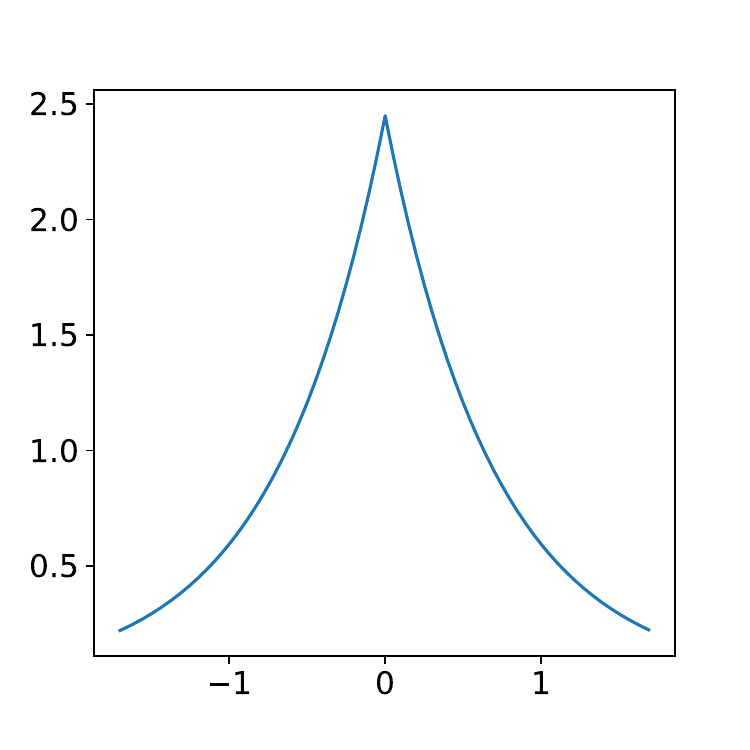}
{\centering
\includegraphics[width=0.30\linewidth, viewport=20 -35 1300 955, clip]{./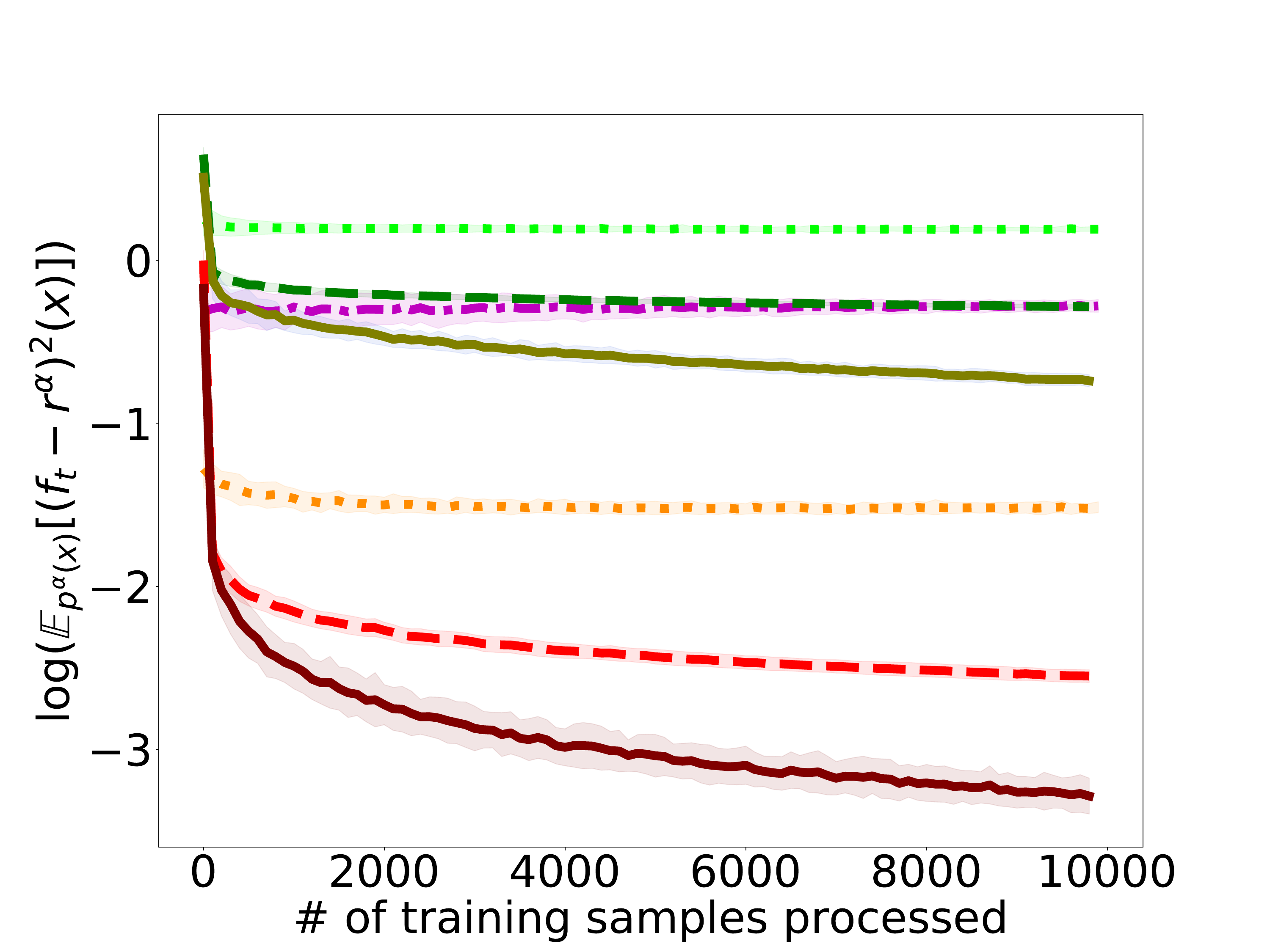}}
\begin{minipage}[t]{0.16\linewidth}
\vspace{-8em}
\includegraphics[width=\linewidth]{./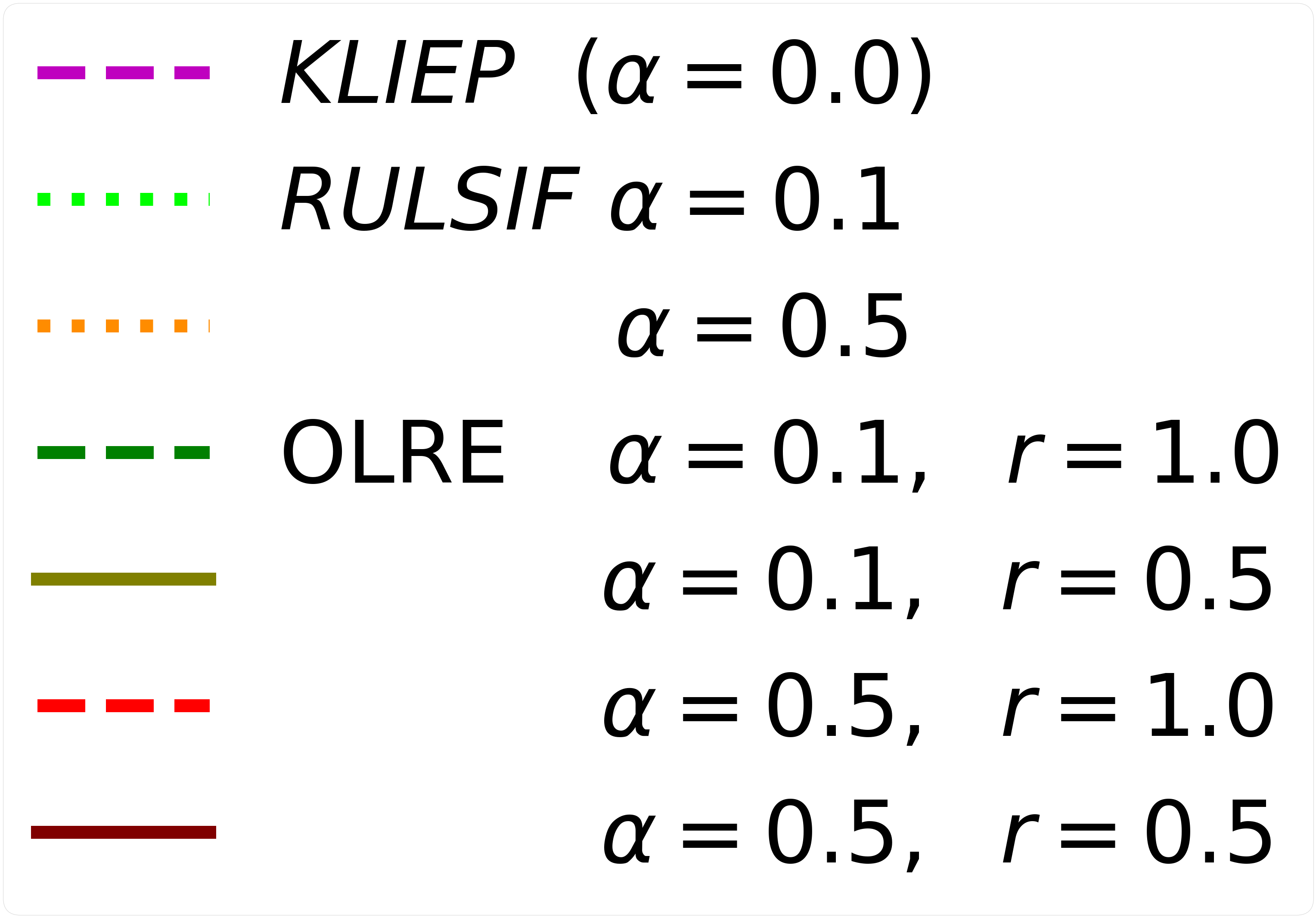}
\end{minipage}%
\!\!\!\!
\\
\centering
\begin{minipage}[t]{0.03\linewidth}
	\rotatebox{90}{\ \ \ \colorbox{gray!15}{\ \ \small \phantom{I}Experiment II\ \ }}
\end{minipage}
\hspace{-0.05em}
\includegraphics[width=0.25\linewidth, viewport=0 -15 730 640, clip]{./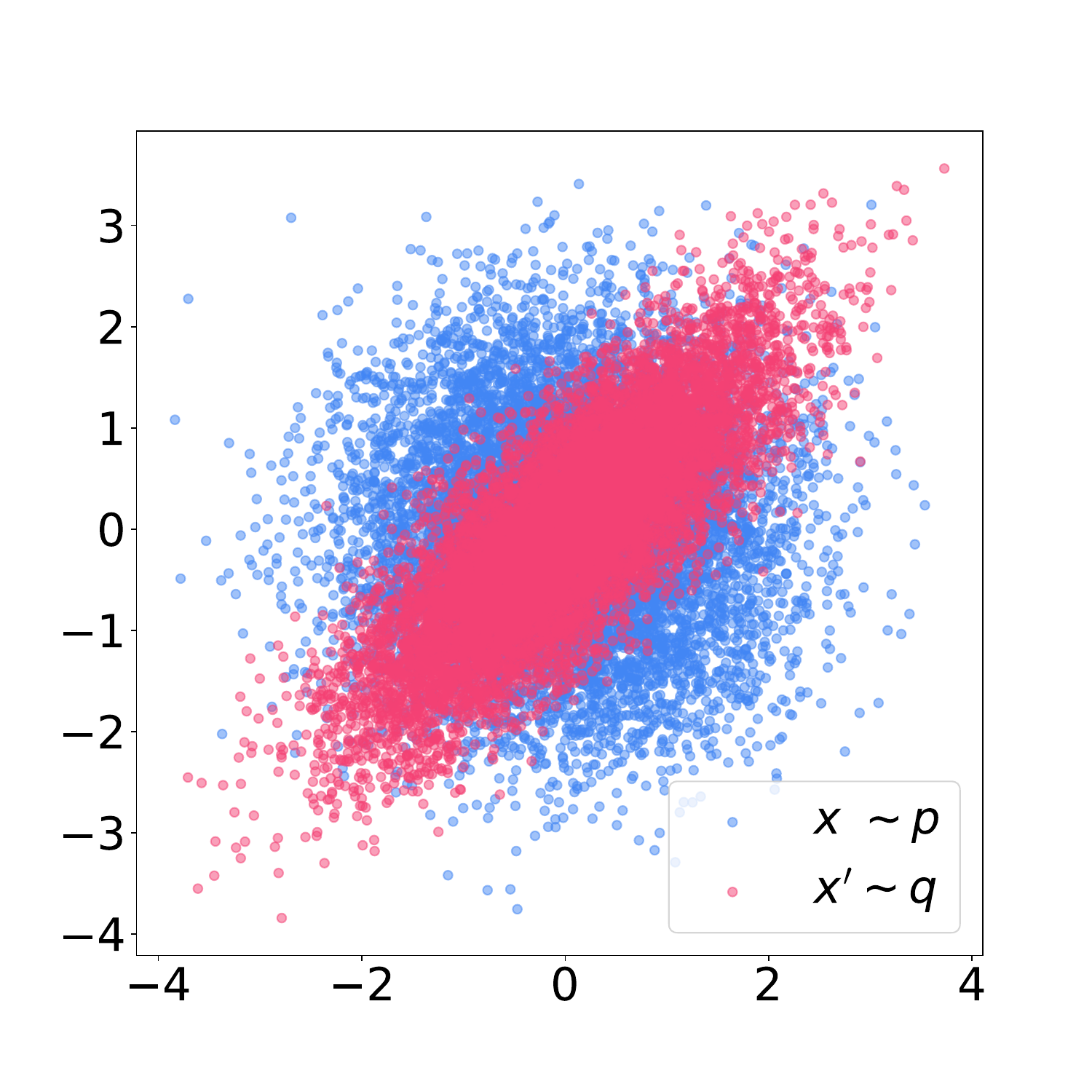}
\includegraphics[width=0.25\linewidth, viewport=0 -15 730 640, clip]{./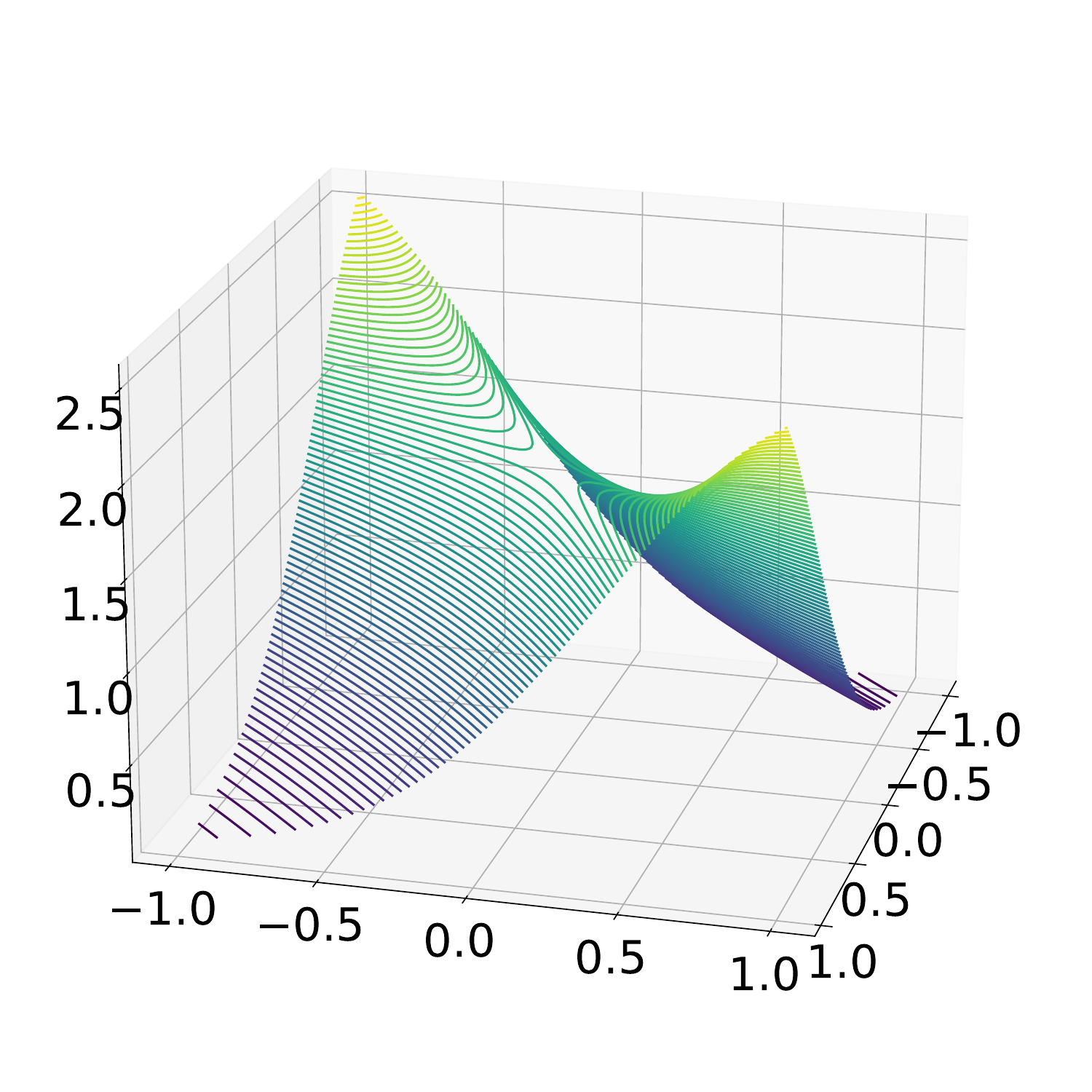}
\includegraphics[width=0.30\linewidth, viewport=20 -35 1300 955, clip]{./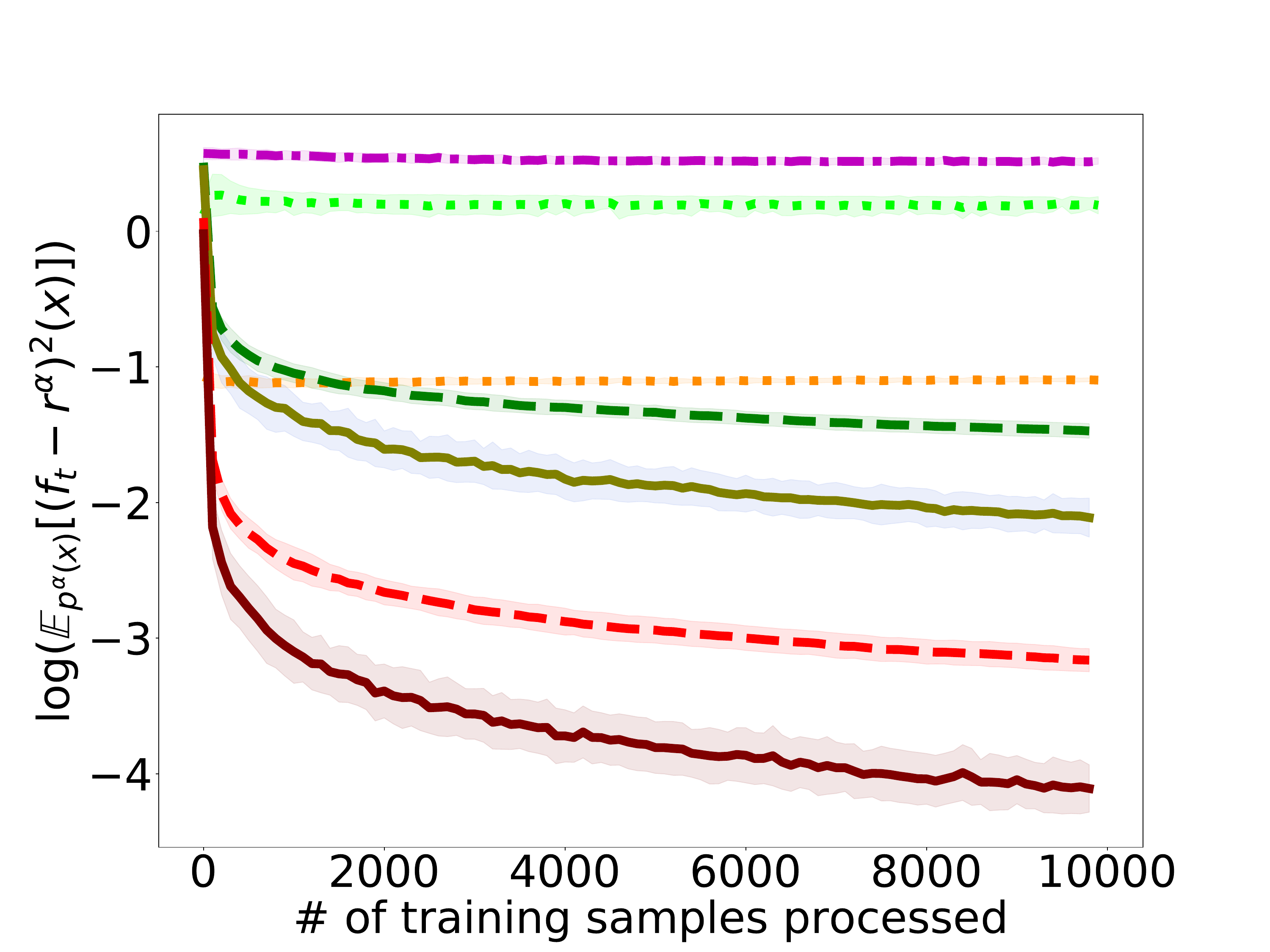}
\hspace{5.0em}
\begin{minipage}[t]{0.16\linewidth}
\end{minipage}
\\
\begin{minipage}[t]{0.03\linewidth}
	\rotatebox{90}{\ \ \ \colorbox{gray!15}{\ \ \small \phantom{I}Experiment III\ \ }}
\end{minipage}
\hspace{-0.05em}
\includegraphics[width=0.25\linewidth, viewport=0 -15 730 640, clip]{./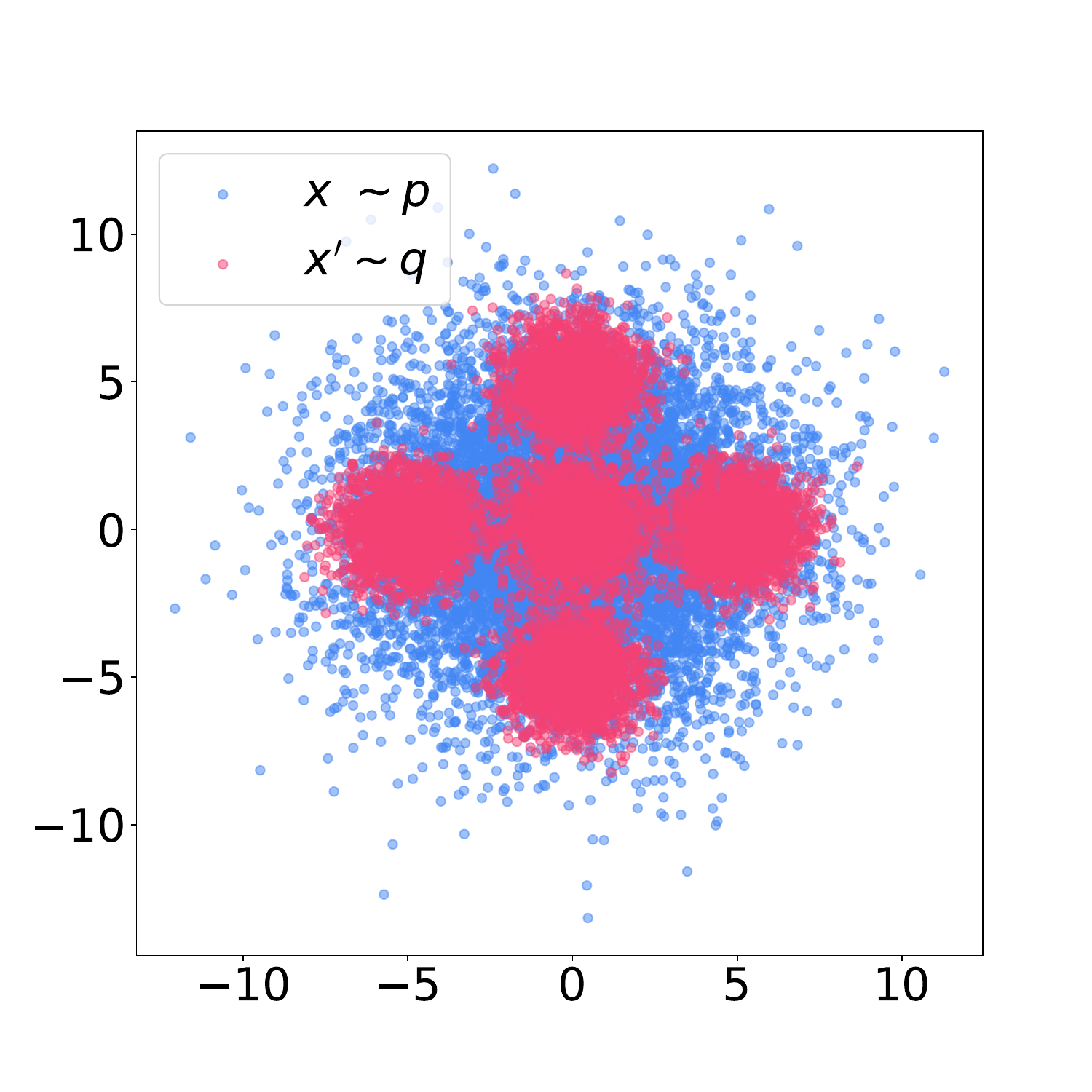}
\includegraphics[width=0.25\linewidth, viewport=0 -15 730 640, clip]{./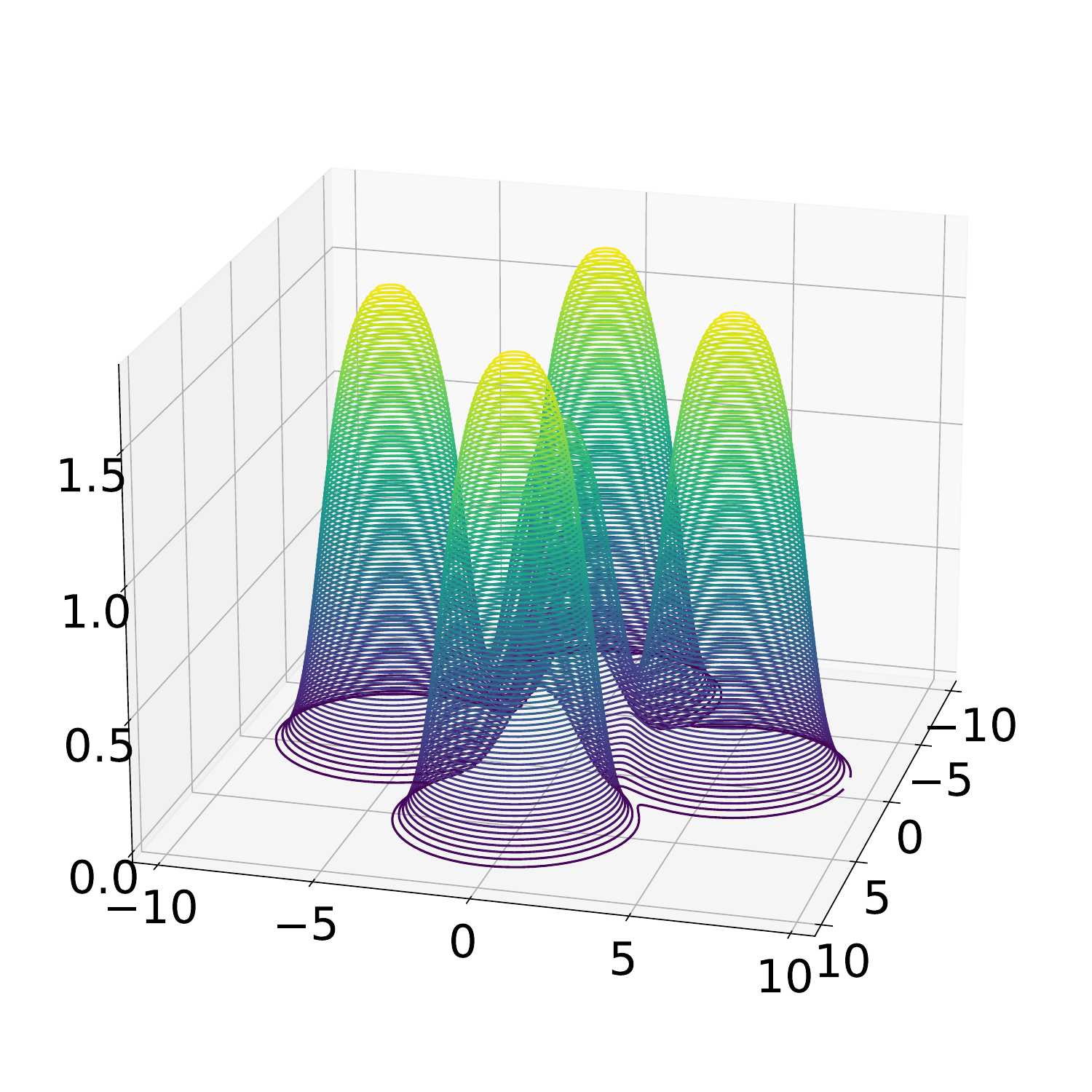}
\includegraphics[width=0.30\linewidth, viewport=20 -35 1300 955, clip]{./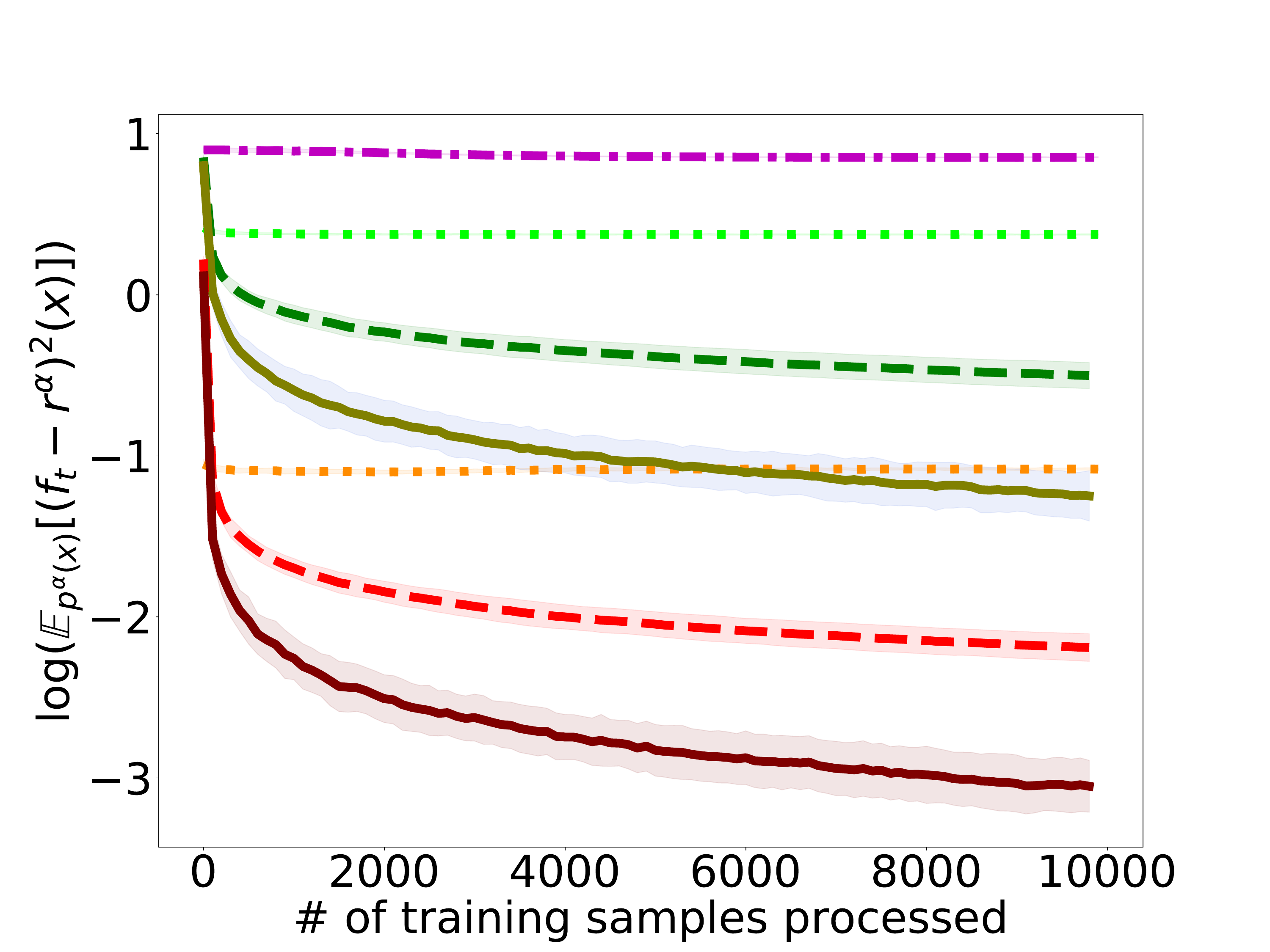}
\hspace{5.0em}
\begin{minipage}[t]{0.16\linewidth}
\end{minipage}%
\caption{ \footnotesize{Each row presents results for one of the synthetic scenarios described in \Sec{sec:experiments}. The first column shows the generated samples from $p$ and $q$, and the second column illustrates the real relative likelihood-ratio $r^{\alpha}$. The third column compares the different algorithms in terms of the expected $\Ltwo$-distance of the likelihood-ratio estimates $f_t$ and the real  $r^{\alpha}$, as a function of the number of pairs of observations processed. The expected value $\ExpecAlpha{(f_t-r^{\alpha})^2}$ is computed by averaging over $10,000$ independent pairs of observations that were not used during the training phase. The empirical convergence curve is the average of $100$ experiment instances, and the error-bar indicates $1$ standard deviation around the average performance. A safer comparison between two methods can be made when they use the same $\alpha$-regularization, hence they optimize the same target likelihood-ratio functional.}
}\label{fig:results}%
}
\end{figure*}

\section{Experiments}{\label{sec:experiments}}%
In this section, we carry out synthetic experiments to evaluate the performance of the proposed \OLRE (\Alg{alg:LRE_PEdiv}), as well as its sensitivity to 
its hyperparameters. %
We compare \OLRE variants 
against two existing offline approaches, more precisely RULSIF \citep{Yamada2011} and KLIEP \citep{Sugiyama2007}. RULSIF is based on the \PEdiv; it drops the requirement for positiveness of the likelihood-ratio estimates in favor of computational efficiency (see also \Sec{sec:RULSIF}). On the other hand, KLIEP is based on the \KLdiv, and does not use the $\alpha$-regularization (equivalent setting $\alpha=0$ in \Eq{eq:likelihood-ratio}). For both methods, we follow the recommendation to select a random subset of basis functions to reduce their computational complexity; \eg  
\cite{Sugiyama2007} take $100$ basis functions associated to observations coming from 
$q$.

An important component of \OLRE is the choice of the kernel function and its hyperparameters. We choose a Gaussian kernel, but other options are possible as mentioned in \Sec{sec:theoretical_guarantees}. To tune the kernel hyperparameters we perform cross-validation over the first $n=100$ observations using RULSIF, which, as mentioned, has a closed-form and therefore allows for fast model selection. %
Following the results of \Theorem{thm:convergence_results}, we let the learning rate and the penalization rate to depend on the smoothness of the parameters $a$, $t_0$, and $\sourceparameter$. We fix $a$ at the lower-bound provided by \Theorem{thm:convergence_results}, that is $a=4$ and the lower-bound for $t_0$ is fixed as $100$, which is equal to the number of observations used for identifying the hyperparameters at the beginning of the procedure. The user needs to provide only two parameters, $\alpha$ and $\sourceparameter$, 
which, according to \Theorem{thm:convergence_results}, play an important role in \OLRE's convergence. We report results with different values in order to show the sensibility of our approach. 

We run experiments that approximate the likelihood-ratio between two \pdfs $p$ and $q$, using three setups:
\begin{itemize}[leftmargin=1em,topsep=1em,itemsep=0px,noitemsep]
\item \emph{Experiment I: }
$p$ is a uniform continuous distribution with zero mean and unit variance ($p = \mathcal{U}(-\sqrt{3},\sqrt{3})$); $q$ is a Laplace distribution with zero mean and unit variance.
\item \emph{Experiment II: }
$p$ is a bivariate Gaussian distribution with zero mean, 
and a covariance matrix equal to the identity matrix ($p = \mathcal{N}(\zero_{2\times 1}, \id_{2\times2})$); $\q$ is a bivariate Gaussian distribution with zero mean and covariance matrix such that $\Sigma_{1,1}=\Sigma_{2,2}=1$, $\Sigma_{1,2}=\frac{4}{5}$ ($\q = \mathcal{N}(\zero_{2\times 1},\Sigma)$).
\item \emph{Experiment III: } $p$ is bivariate Gaussian distribution with mean vector $\mu$ and covariance matrix $\Sigma_1=10 \times \id_{2\times 2}$ ($p=\mathcal{N}(\zeros{2\times 1},\Sigma_1)$), and $q$ is a mixture of five bivariate Gaussian distributions with the same covariance matrix $\Sigma_2=5 \times \id_{2\times 2}$ and $\mu$ vectors: $\mu_1=(0,0),\mu_2=(0,5),\mu_3=(0,-5),\mu_4=(5,0),\mu_5=(-5,0)$, each of them with the same proportion.
\end{itemize}
We compare the algorithms in approximating the relative likelihood-ratio $r^{\alpha}$ with respect to the norm $\Vert{\cdot}\Vert_{\Ltwo}$, which is the real least-squared error $\ExpecAlpha{(f_t-r^{\alpha})^2}$, a quantity that we approximate by averaging over $10,000$ testing pairs of observations that were not used during training. For the offline setting, $f_t$ stands for the estimated likelihood-ratio computed at each time from scratch by minimizing an empirical risk with respect to the first $t$ pairs of training observations. For \OLRE, $f_t$ is the approximation to the solution of the functional minimization \Problem{eq:PE_optimization_2} found via the functional stochastic gradient descent. 

\Fig{fig:results} reports our results that carry clear messages. The first thing to notice is that \OLRE achieves substantially faster convergence rates when compared with available offline methods (when comparing for the same $\alpha$ value). Furthermore, we can see how KLIEP's and RULSIF's strategy of selecting a random dictionary introduces a bias to their performance over time. 
\OLRE's behavior with respect to the hyperparameter $\alpha$ is well described by \Theorem{thm:convergence_results}. Higher values of $\alpha$ lead to faster convergence. The value of $\sourceparameter$ also impacts the performance of \OLRE. Recall that a higher $\sourceparameter$ value implies we assume $r^{\alpha}$ to be smoother with respect to the RKHS $\Hilbert$. \Fig{fig:results} suggest also that higher values of $\sourceparameter$ lead to a lower variance, but also a higher bias.
\section{Conclusions and further work}%
To the best of our knowledge, this is the first work to introduce and addresse the problem of online likelihood-ratio estimation (\OLRE). 
We presented the homonymous non-parametric framework 
that processes a stream of pairs of observations coming from two \pdfs. Our approach leads to an easy implementation that, contrary to the existing methods for the offline setting, does not require knowing the length of the stream in advance. Moreover, our theoretical results shed light on the limitations of previous convergence analyses and may motivate further work on studying the \LRE problem with techniques used in functional optimization that can optimize directly the real risk.

\section{Acknowledgments}\label{sec:acknowledgments}

This work was supported by the Industrial Data Analytics and Machine Learning (IdAML) Chair hosted at ENS Paris-Saclay, University Paris-Saclay, and grants from Région Ile-de-France.




{\small
\bibliography{References}
}

\newpage

\appendix

\section{Comparison with RULSIF}{\label{sec:RULSIF}}

RULSIF is a popular offline \LRE algorithm  \citep{Yamada2011,Liu2013} aiming to solve \Problem{eq:PE_optimization_2}. The user has access to $\setpreunivariate=\{x_t\}_{t=1}^{\Npre}$ and $\setpostunivariate=\{x_t\}_{t=1}^{\Npost}$ so \label{eq: PE_optimization_2} and the relative likelihood-ratio $r^\alpha$ is approximated by minimizing the  empirical expectation of the loss. The authors assume that $r^{\alpha}$ can be approximated by a finite linear combination of the $M$ elements of a given fixed dictionary $D_M=\{K(x_m,\cdot)\}_{m=1}^{M}$, meaning the approximation $\hat{f}_{\lambda}$ should belong to $S = \operatorname{span}(\{K(x_m,\cdot)|x_m \in D_M\})$ and can take the form $\hat{f}_{\lambda}=K(D_{M},\cdot)^{\top}\hat{\theta}$.
These assumptions lead to the optimization problem:  
\begin{equation}{\label{eq:problem_Rulsif_original}}
\begin{aligned}
& \!\!\!\! \hat{f}_\lambda = \argmin_{f \in S} \frac{(1-\alpha)}{2} \sum_{x \in \setpreunivariate} \frac{f^2(x)}{\Npre} + \frac{\alpha}{2} \sum_{x' \in \setpostunivariate} \frac{f^2(x')}{\Npost}  - \sum_{x' \in \setpostunivariate}  \frac{f(x')}{\Npost} + \frac{\lambda}{2} \theta^{\top} \theta,\!\!\!\!\!\!
\end{aligned}
\end{equation}
where $\lambda>0$ is a fixed regularization constant. Since $\hat{f}_{\lambda}=K(D_{M},\cdot)^{\top}\hat{\theta}$, the above is equivalent to optimizing over $\theta$:%
\begin{equation}{\label{eq:problem_Rulsif}}
\begin{aligned}
\hat{\theta} & = \argmin_{\theta \in \R^M} \frac{\theta^{\top} H \theta}{2} - \theta^{\top} h + \frac{\lambda}{2} \theta^{\top} \theta = (H+\lambda \id_{M})^{-1}h,
\end{aligned}
\end{equation}
where: 

\begin{equation}
\begin{aligned}
H &= \sum_{x \in \setpreunivariate} \frac{(1-\alpha) K(D_M,x)K(D_M,x)^{\top}}{\Npre}  + \sum_{x' \in \setpostunivariate} \frac{\alpha K(D_M,x')K(D_M,x')^{\top}}{\Npost}, \\
h &= \sum_{x' \in \setpostunivariate} \frac{K(D_M,x')}{\Npost}. 
\end{aligned}
\end{equation}

The closed form solution of \Problem{eq:problem_Rulsif} implies that the final cost 
requires $M \times (\Npre+\Npost)$ kernel evaluations to estimate $H$ and $h$, and solving a linear function of cost $\bigO(M^3)$. In the case where all the observations of $\setpostunivariate$ are included in $D_M$, RULSIF scales in $\bigO(\Npost(\Npre+\Npost))$ with respect to kernel evaluations, and in $\bigO(\Npost^3)$ for the matrix inversion. This means that, for cases where $\Npre$ is large, reducing the dimension of the problem is needed. It is commonly suggested in the literature \cite{Sugiyama2012}, and has been implemented in practice\footnote{\scriptsize RULSIF:~\url{https://riken-yamada.github.io/RuLSIF}}, to reduce $M$ by sampling uniformly at random from $\setpostunivariate$. However, as we will see in the experiments, this strategy along with the choice of a penalization defined in terms of the Euclidean norm $\Vert{\hat{\theta}}\Vert_{2}$ (instead of the Hilbert norm $\Vert{\hat{f}_{\lambda}}\Vert_{\Hilbert}$) leads to an approximation error that does not disappear as $\Nref$ and $\Npost$ increase.

\section{Technical results}{\label{app:technical_results}}

This section contains the technical details of the results presented in \Sec{sec:theoretical_guarantees}. First, we introduce the necessary elements to define a linear operator equation in a Hilbert space, and then we  present the  regularized paths framework proposed to \cite{Tarres2014} aiming to solve this kind of problem. After that,  we detail how the  likelihood-ratio estimation problem can be reformulated as a linear operator equation and its similarities with the regression problem in Hilbert spaces. Finally, we provide detailed proofs of \Theorem{thm:convergence_results} and \ref{thm:convergence_results_2}.

\subsection{Sequential stochastic approximations of regularization paths in Hilbert spaces}{\label{sec:SA_RP_H}}

\cite{Tarres2014} considers the general case of minimizing a quadratic map defined over elements of a Hilbert space via stochastic approximation. Let us begin by denoting $SL(\Hilbert)$ as the vector space of self-adjoint bounded linear operators on $\Hilbert$ endowed with the canonical norm:
\begin{equation*}
    \norm{A}=\sup_{\norm{f}_{\Hilbert} \leq 1} \norm{Af}_{\Hilbert}, \ \ \ \ A \in SL(\Hilbert).
\end{equation*}
Notice that we have used the convention that $Af$ denotes 
linear operator $A \in SL(\Hilbert)$  applied to $f \in \Hilbert$. We will keep this notation for the rest of the section. 

Let us denote by $\mathcal{X}$ and $\mathcal{Y}$ two topological spaces and define $\mathcal{Z}= \mathcal{X}\times\mathcal{Y}$ as their Cartesian product. We define a probability measure 
$\rho$ on the Borel $\sigma$- algebra of $\mathcal{Z}$. Let  $A: \mathcal{Z} \rightarrow SL(\Hilbert)$ and $b:\mathcal{Z} \rightarrow \Hilbert$ be two random variables defined in terms of the space $\mathcal{Z}$ whose expected values are denoted by: 
\begin{equation*}
\begin{aligned}
\mathbf{A}= \mathbb{E}_{\rho}[A], \ \  \ \  \mathbf{b}= \mathbb{E}_{\rho}[b].
\end{aligned}
\end{equation*}
The goal in \cite{Tarres2014} is to solve find $\mathbf{w} \in \Hilbert$ solving the  linear operator equation:
\begin{equation*}
\begin{aligned}
\mathbf{A}\mathbf{w}=\mathbf{b}.
\end{aligned}
\end{equation*}
where $\mathbf{A}$ and $\mathbf{b}$ are known and $\mathbf{A}$ is  a strictly positive operator with an unbounded inverse. 

Alternatively, $\mathbf{w}$ can be defined as the solution to the quadratic optimization problem:
\begin{equation}{\label{eq:quadratic_map_minization}}
   \argmin_{f \in \Hilbert} Q(f)=
  \argmin_{f \in \Hilbert}  \frac{1}{2} \dotH{\mathbf{A}(f-\mathbf{w})}{(f-\mathbf{w})} 
\end{equation}
%
%
The stochastic approximation approach proposed by \cite{Tarres2014} consists on defining a sequence of random variables $\{A_t\}_{t \in \N}$ and $\{b_t\}_{t \in \N}$ depending on incoming observations $z_t=(x_t,y_t)$ such that the sequence $\{f_t\}_{t \in \N}$ generated by the iterative algorithm: 
\begin{equation}{\label{eq:general_gradient}}
    f_t(\cdot)=f_{t-1}(\cdot) - \eta_t \left( A_t(z_t) f_{t-1}(\cdot) - b_t(z_t)(\cdot) \right),
\end{equation}
will converge toward the solution of \Problem{eq:quadratic_map_minization}. 

\cite{Tarres2014} study the required conditions to guarantee the convergence of \Eq{eq:general_gradient} with respect to the norms $\norm{}_{\ltwo}$ and $\norm{\cdot}_{\Hilbert}$. Among these conditions, the authors assume  the random variables $\{A_t\}_{t \in \N}$ and $\{b_t\}_{t \in \N}$ are such that their expected values  $\mathbf{A}_t= \mathbb{E}_{\rho}[A_t]$ and $\mathbf{b}_t= \mathbb{E}_{\rho}[b_t]$ satisfy  $\mathbf{A}_t \rightarrow \mathbf{A} $ and $\mathbf{b}_t \rightarrow \mathbf{b}$, as $t\rightarrow \infty$ and  each of the elements of the sequence $\{\mathbf{A}_t\}_{t \in \N}$ has a bounded inverse. Finally, the authors provide the required decreasing rate for the sequence of step-sizes $\{\eta_t\}_{t\in \N}$. 

\subsection{Application to the OLRE problem}
The Online \LRE 
described in \Sec{sec:framework} can be written in terms of the framework introduced in \cite{Tarres2014}. In this context, $\mathcal{Z}=\mathcal{X}\times\mathcal{X}=\mathcal{X}^2$ and the associated probability measure is given by the joint \pdf $\rho$ with marginal \pdfs $p$ and $q$. The incoming data observations $z_t=(x_t,x'_t)$ are \iid pairs such that $x_t \sim p$ and $x'_t \sim q$.

The random variables $A:\mathcal{Z} \rightarrow SL(\Hilbert) $ and $b:\mathcal{Z} \rightarrow  \Hilbert$ are defined based on the functional stochastic gradient: 
\begin{equation}
A(x,x')=(1-\alpha)\dotH{\cdot}{K(x,\cdot)}K(x,\cdot)+\alpha \dotH{\cdot}{K(x',\cdot)}K(x',\cdot) \ \ \ \ b(x,x')= K(x',\cdot).
\end{equation}
Given the reproducing property of $\Hilbert$, we have that for $f \in \Hilbert$ :
\begin{equation*}
A(x,x')f=(1-\alpha)f(x)K(x,\cdot) + \alpha f(x')K(x',\cdot).
\end{equation*}
Under this configuration: 
\begin{equation}{\label{eq:A_cov}}
\begin{aligned}
\mathbf{A} &=\ExpecRHO{(1-\alpha)\dotH{\cdot}{K(x,\cdot)}K(x,\cdot)+\alpha \dotH{\cdot}{K(x',\cdot)}K(x',\cdot)} \\
&=\ExpecAlpha{\dotH{\cdot}{K(y,\cdot)}K(y,\cdot)}=\CO, \\
\end{aligned}
\end{equation}
where the second equality is given by the linearity of the integral with respect to the mixture measure $P^{\alpha}$ and the definition of the covariance operator when restricted to elements of $\Hilbert$ (see \Sec{sec:notions_LRE}).
\begin{equation}{\label{eq:cov_ra}}
\mathbf{b}=\ExpecRHO{K(x',\cdot)}=\ExpecAlpha{r^{\alpha}(y)K(y,\cdot)}=\CO r^{\alpha}.
\end{equation}
The second equality is given by the change of measure expression
$\ExpecA{g(x')}=\ExpecAlpha{r^{\alpha}(y)g(y)}$ and the last one is due to the definition of the covariance operator and the hypothesis that $r^{\alpha} \in \Hilbert$ (see \Eq{eq:covariance_operator}). 

We can rewrite the \LRE problem described in \Eq{eq:PE_optimization_2} as trying to  minimize  the quadratic function: 
\begin{equation}{\label{eq:quadratic_function}}
Q(f)=\dotH{\CO(f-r^{\alpha})}{f-r^{\alpha})}= \frac{1}{2} \ExpecAlpha{(f-r^{\alpha})^2(y)},
\end{equation}
where the last equality is a consequence of property:

\begin{equation}{\label{eq:dot_covariance}}
   \dotH{f}{\CO(g)}=\ExpecAlpha{f(y)g(y)} \ \ \forall f,g \in \Hilbert
\end{equation}

The sequence of random variables $\{A_t\}_{t \in \N}$ and $\{b\}_{t \in \N}$ are given by the updates described in \Alg{alg:LRE_PEdiv}.
\begin{equation}
\begin{aligned}
A_t=A((x_t,x'_t))+ \lambda_t \idH; \ \ \ 
b_t=K(x'_t,\cdot).
\end{aligned}
\end{equation}
We can easily corroborate that $\mathbf{A}_t$ and $\mathbf{b}_t$ satisfy: 
\begin{equation}
\begin{aligned}
&\mathbf{A}_t=\CO+ \lambda_t \idH \ \ \text{ and } \ \ \mathbf{A}_t \rightarrow \mathbf{A} \ \ \text{ as } \ \ \lambda_t \rightarrow 0  ; \\
&\mathbf{b}_t=\CO r^{\alpha}.
\end{aligned}
\end{equation}
Moreover, by the properties of the covariance operator stated in \Sec{sec:notions_LRE}, $\mathbf{A}_t$ has a bounded inverse.

After putting together these elements, we can see how the stochastic approximation schema takes the form:  
\begin{equation}
\begin{aligned}
f_t(\cdot) &= f_{t-1}(\cdot) - \eta_t \left[ A_t f_{t-1}(\cdot) -b (x_t,x'_t)(\cdot) \right] \\
&= f_{t-1}(\cdot) - \eta_t \left[ (1-\alpha)f_{t-1}(x_{t}) K(x_t,\cdot)+\alpha f_{t-1}(x'_t) K(x'_t,\cdot) + \lambda_t f_{t-1}(\cdot)- K(x'_t,\cdot)  \right],
\end{aligned}
\end{equation}
which coincides with the functional stochastic gradient descent described in 
\Eq{eq:PE_gradient_step}. 

A term that will be important for studying the convergence of the online optimization schema is the solution to the regularized optimization problem: 
\begin{equation}{\label{eq:f_lambda}}
    f_{\lambda_t}= \argmin_{f \in \Hilbert}  \frac{1}{2} \dotH{\mathbf{A}_t(f-r^{\alpha})}{f-r^{\alpha}} = \argmin_{f \in \Hilbert} \frac{1}{2} \ExpecAlpha{(f-r^{\alpha})^2(y)}  + \frac{\lambda_t}{2} \norm{f}^2_{\Hilbert}.
\end{equation}
In fact $f_{\lambda_t}$ can be written as: 
\begin{equation}{\label{eq:w_t}}
f_{\lambda_t}=\mathbf{A}_t^{(-1)}\mathbf{b}_t=(\CO+\lambda_t \idH)^{(-1)}\mathbf{b}_t, 
\end{equation}

\subsection{Similarities between OLRE and Online Regression Problem}{\label{sec:similarities_OLRE_Regression}}

The framework described in \Sec{sec:SA_RP_H} was originally proposed to solve a regression problem in $\Hilbert$ as data observations arrive. 
In this context, $\mathcal{Z}=(\mathcal{X},\mathcal{Y})$, where $\mathcal{X}$ is the feature space and $\mathcal{Y} \subset \R$ represents noisy observations of the regression function to be approximated ($f_{\rho}$). $\rho$ states for the joint probability function of $(x,y)$ whose marginal in the first entry is $\rho_{\mathcal{X}}$. The regression problem can be written as: 

\begin{equation*}{\label{eq:regression_problem}}
  \min_{f \in \Hilbert}  L^{\text{Reg}}(f)=    \min_{f \in \Hilbert}   \int_{\mathcal{X} \times \mathcal{Y}} (f(x)-y)^2 d \rho,
\end{equation*}
Given the previous problem,  the random variables
to be updated as $(x_t,y_t)$ arrive take the form: 
\begin{equation*}
\begin{aligned}
A^{\text{Reg}}(x,y)&=\dotH{\cdot}{K(x,\cdot)} K(x,\cdot)  \ \ \ \ & b^{\text{Reg}}(x,y) = y K(x,\cdot) \\
\mathbf{A}^{\text{Reg}}&= \CO^{\rho_x} \ \ \ \   & \mathbf{b}^{\text{Reg}}= \CO^{\rho_x} f_{\rho}   \\ 
A_t^{\text{Reg}}&=A^{\text{Reg}}(x_t,y_t)+ \lambda_t \idH  \ \ \ \ & b_t^{\text{Reg}} = y_t K(x_t,\cdot) \\
\mathbf{A}_t^{\text{Reg}}&=\CO^{\rho_x}+ \lambda_t \idH \ \ \ \ & \mathbf{b}_t^{\text{Reg}} = \CO^{\rho_x} f_{\rho}
\end{aligned}
\end{equation*}
and $f^{\text{Reg}}_{\lambda_t}=(\CO^{\rho_x}+\lambda_t \idH)^{(-1)}\mathbf{b}_t$.

As it can be seen, the main difference between Online Likelihood-Ratio Estimation and the Online Regression Problem is the definition of the random variables $\{A_t\}_{t \in \N}$ and $\{b_t\}_{t \in \N}$, while  the expected values of these random variables as well as the regularized term $f_{\lambda_t}$ take the same form. The covariance operator $\CO^{\rho_x}$ translates to $\CO$ defined in terms of the measure $P^{\alpha}$ and the regression function $f_{\rho}$ to the relative likelihood-ratio $r^{\alpha}$. These similarities facilitate the convergence analysis as we can reuse results provided in \cite{Tarres2014} regarding the deterministic terms and we only rework the terms involving  the random variables $\{A_t\}_{t \in \N}$ and $\{b_t\}_{t \in \N}$. 

\subsection{Required elements for convergence analysis}{\label{sec:martingale_reversed_martingale}}

The proof of Theorems \ref{thm:convergence_results} and \ref{thm:convergence_results_2} depends mainly on two iterative decompositions of the residuals between the solution to the approximation $f_t$ and the solution to the regularization problem $f_{\lambda_t}$. A martingale decomposition will lead to convergence rates with respect to the norm $\norm{\cdot}_{\ltwo}$, while a reversed martingale decomposition will be useful when analyzing the convergence rates associated with the norm $\norm{\cdot}_{\Hilbert}$.  

In order to enhance reading, we will denote by $\expec[\cdot]$ the expected value with respect to the joint distribution $\ExpecRHO{\cdot}$. We will call $\Xi_t$ the $\sigma$-algebra generated by the pairs of observations observed up to $t$ that is $\Xi_t=\sigma((x_1,x'_1),(x_2,x'_2),...,(x_t,x'_t))$. $\mathcal{B}_i$ will denote the sigma-algebra generated by the observation observed after $i$, $\mathcal{B}_i=\sigma((x_i,x'_i),(x_{i+1},x'_{i+1}),,...)$.

\inlinetitle{Martingale Decomposition}{.}~
Let us denote by $\operatorname{res}_t$ the difference between the stochastic approximation $f_t$, obtained via function stochastic gradient descent, and $f_{\lambda_t}$ the solution to the regularized problem \ref{eq:f_lambda}:
\begin{equation}{\label{eq:decomposition}}
\begin{aligned}
\operatorname{res}_t & := f_t-f_{\lambda_t} \\
&=f_{t-1} - \eta_t \left[ A_t f_{t-1} -b_t\right] -f_{\lambda_t} \\
&= (\idH- \eta_t \mathbf{A}_t)(f_{t-1} - f_{\lambda_{t}})  + \eta_t \left[ \left(\mathbf{A}_t -A_t \right) f_{t-1} + (b_t- \mathbf{A}_t f_{\lambda_{t}})) \right] \\ 
& =(\idH- \eta_t \mathbf{A}_t)(f_{t-1} - f_{\lambda_{t}})  + \eta_t \left[ \left(\mathbf{A}_t -A_t \right) f_{t-1} + (b_t- \mathbf{b}_t ) \right]
\\ &= (\idH- \eta_t \mathbf{A}_t) (f_{t-1}- f_{\lambda_{t-1}}) - (\idH- \eta_t \mathbf{A}_t) ( f_{\lambda_{t}}- f_{\lambda_{t-1}}) + \eta_t \left[ \left(\mathbf{A}_t -A_t \right) f_{t-1} + (b_t- \mathbf{b}_t ) \right] \\
& = (\idH- \eta_t \mathbf{A}_t) \operatorname{res}_{t-1} - (\idH- \eta_t \mathbf{A}_t) \Delta_t  + \eta_t  \epsilon_t,
\end{aligned}
\end{equation}
where we have used the iterative \Alg{eq:general_gradient}
and expression $\mathbf{b}_t= \mathbf{A}_t f_{\lambda_{t}}$ (\Eq{eq:w_t}). The term $\Delta_t := f_{\lambda_t}- f_{\lambda_{t-1}}$ denotes the difference between the solution of adjacent solutions to the regularized problem. The path $t \rightarrow f_{\lambda_t}$ is known as the regularization path. Finally, $\epsilon_t$ denotes the noise term: 
\begin{equation}{\label{eq:chi_t}}
\begin{aligned}
\epsilon_t &:= (\mathbf{A}_t-A_t) f_{t-1}+ (b_t-\mathbf{b}_t) \\
&=
\left( \CO- (1-\alpha)\dotH{\cdot}{K(x_t,\cdot)}K(x_t,\cdot) +\alpha \dotH{\cdot}{K(x'_t,\cdot)}K(x'_t,\cdot) \right) f_{t-1}  + K(x'_t,\cdot) - \CO r^a\\
&= \ExpecAlpha{f_{t-1}(y) K(y,\cdot)} - (1-\alpha) f_{t-1}(x_t) K(x_t,\cdot) -  \alpha f_{t-1}(x'_t) K(x'_t,\cdot) \\
& + K(x'_t,\cdot) - \ExpecAlpha{r^{\alpha}(y) K(y,\cdot)} \ \ \ \ (\text{\tiny{\Eq{eq:covariance_operator} and the first point of \Expr{RKHS_properties}.}})   \\
&=  \ExpecAlpha{f_{t-1}(y) K(y,\cdot)} - (1-\alpha) f_{t-1}(x_t) K(x_t,\cdot) -  \alpha f_{t-1}(x'_t) K(x'_t,\cdot) +  K(x'_t,\cdot) - \ExpecA{ K(x',\cdot)}. 
\end{aligned}
\end{equation}
If we iterate \Expr{eq:decomposition} up to $s \leq t$: 
\begin{equation}{\label{eq:martingale_decomposition}}
    res_{t}= \bar{\Pi}_{s+1}^t res_s -\sum_{j=s+1}^{t} \bar{\Pi}_{j}^t  \Delta_j  + \sum_{j=s+1}^{t} \eta_j\bar{\Pi}_{j+1}^t \epsilon_j,
\end{equation}
\begin{equation}{\label{eq:pi_operator}}
\bar{\Pi}_{j}^t =\begin{cases}
\prod_{i=j}^{t} (\idH- \eta_i \mathbf{A}_i) \ \  \text{, if}  \ \ j \leq t; \\
\idH, \ \ \ \  \ \ \ \ \ \ \ \  \ \ \ \  \ \ \ \  \ \ \  \text{otherwise.} 
\end{cases}
\end{equation}
From the \Eq{eq:chi_t} and the independence of incoming observations, it is easy to verify that the process $\{\eta_j\bar{\Pi}_{j+1}^t \epsilon_j\}_{j \in \N}$ defines a martingale difference with respect to the filtration $\{\Xi_t\}_{t \in \N}$. The decomposition
of \Eq{eq:martingale_decomposition} was first proposed in \cite{Yao2010}. The proof of \Theorem{RKHS_properties} consists of finding an  upperbound for the norm of each of the three terms in \Eq{eq:martingale_decomposition} and the residual  difference $f_{\lambda_t}-r^{\alpha}$.

\inlinetitle{Reversed Martingale Decomposition}{.}

Let us define the following random operator in terms of the  sample $((x_1,x'_1),(x_2,x'_2),...,(x_n,x'_n))$ and indexed by $j,t \in \N$: 
\begin{equation}{\label{eq:pi_random_operator}}
\Pi_{j}^t(\{(x_i,x'_i)\}_{i \in \N}) =\begin{cases}
\prod_{i=j}^{t} (\idH- \eta_i A_i(x_i,x'_i)), \ \  \text{if}  \ \ j \leq t; \\
\idH, \ \ \ \  \ \ \ \ \ \ \ \  \ \ \ \  \ \ \ \  \ \ \ \ \ \ \ \ \ \   \text{otherwise}. 
\end{cases}
\end{equation}
We recover an alternative decomposition for the residual $res_t$: 
\begin{equation*}
\begin{aligned}
res_t&=f_t-f_{\lambda_t}\\
&= f_{t-1}-f_{\lambda_t} - \eta_t( A_t f_{t-1}- b_t)\\
& = (\idH-\eta_t A_t) (f_{t-1}-f_{\lambda_{t-1}})-(\idH-\eta_tA_t)(f_{\lambda_t}-f_{\lambda_{t-1}})- \eta_t(A_t f_{\lambda_t}-b_t) \\
& = (\idH- \eta_t A_t)res_{t-1} - (\idH- \eta_t A_t) \Delta_t 
-\eta_t (A_t f_{\lambda_t} - b_t).
\end{aligned}
\end{equation*}
By iterating the last expression for $s \leq t$, we recover the following equality: 
\begin{equation}{\label{eq:reversed _martingale_decomposition}}
    res_{t}= \Pi_{s+1}^t res_s -\sum_{j=s+1}^{t} \Pi_{j}^t  \Delta_j  - \sum_{j=s+1}^{t} \eta_j \Pi_{j+1}^t (A_j f_{\lambda_j} - b_j).
\end{equation}
This decomposition was first introduced in \cite{Tarres2014}.

Let us show that $\{\Pi_{j+1}^t (A_j f_{\lambda_j} - b_j)\}_{j\in \N}$ is a reversed martingale difference with respect to $\{\mathcal{B}_j\}_{\{j \in \N\}}$. 
\begin{definition}
Let $\{\mathcal{B}_i\}_{\{i \in \N\}}$ be a decreasing sequence of sub-$\sigma$-fields of $\mathcal{A}$ in the probability space $(\mathcal{Z},\mathcal{A},\rho)$. A sequence $\{\zeta_i\}_{i \in \N} $ integrable real random variables is called a reversed martingale difference if: 
\begin{enumerate}
    \item The real random variable $\zeta_i$ is $\mathcal{B}_i$-measurable for all $i \in \N$, 
    \item $\expec \left[\zeta_i \,|\, \mathcal{B}_{i+1} \right]=0$ for all $i \in \N$
\end{enumerate}
\end{definition}
The term $\eta_j \Pi_{j+1}^t (A_j f_{\lambda_j} - b_j)$ defines a reversed martingale with respect to the sequence $\mathcal{B}_j=\sigma((x_j,x'_j),...,(x_t,x'_t),...)$. From its definition $\eta_j \Pi_{j+1}^t (A_j f_{\lambda_j} - b_j)$  is $\mathcal{B}_j$ measurable, moreover given the independence of the observations we have: 
\begin{equation*}
\begin{aligned}
\expec \left[ \eta_j \Pi_{j+1}^t (A_j f_{\lambda_j} - b_j)  \,|\, \mathcal{B}_{j+1} \right]
&= \eta_j \Pi_{j+1}^t  \expec \left[  A_j f_{\lambda_j} - b_j  \,|\, \mathcal{B}_{j+1} \right] \\
& = \eta_j \Pi_{j+1}^t \left(   \mathbf{A}_j f_{\lambda_j} - \mathbf{b}_j \right) \ \ \ \ (\text{\tiny{By the independence hypothesis }})  \\
 &=0 \ \ \ \ 
 (\text{\tiny{\Eq{eq:w_t} }}).
\end{aligned}
\end{equation*}

\subsection{Convergence in $\ltwo$}

The proof of \Theorem{thm:convergence_results} mimics the proof of Theorem C in \cite{Tarres2014}. This theorem is stated in Online Linear Regression and built upon decomposition of \Eq{eq:martingale_decomposition}. As explained in \Sec{sec:similarities_OLRE_Regression}), the regression problem is similar to \OLRE with the main difference being the operators  $A_t$ and $b_t$.  This difference requires us to rework the bounds depending on the random processes $\{A_t\}_{t \in \N}$ and $\{b_t\}_{t \in \N}$. 

Let us start by analyzing the $\ltwo-$norm of the residuals:  
\begin{equation}{\label{eq:error_decomposition}}
\begin{aligned}
\norm{f_t-r^{\alpha}}_{\Ltwo}  &\leq \norm{f_{\lambda_t}-r^{\alpha} }_{\Ltwo}+ \norm{f_t-f_{\lambda_t}}_{\Ltwo} \\
& \leq \norm{f_{\lambda_t}-r^{\alpha} }_{\Ltwo} +  \norm{\bar{\Pi}_{1}^t res_0}_{\Ltwo}   +\norm{\sum_{j=1}^{t} \bar{\Pi}_{j}^t  \Delta_{j}}_{\Ltwo} +  \norm{\sum_{j=1}^{t} \eta_j\bar{\Pi}_{j+1}^t \epsilon_j }_{\Ltwo} \\
& = \mathcal{E}_{\text{init}}(t) +   \mathcal{E}_{\text{drift}}(t)+ \mathcal{E}_{\text{approx}} +\mathcal{E}_{\text{sample}}(t),
\end{aligned}
\end{equation}
where the second line comes from the martingale decomposition of \Eq{eq:martingale_decomposition} applied to s=0. Each of the error terms in \Eq{eq:error_decomposition} is defined as:
\begin{equation}
\begin{aligned}
 \mathcal{E}_{\text{init}}(t) &:=\norm{\bar{\Pi}_{1}^t res_0}_{\Ltwo} 
\ \ \ \ \ \ \ \ \ \ \ \ \ \ 
 \mathcal{E}_{\text{approx}}(t):=\norm{f_{\lambda_t}-r^{\alpha} }_{\Ltwo}, \\
\mathcal{E}_{\text{drift}}(t)&:=\norm{\sum_{j=1}^{t} \bar{\Pi}_{j}^t  \Delta_j}_{\Ltwo} \ \ \quad \qquad
\mathcal{E}_{\text{sample}}(t):=\norm{\sum_{j=1}^{t} \eta_j\bar{\Pi}_{j+1}^t \epsilon_j }_{\Ltwo} 
\end{aligned}
\end{equation}
The first three components have the same  behavior in the OLRE and  Regression Problem as they depend solely on equivalent deterministic terms, meaning we can reuse the upper bounds available in \cite{Tarres2014}. For completeness of exposition, we restate these results. 
The last term differs and an upperbound is derived in Theorem \ref{thm:sample_error}.

For the following statements $\bar{t}=t+t_0$, and $t_0 >0$ will be a given integer, $a,b$ are two positive constants, $\sourceparameter$ is the parameter related to the smoothness of a function in $\Hilbert$ as it was explained in \Sec{sec:notions_LRE} and $\alpha$ is the regularized parameter of the relative likelihood-ratio function $r^{\alpha}$. 
\begin{theorem}{\label{thm:TheoremVI.I}} (\textbf{Theorem VI.1 in \cite{Tarres2014}})
Let $t_0^{\theta} \geq a (C^2+b)$. Then for all $t \in \N$.
\begin{equation}
\mathcal{E}_{\textup{init}}(t) \leq \frac{1}{\alpha} \left(\frac{t_0+1}{\bar{t}} \right)^{ab} \leq B_1 \bar{t}^{-ab},
\end{equation}
where $B_1= \frac{(t_0+1)^{ab}}{\alpha}$.
\end{theorem}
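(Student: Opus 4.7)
The plan is to reduce the bound on $\mathcal{E}_{\text{init}}(t) = \norm{\bar{\Pi}_{1}^t \operatorname{res}_0}_{\ltwo}$ to two separate pieces: a bound on the initial residual and a bound on the operator norm of $\bar{\Pi}_{1}^t$ acting on $\ltwo$. For the first piece, the initialization $f_0 = 0$ in \Alg{alg:LRE_PEdiv} gives $\operatorname{res}_0 = -f_{\lambda_0}$, and since $f_{\lambda_0} = (\CO + \lambda_0 \idH)^{-1}\CO\, r^{\alpha}$ is a spectral contraction of $r^{\alpha}$ in $\ltwo$, I would use the standard filter bound $\norm{f_{\lambda_0}}_{\ltwo} \leq \norm{r^{\alpha}}_{\ltwo}$. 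Combining with the essential upper bound $r^{\alpha} \leq 1/\alpha$ noted right after \Eq{eq:likelihood-ratio} and \Assumption{ass:support_PA}, I get $\norm{\operatorname{res}_0}_{\ltwo} \leq \norm{r^{\alpha}}_{\infty} \leq 1/\alpha$.

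For the second piece, note that $\mathbf{A}_i = \CO + \lambda_i \idH$ are self-adjoint positive, pairwise commuting, and simultaneously diagonalized in the eigenbasis $\{\psi_k, \mu_k\}$ of $\CO$ on $\ltwo$ supplied by Proposition 2.2 of \cite{Dieuleveut2017}. On the $k$-th eigenspace, $\bar{\Pi}_{1}^t$ acts by the scalar $\prod_{i=1}^{t}\bigl(1 - \eta_i(\mu_k + \lambda_i)\bigr)$. \Assumption{ass:kernel_map_upperbound} forces $\mu_k \leq C^2$ for every $k$, while the hypothesis $t_0^{\theta} \geq a(C^2+b)$ combined with the monotonically decreasing schedules of $\eta_i$ and $\lambda_i$ ensures $\eta_i(\mu_k + \lambda_i) \in [0,1]$ uniformly in $k$ and $i$. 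Hence $\bigl|1-\eta_i(\mu_k+\lambda_i)\bigr| \leq 1 - \eta_i \lambda_i$, giving the operator-norm estimate $\norm{\bar{\Pi}_{1}^t}_{\ltwo \to \ltwo} \leq \prod_{i=1}^t(1-\eta_i \lambda_i) \leq \exp\bigl(-\sum_{i=1}^t \eta_i \lambda_i\bigr)$.

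The final step is an integral comparison for the harmonic-type sum. With the schedule chosen so that $\eta_i \lambda_i = b/(t_0+i)$ (this is how the constants $a$ and $b$ enter the statement), one has $\sum_{i=1}^t \eta_i \lambda_i \geq ab \int_{t_0+1}^{\bar{t}+1} dx/x \geq ab \log\bigl(\bar{t}/(t_0+1)\bigr)$, so that $\exp(-\sum_i \eta_i \lambda_i) \leq \bigl((t_0+1)/\bar{t}\bigr)^{ab}$. Multiplying by the initial residual bound $1/\alpha$ delivers the middle inequality, and the coarser form $B_1 \bar{t}^{-ab}$ with $B_1 = (t_0+1)^{ab}/\alpha$ is immediate.

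The only delicate step is the spectral envelope $|1-\eta_i(\mu_k+\lambda_i)| \leq 1-\eta_i \lambda_i$: a naive bound would give $\norm{\idH-\eta_i\mathbf{A}_i} \leq \max\bigl(|1-\eta_i\lambda_i|, |1-\eta_i(C^2+\lambda_i)|\bigr)$, which is strictly larger than $1-\eta_i\lambda_i$ unless one knows $\eta_i(C^2+\lambda_i) \leq 1$. This is precisely what the condition $t_0^{\theta}\geq a(C^2+b)$ supplies, and it is what allows the exponential decay to factor cleanly through to the target rate $\bar{t}^{-ab}$. Everything else is then a deterministic computation that does not invoke randomness of the stream, consistent with the fact that the three remaining error terms $\mathcal{E}_{\text{drift}}$, $\mathcal{E}_{\text{approx}}$, and $\mathcal{E}_{\text{sample}}$ in the decomposition of \Eq{eq:error_decomposition} are treated separately.
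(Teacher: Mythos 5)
Your argument is correct and is essentially the standard proof of this bound: the paper itself does not reprove the statement but imports it from Theorem VI.1 of \cite{Tarres2014}, whose proof follows exactly your route — $\norm{\operatorname{res}_0}_{\ltwo}=\norm{f_{\lambda_0}}_{\ltwo}\leq\norm{r^{\alpha}}_{\ltwo}\leq 1/\alpha$, the spectral contraction $\norm{\idH-\eta_i\mathbf{A}_i}\leq 1-\eta_i\lambda_i$ guaranteed by $t_0^{\theta}\geq a(C^2+b)$, and the product-to-exponential comparison yielding $\bigl((t_0+1)/\bar{t}\bigr)^{ab}$. The only blemish is the typo "$\eta_i\lambda_i = b/(t_0+i)$" (it should be $ab/(t_0+i)$, which is what your subsequent integral comparison actually uses), so nothing substantive is missing.
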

\begin{theorem}{\label{thm:TheoremVI.II}}(\textbf{Theorem VI.2 in \cite{Tarres2014}}) For $\sourceparameter \in (0,1]$ and $\CO^{(-\sourceparameter)} r^{\alpha} \in \Ltwo$,
\begin{equation}
\mathcal{E}_{\textup{approx}}(t) \leq \frac{ b^\sourceparameter \bar{t}^{(-\sourceparameter(1-\theta))} \norm{\CO^{(-\sourceparameter)} r^{\alpha}}_{\Ltwo}}{\sourceparameter} \leq B_2 b^\sourceparameter \bar{t}^{(-\sourceparameter(1-\theta))},
\end{equation}
where $B_2=\frac{\norm{\CO^{(-\sourceparameter)} r^{\alpha}}_{\Ltwo}}{\sourceparameter}$.
\end{theorem}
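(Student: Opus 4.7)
The plan is to exploit the closed-form expression for $f_{\lambda_t}$ and reduce the bound to a scalar spectral filter problem. From \Eq{eq:w_t} together with $\mathbf{b}_t = \CO r^{\alpha}$ (see \Eq{eq:cov_ra}), we have $f_{\lambda_t} = (\CO + \lambda_t \idH)^{-1} \CO r^{\alpha}$, and the resolvent identity $(\CO + \lambda_t \idH)^{-1} \CO - \idH = -\lambda_t (\CO + \lambda_t \idH)^{-1}$ yields the clean reformulation
\[
f_{\lambda_t} - r^{\alpha} \;=\; -\lambda_t\,(\CO + \lambda_t \idH)^{-1} r^{\alpha}.
\]
Thus $\mathcal{E}_{\text{approx}}(t)$ is the $\Ltwo$-norm of a single spectral filter applied to $r^{\alpha}$, and the whole question reduces to bounding the operator norm of $\lambda_t(\CO + \lambda_t \idH)^{-1}$ after absorbing the smoothness of $r^{\alpha}$ encoded by \Assumption{ass:smoothness_t}.

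Next I would invoke the source condition: since $\CO^{-\sourceparameter} r^{\alpha} \in \Ltwo$, we factor $r^{\alpha} = \CO^{\sourceparameter} g$ with $\norm{g}_{\Ltwo} = \norm{\CO^{-\sourceparameter} r^{\alpha}}_{\Ltwo}$, so that
\[
\mathcal{E}_{\text{approx}}(t) \;\leq\; \bigl\| \lambda_t (\CO + \lambda_t \idH)^{-1} \CO^{\sourceparameter} \bigr\|_{\text{op}} \cdot \bigl\| \CO^{-\sourceparameter} r^{\alpha} \bigr\|_{\Ltwo}.
\]
Using the orthonormal eigensystem $\{\mu_k,\psi_k\}_{k\in\N}$ of $\CO$ on $\ltwo$ (whose existence was recalled in \Sec{sec:theoretical_guarantees}), the filter $\lambda_t (\CO + \lambda_t \idH)^{-1} \CO^{\sourceparameter}$ is diagonal with eigenvalues $\frac{\lambda_t \mu_k^{\sourceparameter}}{\mu_k + \lambda_t}$, so its operator norm is at most the scalar supremum $\sup_{\mu\ge 0} \frac{\lambda_t \mu^{\sourceparameter}}{\mu + \lambda_t}$. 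A short calculus exercise shows that, for $\sourceparameter \in (0,1]$, this supremum is attained at $\mu^{\star} = \frac{\sourceparameter \lambda_t}{1-\sourceparameter}$ with value $\sourceparameter^{\sourceparameter}(1-\sourceparameter)^{1-\sourceparameter}\lambda_t^{\sourceparameter} \leq \lambda_t^{\sourceparameter}$. Substituting the chosen schedule $\lambda_t = b\,\bar{t}^{-(1-\theta)}$ (the same one yielding \Theorem{thm:TheoremVI.I}) then produces the advertised polynomial decay $b^{\sourceparameter} \bar{t}^{-\sourceparameter(1-\theta)}$.

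The main obstacle is not structural but quantitative: the clean spectral bound gives a prefactor at most $1$, whereas the theorem advertises $\frac{1}{\sourceparameter}$. Recovering that precise constant requires a slightly different bookkeeping, e.g.\ using the integral representation $\mu^{\sourceparameter} = \sourceparameter \int_{0}^{\mu} x^{\sourceparameter-1}\,dx$ inside the resolvent (so that the $\frac{1}{\sourceparameter}$ emerges from integrating a power), or invoking a moment-type inequality for non-negative self-adjoint operators that bounds $\lambda_t(\CO+\lambda_t\idH)^{-1} \CO^{\sourceparameter}$ by $\frac{1}{\sourceparameter}\lambda_t^{\sourceparameter}$ directly. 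I expect this to be the only non-routine step; the conceptual content of the proof is captured by three ingredients already present in the excerpt: (i) the resolvent identity reducing the bias to a spectral filter, (ii) \Assumption{ass:smoothness_t} allowing the filter to act on an $\Ltwo$ function, and (iii) the diagonalisation of $\CO$ in the $\ltwo$ eigenbasis, which turns the operator-norm bound into a one-dimensional optimisation over the spectrum.
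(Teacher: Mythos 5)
Your argument is correct, but it is a genuinely different route from the paper's. The paper does not prove \Theorem{thm:TheoremVI.II} at all: having established in Appendix B.2--B.3 that the deterministic objects of \OLRE coincide with those of the online regression problem of \cite{Tarres2014} (namely $\mathbf{A}_t=\CO+\lambda_t\idH$, $\mathbf{b}_t=\CO r^{\alpha}$, and hence $f_{\lambda_t}=(\CO+\lambda_t\idH)^{-1}\CO r^{\alpha}$), it simply imports Theorem VI.2 of that reference verbatim, since $\mathcal{E}_{\textup{approx}}(t)$ depends only on these deterministic quantities. You instead give a self-contained spectral-calculus proof: the resolvent identity $f_{\lambda_t}-r^{\alpha}=-\lambda_t(\CO+\lambda_t\idH)^{-1}r^{\alpha}$, the source condition $r^{\alpha}=\CO^{\sourceparameter}g$, and the scalar optimisation $\sup_{\mu\ge 0}\frac{\lambda\mu^{\sourceparameter}}{\mu+\lambda}=\sourceparameter^{\sourceparameter}(1-\sourceparameter)^{1-\sourceparameter}\lambda^{\sourceparameter}$ are all valid, and substituting $\lambda_t=b\,\bar{t}^{-(1-\theta)}$ gives the stated decay. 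What your approach buys is transparency and in fact a sharper constant; what the paper's approach buys is brevity and consistency with the other three deterministic bounds (\Theorems{thm:TheoremVI.I}, \ref{thm:TheoremVI.III}), all handled by the same citation. One remark: the ``main obstacle'' you identify is not an obstacle. You prove $\mathcal{E}_{\textup{approx}}(t)\le \sourceparameter^{\sourceparameter}(1-\sourceparameter)^{1-\sourceparameter}\,\lambda_t^{\sourceparameter}\norm{\CO^{(-\sourceparameter)}r^{\alpha}}_{\Ltwo}\le \lambda_t^{\sourceparameter}\norm{\CO^{(-\sourceparameter)}r^{\alpha}}_{\Ltwo}$, and since $1\le \frac{1}{\sourceparameter}$ for $\sourceparameter\in(0,1]$, this already implies the advertised bound with prefactor $\frac{1}{\sourceparameter}$; no integral representation or moment inequality is needed to ``recover'' a constant that is larger than the one you obtained.
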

\begin{theorem}{\label{thm:TheoremVI.III}}(\textbf{Theorem VI.3 in \cite{Tarres2014}}) Assume $t_0^{\theta}=[a(C^2+b) \lor	1]$. Then, if $\sourceparameter \in (0,1]$ and $\CO^{(-\sourceparameter)} r^{\alpha} \in \Ltwo$:
\begin{equation}
\mathcal{E}_{\textup{drift}}(t) =\begin{cases}
B_3 b^\sourceparameter \bar{t}^{-\sourceparameter(1-\theta)} \ \  \text{if}  \ \ ab > \sourceparameter (1-\theta); \\
B_3 b^\sourceparameter \bar{t}^{-ab} \ \  \text{if}  \ \ ab < \sourceparameter (1-\theta),
\end{cases}
\end{equation}
where $B_3= \frac{4(1-\theta)}{\abs{ab-\beta(1-\theta)}} \norm{\CO^{(-\beta)} r^{\alpha}}_{\Ltwo}$.
\end{theorem}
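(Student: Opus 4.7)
The approach is to bound $\mathcal{E}_{\text{drift}}(t) = \norm{\sum_{j=1}^t \bar{\Pi}_j^t \Delta_j}_{\Ltwo}$ by (i) writing each regularization-path increment $\Delta_j$ via the resolvent identity, (ii) expressing $\bar{\Pi}_j^t$ through spectral calculus on $\CO$ so that all operators commute and norms reduce to scalar suprema on the spectrum of $\CO$, and then (iii) summing over $j$ to exploit the two competing decay mechanisms that produce the dichotomy in the statement.

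First, from \Eq{eq:w_t} we have $f_{\lambda_j} = (\CO + \lambda_j\idH)^{-1}\CO\, r^{\alpha}$, so the resolvent identity yields
\begin{equation*}
\Delta_j = (\lambda_{j-1} - \lambda_j)\,(\CO + \lambda_j\idH)^{-1}(\CO + \lambda_{j-1}\idH)^{-1}\,\CO\, r^{\alpha}.
\end{equation*}
Under \Assumption{ass:smoothness_t} we may write $r^{\alpha} = \CO^{\sourceparameter} g$ with $g = \CO^{-\sourceparameter} r^{\alpha}$ and $\norm{g}_{\Ltwo} = \norm{\CO^{-\sourceparameter} r^{\alpha}}_{\Ltwo}$. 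All the operators $\bar{\Pi}_j^t$, $(\CO + \lambda\idH)^{-1}$, and $\CO^s$ are Borel functions of the self-adjoint operator $\CO$, hence commute, and the spectral theorem lets us write $\bar{\Pi}_j^t \Delta_j = (\lambda_{j-1} - \lambda_j)\,\bar{\Pi}_j^t\,\CO^{\sourceparameter}\,\Phi_j(\CO)\, g$ with $\Phi_j(\mu) = \mu(\mu + \lambda_j)^{-1}(\mu + \lambda_{j-1})^{-1}$.

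Second, the hypothesis $t_0^{\theta} \ge a(C^2 + b)$ together with $\norm{\CO}_{op} \le C^2$ guarantees $\eta_i(\mu + \lambda_i) \in [0,1]$ on the spectrum of $\CO$, so using $1 - x \le e^{-x}$ gives pointwise on the spectrum
\begin{equation*}
\bar{\Pi}_j^t\big|_{\mu} = \prod_{i=j}^{t}\bigl(1 - \eta_i(\mu + \lambda_i)\bigr) \le \exp\!\Bigl(-\mu \sum_{i=j}^{t}\eta_i \Bigr)\exp\!\Bigl(-\sum_{i=j}^{t}\eta_i \lambda_i\Bigr).
\end{equation*}
Applying the elementary identity $\sup_{\mu \ge 0}\mu^{\sourceparameter} e^{-\mu S} = (\sourceparameter/(eS))^{\sourceparameter}$ with $S = \sum_{i=j}^t \eta_i$, and $\sup_{\mu \ge 0} \Phi_j(\mu) \le \lambda_{j-1}^{-1}$, yields the operator bound $\norm{\bar{\Pi}_j^t\,\CO^{\sourceparameter}\,\Phi_j(\CO)}_{op} \le c_{\sourceparameter}\, S^{-\sourceparameter}\, \lambda_{j-1}^{-1}\,\exp(-\sum_{i=j}^{t} \eta_i \lambda_i)$. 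Since $\eta_i \lambda_i = ab/\bar{i}$, the trailing exponential becomes $(\bar{j}/\bar{t})^{ab}$ up to a universal constant, while the Riemann-sum bound $S \ge \tfrac{a}{1-\theta}\bigl(\bar{t}^{1-\theta} - (\bar{j}-1)^{1-\theta}\bigr)$ handles the polynomial factor.

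Finally, combining with $\lambda_{j-1} - \lambda_j \asymp (1-\theta)\,b\,\bar{j}^{-(2-\theta)}$ and $\lambda_{j-1}^{-1} \asymp b^{-1}\bar{j}^{1-\theta}$, the $j$-th summand reduces to $\lesssim b^{\sourceparameter}\,\bar{j}^{-1}\,\bigl(\bar{t}^{1-\theta} - (\bar{j}-1)^{1-\theta}\bigr)^{-\sourceparameter}\,(\bar{j}/\bar{t})^{ab}$. Comparing with the integral $\int_1^{\bar{t}} u^{ab-1}\bigl(\bar{t}^{1-\theta} - u^{1-\theta}\bigr)^{-\sourceparameter}\,du$ and changing variables to $v = (u/\bar{t})^{1-\theta}$ reduces the estimate to a Beta-type integral in $v$ whose convergence is controlled by the sign of $ab - \sourceparameter(1-\theta)$: one obtains the dominant rate $\bar{t}^{-\sourceparameter(1-\theta)}$ when $ab > \sourceparameter(1-\theta)$ and $\bar{t}^{-ab}$ otherwise, with $1/|ab - \sourceparameter(1-\theta)|$ in $B_3$ being exactly the Beta-type boundary value. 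The main obstacle will be carrying out this integral estimate cleanly so that no spurious $\log t$ factor appears and the universal constant $4$ in $B_3$ is reproduced; this requires tight control of the Riemann-sum comparison for $S$, the sharp constant $(\sourceparameter/e)^{\sourceparameter}$ in the spectral sup, and of the Beta-type integral's behaviour near the critical exponent.
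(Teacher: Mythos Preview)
The paper does not supply its own proof of this statement: it is simply restated verbatim as ``Theorem VI.3 in \cite{Tarres2014}'' among the deterministic ingredients that are borrowed unchanged from that reference, so there is no in-paper argument to compare against.

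Your sketch is the standard route used in \cite{Tarres2014}: resolvent identity for $\Delta_j$, spectral calculus for $\bar{\Pi}_j^t$ and $\CO^{\sourceparameter}$ (everything is a function of $\CO$, hence commutes), pointwise optimisation in $\mu$ to extract the $(\sum_{i\ge j}\eta_i)^{-\sourceparameter}$ factor, and then a Riemann/Beta-type sum in $j$ that separates the two regimes $ab\gtrless\sourceparameter(1-\theta)$. The outline is correct and nothing essential is missing. Two small remarks. First, your uniform bound $\Phi_j(\mu)\le\lambda_{j-1}^{-1}$ is slightly wasteful; in \cite{Tarres2014} one typically keeps the factor $\mu(\mu+\lambda_{j-1})^{-1}$ together with $\CO^{\sourceparameter}$ and uses the sharper interpolation $\norm{(\CO+\lambda\idH)^{-1}\CO^{1+\sourceparameter}}_{op}\le\lambda^{\sourceparameter}$ for $\sourceparameter\in(0,1]$, which is what makes the constant come out as $4(1-\theta)$ rather than something involving $(\sourceparameter/e)^{\sourceparameter}$. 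Second, your concern about a spurious $\log\bar t$ is well placed but moot here, since the statement explicitly excludes the critical case $ab=\sourceparameter(1-\theta)$; the $\abs{ab-\sourceparameter(1-\theta)}^{-1}$ in $B_3$ is exactly the blow-up of the Beta-type integral as you approach that boundary.
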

\begin{theorem}{\label{thm:sample_error}}
    Assume that $\CO^{(-\sourceparameter)}(r^{\alpha}) \in \Ltwo$ for some $\sourceparameter \in [\frac{1}{2},1]$, $\theta \in [\frac{1}{2},\frac{2}{3}]$, $ab=1$, $a \geq 4$ and $t^{\theta}_0 \geq 2 + 4 C^2 a$. Then, for all $t \in \N$, with probability at least $1-\delta$:
\begin{equation}
\mathcal{E}_{\textup{sample}}(t) \leq    \frac{\sqrt{a}  B_4}{\bar{t}^{\frac{\theta}{2}}} \log \left(\frac{2}{\delta} \right) + \left[ B_5 a^{\frac{5}{2}} + B_6 a^{\frac{7}{2}} \sqrt{\log{\bar{t}}} \right] \frac{\log^2 \left(\frac{2}{\delta} \right)}{\bar{t}^{\frac{3\theta-1}{2}}},
\end{equation}
where: 
%
\begin{equation*}
B_4= \frac{16 C}{\alpha}\ \ \ B_5= \frac{32 C^3}{\alpha} \ \ \ B_6= \frac{8 C^3(10C+3)}{\alpha}.
\end{equation*}
\end{theorem}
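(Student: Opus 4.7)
The approach is to treat the sample error as the norm of a Hilbert-space martingale and apply a Bernstein-type concentration inequality. I would organize the argument in four stages.

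\textbf{Martingale structure.} Since $\expec[\epsilon_j \mid \Xi_{j-1}] = 0$ by construction (each summand in \Eq{eq:chi_t} is centered conditional on $\Xi_{j-1}$, because $f_{j-1}$ is $\Xi_{j-1}$-measurable while $\mathbf{A}_j = \expec[A_j]$ and $\mathbf{b}_j = \expec[b_j]$), and since $\bar{\Pi}_{j+1}^t$ is a deterministic operator depending only on $\CO$ and the schedules $\{\eta_i,\lambda_i\}$, the sequence $\{\eta_j \bar{\Pi}_{j+1}^t \epsilon_j\}_{j=1}^t$ is a martingale difference with respect to $\{\Xi_j\}_{j=1}^t$.

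\textbf{Reduction to a Hilbert-norm bound.} Using the isometry $\|g\|_{\ltwo} = \|\CO^{1/2} g\|_\Hilbert$ recalled in the paper (Proposition 3 in \cite{Dieuleveut2016}), and the fact that $\bar{\Pi}_{j+1}^t = \prod_{i>j}(\idH - \eta_i(\CO + \lambda_i \idH))$ commutes with $\CO^{1/2}$, the problem reduces to bounding $\bigl\|\sum_j \eta_j \CO^{1/2} \bar{\Pi}_{j+1}^t \epsilon_j\bigr\|_\Hilbert$ in the RKHS.

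\textbf{Bernstein inputs.} I would then apply a Pinelis-Bernstein inequality for Hilbert-space martingales, which requires (i) an almost-sure bound on the increment magnitudes $\eta_j \|\CO^{1/2}\bar{\Pi}_{j+1}^t\|_{\mathrm{op}}\|\epsilon_j\|_\Hilbert$, and (ii) a bound on the cumulative conditional variance. The operator factor is controlled by the standard spectral estimate $\|\CO^{1/2}\bar{\Pi}_{j+1}^t\|_{\mathrm{op}} \lesssim \bigl(\sum_{i>j}\eta_i\bigr)^{-1/2}$, obtained by maximizing $\mu \mapsto \mu^{1/2}\prod_{i>j}(1-\eta_i\mu)$ over the spectrum of $\CO$; this is exactly the device employed in the regression analysis of \cite{Tarres2014}. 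For $\|\epsilon_j\|_\Hilbert$, \Eq{eq:chi_t} combined with \Assumption{ass:kernel_map_upperbound} gives $\|\epsilon_j\|_\Hilbert \lesssim C^2 \|f_{j-1}\|_\infty + C$, and $\|f_{j-1}\|_\infty \leq C\|f_{j-1}\|_\Hilbert$. The $1/\alpha$ prefactor appearing in $B_4, B_5, B_6$ indicates that the necessary uniform control of $\|f_{j-1}\|_\Hilbert$ ultimately rests on the deterministic upper bound $r^\alpha \leq 1/\alpha$ from the $\alpha$-regularization.

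\textbf{Rate calculation and main obstacle.} Plugging in $\eta_j = a\bar{j}^{-\theta}$ and $\lambda_j = a^{-1}\bar{j}^{-\theta}$ (so $ab=1$), one obtains $\sum_{i>j}\eta_i \asymp a(\bar{t}^{1-\theta}-\bar{j}^{1-\theta})$. Telescoping the variance contribution yields the $\sqrt{a}\,\bar{t}^{-\theta/2}\log(2/\delta)$ component, while the maximal-jump contribution, integrated against $\eta_j^2/\sum_{i>j}\eta_i$, produces the $\bar{t}^{-(3\theta-1)/2}$ component and the extra $\sqrt{\log\bar{t}}$ factor. The matching constants $B_4, B_5, B_6$ then fall out of this bookkeeping together with the $1/\alpha$ factor. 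The principal obstacle is the circular dependence between $\epsilon_j$ and $f_{j-1}$: bounding $\mathcal{E}_{\text{sample}}(t)$ requires uniform control of $\|f_j\|_\Hilbert$ along the whole trajectory, yet $f_j$ itself accumulates past stochastic contributions. Closing the loop calls for either a bootstrapping induction or a stopping-time argument on the event $\{\max_{j \leq s}\|f_j\|_\Hilbert \leq R\}$, for which the warm-up condition $t_0^\theta \geq 2+4C^2a$ is precisely what guarantees that $\idH - \eta_j\mathbf{A}_j$ is a contraction from the first step onwards, enabling the induction to close.
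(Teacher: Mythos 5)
Your skeleton is the right one and matches the paper's: $\{\eta_j\bar{\Pi}_{j+1}^t\epsilon_j\}_j$ is a martingale difference for $\{\Xi_j\}_j$, the $\ltwo$ norm is converted to a Hilbert norm through the isometry $\norm{g}_{\ltwo}=\norm{\CO^{1/2}g}_{\Hilbert}$, the deterministic operators $\bar{\Pi}_{j+1}^t$ are controlled spectrally, and Pinelis--Bernstein (\Lemma{lemma:propositionA3}) delivers the high-probability bound. You have also correctly located the crux: $\epsilon_j$ depends on $f_{j-1}$, so one needs trajectory-wide control of the iterates before the concentration inequality can be fed its $M$ and $\sigma^2$.

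The gap is in how you propose to close that loop. Bounding $\norm{\epsilon_j}_{\Hilbert}$ through $\norm{f_{j-1}}_{\Hilbert}$ and running a stopping-time or bootstrap argument on $\{\max_{j\le s}\norm{f_j}_{\Hilbert}\le R\}$ cannot give the stated rate: the iterates track the regularization path $f_{\lambda_j}$, whose Hilbert norm is only controlled as $\norm{f_{\lambda_j}}_{\Hilbert}\le 1/(\alpha\sqrt{\lambda_j})$, so $\norm{f_j}_{\Hilbert}$ grows like $\bar{j}^{(1-\theta)/2}$ and no constant $R$ works, while a growing $R$ injected into the conditional variance destroys the $\bar t^{-\theta/2}$ term. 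The contraction of $\idH-\eta_j\mathbf{A}_j$ guaranteed by the warm-up condition only yields the crude recursion $\norm{f_j}_{\Hilbert}\le(1-\eta_j\lambda_j)\norm{f_{j-1}}_{\Hilbert}+\eta_j C$, i.e.\ $\norm{f_j}_{\Hilbert}=O(C/\lambda_j)$, which is far too weak; it is not what lets the induction close. The paper instead splits $f_{j-1}=r^{\alpha}+g_{j-1}+h_{j-1}$ (\Eq{eq:decomposition_residual_g_t_h_t}) and treats the two residual pieces with different norms: the deterministic drift $g_{j-1}$ enters the conditional variance only through $\norm{g_{j-1}}_{\ltwo}\le 1/\alpha$ (\Lemma{lemma:g_upperbound}, \Eq{eq:L_R_t_g_t_1}), which stays bounded even though its Hilbert norm does not; and the genuinely stochastic part $h_{j-1}$ is shown, by a separate truncated-martingale and last-exit argument occupying \Lemmas{lemma:h_component}--\ref{lemma:upperbound_h_2}, to satisfy the \emph{decaying} high-probability bound $\norm{h_{j-1}}_{\Hilbert}\le B_{t,\delta}(j+t_0)^{1/2-\theta}$. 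That bound is then inserted as an indicator into the summands $\Upsilon_j$ so that Pinelis--Bernstein applies with the stated $M$ and $\sigma^2$; the two stacked applications of the concentration inequality are also what produce the $\log^2(2/\delta)$ and $\sqrt{\log\bar t}$ factors that your sketch attributes to bookkeeping. This auxiliary trajectory bound is the main technical content of the proof and is absent from your proposal.
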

The proof of the last theorem is given in \Sec{subsection:upperboundsnoise}. 

\inlinetitle{Proof of \Theorem{thm:convergence_results}}{}%
\begin{proof}
By putting together the conditions stated in the statement of \Theorem{thm:convergence_results}
and fixing $\theta= \frac{2\sourceparameter}{2\sourceparameter +1}$, $a \geq 4$, $b \leq \frac{1}{4}$ such that $ab=1$ and $t_0^{\theta} \geq 4a C^2+2 $ we can verify that the requirements of \Theorems{thm:TheoremVI.I}-\ref{thm:sample_error} are satisfied: 
\begin{equation}
\begin{aligned}
\norm{f_t-r^{\alpha}}_{\Ltwo} & \leq \mathcal{E}_{\text{init}}(t) + \mathcal{E}_{\text{approx}}(t) + \mathcal{E}_{\text{drift}}(t) + \mathcal{E}_{\text{sample}}(t) \\
& \leq \frac{B_1}{\bar{t}} + \left( (B_2 + B_3) a^{-\sourceparameter} + \sqrt{a} B_4 \log \left( \frac{2}{\delta} \right) \right) \left( \frac{1}{\bar{t}}  \right)^{\frac{\sourceparameter}{2\sourceparameter+1}} \\ & + \left(  B_5 a^{\frac{5}{2}}  +B_6 a^{\frac{7}{2}} \sqrt{\log(\bar{t})} \right)  \frac{\log^2 \left(\frac{2}{\delta} \right)}{\bar{t}^{\frac{4\sourceparameter-1}{4\sourceparameter+2}}} \\
&= \frac{C_1}{\bar{t}} + \left( C_2 a^{-r} + C_3 \sqrt{a} \log \left( \frac{2}{\delta} \right) \right)  \left( \frac{1}{\bar{t}}  \right)^{\frac{\sourceparameter}{2\sourceparameter+1}} +  \left(  C_4 a^{\frac{5}{2}}  +C_5 a^{\frac{7}{2}} \sqrt{\log(\bar{t})} \right)  \frac{\log^2 \left(\frac{2}{\delta} \right)}{\bar{t}^{\frac{4\sourceparameter-1}{4\sourceparameter+2}}}
\end{aligned}
\end{equation}
where $C_1= \frac{2 t_0}{\alpha} \geq  \frac{t_0+1}{\alpha}$, $C_2=B_2+ B_3=  \frac{5\sourceparameter+1}{\sourceparameter(1+\sourceparameter)} \norm{\CO^{(-\sourceparameter)} r^{\alpha}}_{\Ltwo}$, $C_3=B_4$, $C_4=B_5$ and $C_5=B_6$.
\end{proof}

\subsection{Convergence in $\Hilbert$}

The study of the norm in $\Hilbert$ keeps a lot of similarities with the analysis of $\ltwo$, starting with the decomposition of the norm into four terms that will be upper-bounded independently:
\begin{equation}{\label{eq:error_decomposition_Hilbert}}
\begin{aligned}
\norm{f_t-r^{\alpha}}_{\Hilbert} &\leq \norm{f_t-f_{\lambda_t}}_{\Hilbert} + \norm{f_{\lambda_t}-r^{\alpha}}_{\Hilbert} \\
& \leq \norm{f_{\lambda_t}-r^{\alpha} }_{\Hilbert} +  \norm{\Pi_{1}^t res_0}_{\Hilbert}   +\norm{\sum_{j=1}^{t} \Pi_{j}^t  \Delta_{j}}_{\Hilbert} +  \norm{\sum_{j=1}^{t} \eta_j\Pi_{j+1}^t (A_jf_{\lambda_j}-b_j) }_{\Hilbert} \ \ \ \ (\text{\tiny{\Eq{eq:reversed _martingale_decomposition}}}) \\
& = \mathcal{E}'_{\text{init}}(t) +   \mathcal{E}'_{\text{drift}}(t)+ \mathcal{E}'_{\text{approx}}(t) +\mathcal{E}'_{\text{sample}}(t),
\end{aligned}
\end{equation}
where, 
\begin{equation}
\begin{aligned}
 \mathcal{E}'_{\text{init}}(t) &:=\norm{\Pi_{j}^t res_0}_{\Hilbert} 
\ \ \ \ \ \ \ \ \ \ \ \ \  
 \mathcal{E}'_{\text{approx}}(t) :=\norm{f_{\lambda_t}-r^{\alpha} }_{\Hilbert}, \\
\mathcal{E}'_{\text{drift}}(t)&:=\norm{\sum_{j=1}^{t} \Pi_{j}^t  \Delta_j}_{\Hilbert} \ \ \ \ \ \ \ \ \ \  \mathcal{E}'_{\text{sample}}(t) :=\norm{\sum_{j=1}^{t} \eta_j\Pi_{j+1}^t (A_j f_{\lambda_j}-b_j) }_{\Hilbert}  \ \
\end{aligned}
\end{equation}
As in the previous case, we will start by restating the required elements from \cite{Tarres2014} to upper-bound the deterministic terms of \Eq{eq:error_decomposition_Hilbert}. 
\begin{theorem}
{\label{thm:TheoremV.I}} (\textbf{Theorem V.1 in \cite{Tarres2014}}) Let $t_0^{\theta} \geq a (C^2+b)$. Then, for all $t \in \N$, 
\begin{equation}
    \mathcal{E}'_{\text{init}}(t) \leq B'_1 \bar{t}^{-ab},
\end{equation}
 where $B'_1=(t_0+1)^{ab} \norm{f_{\lambda_0}}_{\Hilbert}$.
\end{theorem}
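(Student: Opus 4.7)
The plan is to reduce the Hilbert norm bound to a product of scalar contraction factors, one per stochastic update, and then use the standard logarithmic inequality to convert the product into a polynomial decay. Since the algorithm initializes $f_0 = 0$, we have $\operatorname{res}_0 = -f_{\lambda_0}$ and $\norm{\operatorname{res}_0}_{\Hilbert} = \norm{f_{\lambda_0}}_{\Hilbert}$, so the statement amounts to bounding the operator norm of $\Pi_1^t$.

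First I would establish a deterministic spectral bound on each factor $\idH - \eta_i A_i(x_i, x'_i)$. The operator $A(x_i, x'_i) = (1-\alpha) \dotH{\cdot}{K(x_i,\cdot)} K(x_i,\cdot) + \alpha \dotH{\cdot}{K(x'_i,\cdot)} K(x'_i,\cdot)$ is a sum of two rank-one positive semi-definite operators, so by Assumption \ref{ass:kernel_map_upperbound} we get $0 \preceq A(x_i, x'_i) \preceq ((1-\alpha) K(x_i,x_i) + \alpha K(x'_i,x'_i))\, \idH \preceq C^2\, \idH$. Hence $A_i = A(x_i,x'_i) + \lambda_i \idH$ satisfies $\lambda_i \idH \preceq A_i \preceq (C^2 + \lambda_i)\, \idH$, both almost surely. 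Since $\bar{i} \geq t_0 \geq 1$, we have $\lambda_i \leq b$, and the hypothesis $t_0^{\theta} \geq a(C^2+b)$ yields $\eta_i(C^2 + \lambda_i) \leq \eta_i (C^2+b) \leq (a/t_0^{\theta})(C^2+b) \leq 1$. This guarantees that $\idH - \eta_i A_i$ is self-adjoint and positive semi-definite, and that its operator norm equals one minus its smallest eigenvalue, giving
\begin{equation}
\norm{\idH - \eta_i A_i} \leq 1 - \eta_i \lambda_i = 1 - \frac{ab}{\bar{i}}.
\end{equation}

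Next I would iterate the submultiplicativity of the operator norm along the product defining $\Pi_1^t$ to obtain
\begin{equation}
\mathcal{E}'_{\textup{init}}(t) \leq \norm{f_{\lambda_0}}_{\Hilbert} \prod_{i=1}^{t} \left(1 - \frac{ab}{\bar{i}}\right),
\end{equation}
and then apply $\log(1-x) \leq -x$ for $x \in [0,1)$ together with the integral comparison $\sum_{i=1}^{t} 1/\bar{i} \geq \log\bigl((\bar{t}+1)/(t_0+1)\bigr)$:
\begin{equation}
\prod_{i=1}^{t} \left(1 - \frac{ab}{\bar{i}}\right) \leq \exp\!\left(-ab \sum_{i=1}^{t} \frac{1}{\bar{i}}\right) \leq \left(\frac{t_0+1}{\bar{t}+1}\right)^{ab} \leq \frac{(t_0+1)^{ab}}{\bar{t}^{\,ab}}.
\end{equation}
Combining the two displays yields $\mathcal{E}'_{\textup{init}}(t) \leq B'_1\, \bar{t}^{-ab}$ with $B'_1 = (t_0+1)^{ab} \norm{f_{\lambda_0}}_{\Hilbert}$, as claimed.

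The argument is essentially deterministic and pointwise in the sample path, so there is no real stochastic obstacle; the only point requiring care is the calibration of constants. Specifically, the condition $t_0^{\theta} \geq a(C^2+b)$ is precisely what is needed to guarantee simultaneously that $\idH - \eta_i A_i$ is a contraction (upper spectral bound $\leq 1$) and that its operator norm is controlled by the strong convexity constant $\lambda_i$ (lower spectral bound). Everything else is telescoping and a standard harmonic-sum estimate, and exactly mirrors the deterministic part of the analysis in \cite{Tarres2014}, so the proof specific to the \OLRE setting reduces to verifying that the operators $A_i$ arising from the Pearson-divergence gradient satisfy the required positivity and boundedness conditions.
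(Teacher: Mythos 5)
Your proof is correct, and it follows essentially the standard argument: the paper does not reprove this bound but imports it verbatim as Theorem V.1 of \cite{Tarres2014}, and the spectral-contraction step you use ($\lambda_i \idH \preceq A_i$ together with $\eta_i(C^2+\lambda_i)\leq 1$ giving $\norm{\idH-\eta_i A_i}\leq 1-\eta_i\lambda_i = 1-ab/\bar{i}$, followed by the product-to-exponential and harmonic-sum estimate) is exactly the mechanism the paper itself deploys for the sample-error term in \Eq{eq:random_operator_upperbound}. Your observation that the bound is pointwise in the sample path, so no stochastic argument is needed despite $\Pi_1^t$ being a random operator, is the right justification for why this term can be treated as "deterministic."
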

%
\begin{theorem}
{\label{thm:TheoremV.II}} (\textbf{Theorem V.2 in \cite{Tarres2014}}) For $\sourceparameter \in (\frac{1}{2}, \frac{3}{2} ]$ and $\CO^{(-\sourceparameter)} r^{\alpha} \in \Ltwo$
\begin{equation}
    \mathcal{E}'_{\textup{approx}}(t) \leq B'_2 b^{\sourceparameter-\frac{1}{2}} \bar{t}^{-(\sourceparameter-\frac{1}{2})(1-\theta)},
\end{equation}
where $B'_2=(\sourceparameter-\frac{1}{2})^{-1} \norm{\CO^{(-\sourceparameter)} r^{\alpha}}_{\Ltwo}$.
\end{theorem}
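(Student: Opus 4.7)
The plan is to exploit the closed-form expression for $f_{\lambda_t}$ and reduce the proof to an elementary spectral estimate. Using $f_{\lambda_t} = \mathbf{A}_t^{-1}\mathbf{b}_t = (\CO + \lambda_t \idH)^{-1}\CO r^{\alpha}$, which follows from the computation of $\mathbf{A}_t$ and $\mathbf{b}_t$ earlier in the appendix, the first step is the identity
\begin{equation*}
f_{\lambda_t} - r^{\alpha} = \left[(\CO + \lambda_t \idH)^{-1}\CO - \idH\right] r^{\alpha} = -\lambda_t (\CO + \lambda_t \idH)^{-1} r^{\alpha},
\end{equation*}
which reduces the goal to bounding $\lambda_t \norm{(\CO + \lambda_t \idH)^{-1} r^{\alpha}}_{\Hilbert}$.

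Next I would invoke the smoothness assumption: since $g := \CO^{-\sourceparameter} r^{\alpha} \in \Ltwo$, the functional calculus gives $r^{\alpha} = \CO^{\sourceparameter} g$. Expanding $g = \sum_k c_k \psi_k$ in the orthonormal eigensystem of $\CO$ from \Eq{Lr}, and using that $\CO^{1/2}$ is an isometric isomorphism $\Ltwo \to \Hilbert$ (so that any $h = \sum_k d_k \psi_k$ in $\Hilbert$ satisfies $\norm{h}_{\Hilbert}^2 = \sum_k d_k^2 / \mu_k$), one computes
\begin{equation*}
\norm{f_{\lambda_t} - r^{\alpha}}_{\Hilbert}^2 = \sum_k c_k^2 \, \frac{\lambda_t^2 \, \mu_k^{2\sourceparameter - 1}}{(\mu_k + \lambda_t)^2} \;\le\; \norm{g}_{\Ltwo}^2 \cdot \sup_{\mu > 0} \frac{\lambda_t^2 \mu^{2\sourceparameter - 1}}{(\mu + \lambda_t)^2}.
\end{equation*}

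The remaining ingredient is a scalar estimate. Setting $s := 2\sourceparameter - 1 \in (0, 2]$ and substituting $u = \mu/\lambda_t$ turns the supremum into $\lambda_t^{s} \sup_{u > 0} u^{s}/(u+1)^2$. Elementary calculus gives the interior maximum $s^s(2-s)^{2-s}/4$ at $u^{\star} = s/(2-s)$ for $s < 2$, with the boundary value $1$ obtained as $u \to \infty$ when $s = 2$; verifying the uniform inequality $s^{s+2}(2-s)^{2-s} \le 16$ on $(0,2]$ (tight at $s=2$, vanishing at $s=0$) yields $\sup_{u > 0} u^s/(u+1)^2 \le (2/s)^2$, whence $\norm{f_{\lambda_t} - r^{\alpha}}_{\Hilbert} \le (\sourceparameter - 1/2)^{-1} \lambda_t^{\sourceparameter - 1/2} \norm{g}_{\Ltwo}$. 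Substituting the algorithm's choice $\lambda_t = b \, \bar{t}^{-(1-\theta)}$ (which matches $\lambda_t = (1/a)\bar{t}^{-1/(2\sourceparameter + 1)}$ via $b = 1/a$ and $1 - \theta = 1/(2\sourceparameter + 1)$) and restoring $\norm{g}_{\Ltwo} = \norm{\CO^{-\sourceparameter} r^{\alpha}}_{\Ltwo}$ produces the claimed bound with $B'_2 = (\sourceparameter - 1/2)^{-1}\norm{\CO^{-\sourceparameter} r^{\alpha}}_{\Ltwo}$.

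The main technical obstacle is the scalar sup bound: the exact maximum $s^s(2-s)^{2-s}/4$ is not a clean function of $\sourceparameter - 1/2$, so one deliberately trades tightness for the uniform constant $(\sourceparameter - 1/2)^{-1}$ that is valid on the whole range $\sourceparameter \in (1/2, 3/2]$. The restriction $\sourceparameter \le 3/2$ arises precisely here: for $\sourceparameter > 3/2$ the supremum no longer yields a positive power of $\lambda_t$ (the bound saturates), and the restriction $\sourceparameter > 1/2$ (strict) is similarly forced, consistent with the fact that $\Hilbert$-norm convergence cannot be obtained at the endpoint $\sourceparameter = 1/2$ as noted in \Assumption{ass:smoothness_t_2} and Theorem~A of \cite{Tarres2014}.
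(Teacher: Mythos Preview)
Your proof is correct. The paper does not supply its own proof of this statement; it merely restates Theorem~V.2 of \cite{Tarres2014} as an ingredient and cites the original source, so there is no in-paper argument to compare against. Your spectral-calculus route (write $f_{\lambda_t}-r^\alpha=-\lambda_t(\CO+\lambda_t\idH)^{-1}r^\alpha$, insert the source condition $r^\alpha=\CO^{\sourceparameter}g$, convert to the $\Hilbert$-norm via the isometry $\CO^{1/2}$, and bound the scalar function $\lambda^2\mu^{2\sourceparameter-1}/(\mu+\lambda)^2$) is exactly the standard proof underlying the cited result, and your scalar estimate $s^{s+2}(2-s)^{2-s}\le 16$ on $(0,2]$ cleanly delivers the constant $(\sourceparameter-\tfrac12)^{-1}$.
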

\begin{theorem}
{\label{thm:TheoremV.III}} (\textbf{Theorem V.3 in \cite{Tarres2014}}) Let $t_0^{\theta} \geq \max{(a(C^2+b),1)}$. Then, for $\sourceparameter \in (\frac{1}{2},\frac{3}{2}]$ and $\CO^{(-\sourceparameter)} r^{\alpha} \in \Ltwo$, 
\begin{equation}
\mathcal{E}'_{\textup{drift}}(t) =\begin{cases}
B'_3 b^{\sourceparameter-\frac{1}{2}} \bar{t}^{-(\sourceparameter-\frac{1}{2})(1-\theta)} \ \  \text{if}  \ \ ab > (\sourceparameter-\frac{1}{2})(1-\theta); \\
B'_3 b^{\sourceparameter-\frac{1}{2}} \bar{t}^{-ab} \ \ \ \  \qquad \ \
 \text{if}  \ \ ab < (\sourceparameter-\frac{1}{2})(1-\theta),
\end{cases}
\end{equation}
where $B'_3= \frac{4(1-\theta)}{\abs{ab-(\sourceparameter-\frac{1}{2})(1-\theta)}} \norm{\CO^{(-\sourceparameter)} r^{\alpha}}_{\Ltwo}$.
\end{theorem}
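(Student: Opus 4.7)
The plan is to split the drift sum with the triangle inequality,
\[
\mathcal{E}'_{\textup{drift}}(t)=\Big\Vert\sum_{j=1}^{t}\Pi_{j}^{t}\Delta_{j}\Big\Vert_{\Hilbert}\le \sum_{j=1}^{t}\norm{\Pi_{j}^{t}}_{\mathrm{op}}\,\norm{\Delta_{j}}_{\Hilbert},
\]
and to bound the two factors independently using (i) the positivity of the random operators $A_{i}$ and (ii) the source condition $\CO^{(-\sourceparameter)}r^{\alpha}\in\Ltwo$. Once that is done, the result will follow from a routine series evaluation in which the comparison between $ab$ and $(\sourceparameter-\tfrac{1}{2})(1-\theta)$ produces the two cases.

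First I would control the contraction factor $\norm{\Pi_{j}^{t}}_{\mathrm{op}}$. The operator $A(x,x')$ is self-adjoint positive semi-definite, and by \Assumption{ass:kernel_map_upperbound} its spectral radius is uniformly bounded, $\norm{A(x,x')}_{\mathrm{op}}\le C^{2}$ (indeed, for $v\in\Hilbert$, $\norm{A(x,x')v}_{\Hilbert}\le C^{2}\norm{v}_{\Hilbert}$ by Cauchy--Schwarz and the reproducing property). Hence $A_{i}$ has spectrum in $[\lambda_{i},C^{2}+\lambda_{i}]$, and whenever $\eta_{i}(C^{2}+\lambda_{i})\le 1$ -- which the condition $t_{0}^{\theta}\ge a(C^{2}+b)$ guarantees for every $i\ge 1$ -- we obtain the pathwise bound $\norm{\idH-\eta_{i}A_{i}}_{\mathrm{op}}\le 1-\eta_{i}\lambda_{i}$. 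Multiplying and using $\eta_{i}\lambda_{i}=ab/\bar{i}$ together with $\log(1-u)\le -u$ and a standard integral comparison yields
\[
\norm{\Pi_{j}^{t}}_{\mathrm{op}}\le \prod_{i=j}^{t}(1-\eta_{i}\lambda_{i})\le \exp\!\Big(\!-\sum_{i=j}^{t}\tfrac{ab}{\bar{i}}\Big)\le \Big(\tfrac{\bar{j}}{\bar{t}}\Big)^{ab}.
\]

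Next I would bound $\norm{\Delta_{j}}_{\Hilbert}$ by differentiating along the regularization path. From \Eq{eq:w_t} and the commutation of $\CO$ with $(\CO+\lambda\idH)^{-1}$,
\[
f_{\lambda}=\CO(\CO+\lambda\idH)^{-1}r^{\alpha},\qquad \tfrac{df_{\lambda}}{d\lambda}=-\CO(\CO+\lambda\idH)^{-2}r^{\alpha}.
\]
Writing $r^{\alpha}=\CO^{\sourceparameter}g$ with $g=\CO^{-\sourceparameter}r^{\alpha}\in\ltwo$, and using the isometry $\norm{h}_{\Hilbert}=\norm{\CO^{-1/2}h}_{\ltwo}$ (Proposition 3 in \citep{Dieuleveut2016}), I get
\[
\Big\Vert \tfrac{df_{\lambda}}{d\lambda}\Big\Vert_{\Hilbert}=\norm{\CO^{\sourceparameter+1/2}(\CO+\lambda\idH)^{-2}g}_{\ltwo}\le \Big(\sup_{\mu\ge 0}\tfrac{\mu^{\sourceparameter+1/2}}{(\mu+\lambda)^{2}}\Big)\norm{g}_{\ltwo}.
\]
A one-variable calculus argument gives the key spectral estimate $\sup_{\mu\ge 0}\mu^{\sourceparameter+1/2}/(\mu+\lambda)^{2}\le c_{\sourceparameter}\,\lambda^{\sourceparameter-3/2}$, valid precisely for $\sourceparameter+\tfrac{1}{2}\le 2$, i.e.\ for $\sourceparameter\le\tfrac{3}{2}$. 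Integrating and using $\sourceparameter>\tfrac{1}{2}$ to ensure convergence,
\[
\norm{\Delta_{j}}_{\Hilbert}\le \int_{\lambda_{j}}^{\lambda_{j-1}}\!c_{\sourceparameter}\,\lambda^{\sourceparameter-3/2}\,d\lambda\cdot\norm{\CO^{-\sourceparameter}r^{\alpha}}_{\ltwo}=\tfrac{c_{\sourceparameter}}{\sourceparameter-1/2}\bigl(\lambda_{j-1}^{\sourceparameter-1/2}-\lambda_{j}^{\sourceparameter-1/2}\bigr)\norm{\CO^{-\sourceparameter}r^{\alpha}}_{\ltwo}.
\]
Plugging in $\lambda_{j}=b\,\bar{j}^{-(1-\theta)}$ and applying the mean value theorem to $x\mapsto b^{\sourceparameter-1/2}x^{-(1-\theta)(\sourceparameter-1/2)}$ bounds the bracket by a multiple of $b^{\sourceparameter-1/2}(1-\theta)\,\bar{j}^{-(1-\theta)(\sourceparameter-1/2)-1}$.

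Finally I would combine the two bounds, factor out $\bar{t}^{-ab}$, and evaluate
\[
\sum_{j=1}^{t}\Big(\tfrac{\bar{j}}{\bar{t}}\Big)^{ab}\bar{j}^{-(1-\theta)(\sourceparameter-1/2)-1}=\bar{t}^{-ab}\sum_{j=1}^{t}\bar{j}^{\,ab-(1-\theta)(\sourceparameter-1/2)-1}.
\]
An integral comparison then yields the dichotomy: if $ab>(1-\theta)(\sourceparameter-\tfrac{1}{2})$ the sum is of order $\bar{t}^{\,ab-(1-\theta)(\sourceparameter-1/2)}/[ab-(1-\theta)(\sourceparameter-\tfrac{1}{2})]$, giving $\bar{t}^{-(\sourceparameter-1/2)(1-\theta)}$; if $ab<(1-\theta)(\sourceparameter-\tfrac{1}{2})$ the sum is uniformly bounded, giving $\bar{t}^{-ab}$. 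In both cases the constants combine to $B'_{3}=\tfrac{4(1-\theta)}{|ab-(\sourceparameter-1/2)(1-\theta)|}\norm{\CO^{-\sourceparameter}r^{\alpha}}_{\ltwo}$, matching the statement.

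The main obstacle is the spectral step for $\norm{\Delta_{j}}_{\Hilbert}$: the switch from the $\Hilbert$-norm to an $\ltwo$-norm via the $\CO^{1/2}$ isometry, followed by the sharp $\mu^{\sourceparameter+1/2}/(\mu+\lambda)^{2}$ estimate, is what forces the upper restriction $\sourceparameter\le\tfrac{3}{2}$ and where the lower restriction $\sourceparameter>\tfrac{1}{2}$ enters to make the $\lambda$-integration finite. Everything else (the product bound on $\norm{\Pi_{j}^{t}}_{\mathrm{op}}$ and the final series manipulation) is bookkeeping under the prescribed warm-up size $t_{0}^{\theta}\ge\max\{a(C^{2}+b),1\}$.
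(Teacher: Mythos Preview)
The paper does not prove this theorem; it simply quotes it from \cite{Tarres2014} as one of the deterministic inputs to the analysis (see the paragraph preceding Theorem~\ref{thm:TheoremV.I}). Your plan is correct and is precisely the argument behind the quoted result: the pathwise contraction bound $\norm{\Pi_{j}^{t}}_{\mathrm{op}}\le\prod_{i=j}^{t}(1-\eta_i\lambda_i)\le(\bar{j}/\bar{t})^{ab}$ is exactly how the paper itself controls the random product in \Eq{eq:random_operator_upperbound}, and the spectral estimate $\norm{df_\lambda/d\lambda}_{\Hilbert}\lesssim\lambda^{\sourceparameter-3/2}\norm{\CO^{-\sourceparameter}r^{\alpha}}_{\ltwo}$ via the $\CO^{1/2}$ isometry is the standard way to exploit the source condition on the regularization path.
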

\begin{theorem}{{\label{thm:sample_error_Hilbert}}} Assume that $t_0^{\theta}\geq \min\{ a(C^2+b),b,1\}$, $t_0^{1-\theta} \geq b$ and $ab \neq \theta- \frac{1}{2}$ or $ab \neq \frac{3\theta- 1}{2}$. Then, with probability at least $1-\delta$ ($\delta \in (0,1)$), 
\begin{equation}
\mathcal{E}'_{\textup{sample}}(t) \leq a b^{-\frac{1}{2}} B'_4 \bar{t}^{-\left(ab \wedge  \frac{3\theta - 1}{2} \right)}    + B'_5 a \bar{t}^{-\left( ab \wedge (\theta- \frac{1}{2})\right)}, 
\end{equation}
where 
\begin{equation}
B'_4= \frac{ 2 e}{3 } \left( \frac{C+1}{\alpha} + C\right) \log \left( \frac{2}{\delta}\right)  \ \ B'_5=2 \sqrt{\frac{1}{\abs{ab-\theta+\frac{1}{2}}}} eC    \log\left( \frac{2}{\delta} \right).
\end{equation}
\end{theorem}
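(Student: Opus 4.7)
The plan is to exploit the reversed martingale decomposition established in Section B.4: setting $\xi_j := \Pi_{j+1}^t\,(A_j f_{\lambda_j} - b_j)$, the sequence $\{\eta_j \xi_j\}_{j=1,\ldots,t}$ is a reversed martingale difference with respect to $\{\mathcal{B}_j\}_{j \in \N}$. I would then apply a Pinelis--Bernstein-type concentration inequality for Hilbert-space-valued martingale sums to control $\mathcal{E}'_{\textup{sample}}(t)=\norm{\sum_{j=1}^t \eta_j \xi_j}_{\Hilbert}$ in high probability. This mirrors the skeleton of Tarrès--Yao's Theorem V.4, with the key differences being the new definition of $A_j, b_j$ coming from the PE functional and the fact that we are working with a \emph{pair} of observations at each step.

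\textbf{Key intermediate bounds.} The first step is an almost-sure operator norm bound on the random product $\Pi_{j+1}^t$. Since $A(x_i,x'_i)$ is a positive self-adjoint operator with $\norm{A(x_i,x'_i)} \leq C^2$ (Assumption \ref{ass:kernel_map_upperbound}) and $A_i = A(x_i,x'_i)+\lambda_i \idH$, checking that $\eta_i(C^2+\lambda_i) \leq 1$ under the stated conditions on $t_0, a, b$ gives $\norm{\idH-\eta_i A_i} \leq 1-\eta_i\lambda_i$, so
\begin{equation*}
\norm{\Pi_{j+1}^t} \,\leq\, \prod_{i=j+1}^t (1-\eta_i\lambda_i) \,\leq\, \exp\!\Bigl(-\!\sum_{i=j+1}^t \eta_i\lambda_i\Bigr).
\end{equation*}
With the chosen schedule $\eta_i\lambda_i = (i+t_0)^{-1}$, this behaves like $((j+t_0)/(t+t_0))$, which is the factor driving the final rates. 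The second step is a uniform bound on $\norm{A_j f_{\lambda_j} - b_j}_{\Hilbert}$: expanding this element yields $(1-\alpha) f_{\lambda_j}(x_j)K(x_j,\cdot) + \alpha f_{\lambda_j}(x'_j)K(x'_j,\cdot) + \lambda_j f_{\lambda_j} - K(x'_j,\cdot)$, and using $\abs{f(x)}\leq C\norm{f}_{\Hilbert}$ together with the standard estimate $\norm{f_{\lambda_j}}_{\Hilbert} \leq \norm{r^{\alpha}}_{\Hilbert} \leq \tfrac{1}{\alpha}\text{-dependent constant}$ inherited from $\mathbf{A}_j f_{\lambda_j} = \CO r^{\alpha}$, one obtains $\norm{A_j f_{\lambda_j} - b_j}_{\Hilbert} \leq (C+1)(\norm{r^{\alpha}}_{\Hilbert} \vee 1)$ modulo constants.

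\textbf{Concentration and rate computation.} Combining the two bounds, the summands satisfy $\norm{\eta_j \xi_j}_{\Hilbert} \leq \eta_j \cdot \prod_{i>j}(1-\eta_i\lambda_i) \cdot M$ almost surely (the Pinelis ``$B$'' term), and the conditional variances satisfy an analogous bound with square factors (the Pinelis ``$\sigma^2$'' term). Applying Pinelis's inequality for vector-valued martingales yields, with probability at least $1-\delta$,
\begin{equation*}
\mathcal{E}'_{\textup{sample}}(t) \,\lesssim\, \sqrt{\sigma^2 \log(2/\delta)} + B\log(2/\delta).
\end{equation*}
Plugging in $\eta_i = a (i+t_0)^{-\theta}$, $\lambda_i = b (i+t_0)^{-(1-\theta)}$ and evaluating the two sums via standard integral comparisons,
\begin{equation*}
\sum_{j=1}^t \eta_j^2\, e^{-2 \sum_{i>j}\eta_i\lambda_i} \,\asymp\, \bar{t}^{-(3\theta-1)} \vee \bar{t}^{-2ab}, \quad \max_j \eta_j\, e^{-\sum_{i>j}\eta_i\lambda_i} \,\asymp\, \bar{t}^{-(\theta-1/2)} \vee \bar{t}^{-ab},
\end{equation*}
which reproduces exactly the two exponents $\tfrac{3\theta-1}{2}$ and $\theta-\tfrac{1}{2}$ truncated by $ab$ that appear in the statement, and pins down the constants $B'_4, B'_5$.

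\textbf{Main obstacle.} The delicate point is step two of the rate computation: the random operator $\Pi_{j+1}^t$ depends on $\mathcal{B}_{j+1}$ but the bound I use is a pointwise deterministic one, which throws away cancellations. Consequently the variance calculation needs the precise measurability structure of the reversed filtration so that $\expec[\norm{\xi_j}_{\Hilbert}^2 \mid \mathcal{B}_{j+1}] = \norm{\Pi_{j+1}^t}^2 \cdot \expec[\norm{A_j f_{\lambda_j} - b_j}_{\Hilbert}^2 \mid \mathcal{B}_{j+1}]$ can be used as a genuine conditional variance in Pinelis's inequality. Carefully tracking that the constraint $ab\neq \theta-\tfrac{1}{2}$ and $ab\neq \tfrac{3\theta-1}{2}$ appears exactly where the relevant integrals change sign (so that geometric-series estimates remain bounded) is the other piece requiring attention, and it is what forces the slight exclusion of critical exponents in the hypotheses.
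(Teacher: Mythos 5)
Your overall architecture matches the paper's: the reversed martingale decomposition, the operator-norm bound $\norm{\Pi_{j+1}^t}\leq\prod_{i>j}(1-\eta_i\lambda_i)$, and an application of the Pinelis--Bernstein inequality (Lemma\,\ref{lemma:propositionA3}) to the reversed martingale difference $\{\eta_j\Pi_{j+1}^t(A_jf_{\lambda_j}-b_j)\}_j$. The measurability worry you flag at the end is in fact harmless: $\Pi_{j+1}^t$ is $\mathcal{B}_{j+1}$-measurable while $A_jf_{\lambda_j}-b_j$ depends only on $(x_j,x'_j)$, which is independent of $\mathcal{B}_{j+1}$, so the conditional second moment reduces to the unconditional one; this is exactly how the paper proceeds.

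The genuine gap is in your bound on $\norm{A_jf_{\lambda_j}-b_j}_{\Hilbert}$ and, consequently, in the rate computation. You propose the $j$-uniform almost-sure bound $\norm{A_jf_{\lambda_j}-b_j}_{\Hilbert}\lesssim (C+1)(\norm{r^{\alpha}}_{\Hilbert}\vee 1)$. With that bound the Pinelis ``$M$'' term is $\max_j \eta_j\norm{\Pi_{j+1}^t}\asymp\bar{t}^{-(ab\wedge\theta)}$, which is neither of the exponents in the statement, it produces no $b^{-\frac{1}{2}}$ prefactor, and it makes the constant depend on $\norm{r^{\alpha}}_{\Hilbert}$ (hence on a hypothesis, $r^{\alpha}\in\Hilbert$, that Theorem\,\ref{thm:sample_error_Hilbert} does not itself assume), whereas $B'_4$ depends only on $C$ and $\alpha$. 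The paper instead uses the two distinct bounds of Lemma\,\ref{lemma:norm_gradient_descent}: the almost-sure bound $\norm{A_jf_{\lambda_j}-b_j}_{\Hilbert}\leq\lambda_j^{-\frac{1}{2}}\left(\frac{C+1}{\alpha}+C\right)$, which degrades as $\lambda_j\to 0$ (it rests on $\norm{f_{\lambda_j}}_{\Hilbert}\leq\frac{1}{\alpha\sqrt{\lambda_j}}$, a consequence of the variational characterization of $f_{\lambda}$ and $r^{\alpha}\leq\frac{1}{\alpha}$); the factor $\lambda_j^{-\frac{1}{2}}=b^{-\frac{1}{2}}(t_0+j)^{\frac{1-\theta}{2}}$ is precisely what turns the exponent of the ``$M$'' term into $ab\wedge\frac{3\theta-1}{2}$ and injects $b^{-\frac{1}{2}}$ into $B'_4$. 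The variance term uses the $\lambda$-independent second-moment bound $\expec\left[\norm{A_jf_{\lambda_j}-b_j}^2_{\Hilbert}\right]\leq 2C^2\frac{1+\alpha^2}{\alpha^2}$ (from $\norm{f_{\lambda_j}}_{\Ltwo}\leq\frac{1}{\alpha}$), giving $\sum_j\eta_j^2\norm{\Pi_{j+1}^t}^2\asymp\bar{t}^{-2(ab\wedge(\theta-\frac{1}{2}))}$. Your displayed asymptotics assign $\frac{3\theta-1}{2}$ to the variance sum and $\theta-\frac{1}{2}$ to the max term, i.e.\ the reverse of what either your own intermediate bounds or the correct ones yield (note $\sum_j(t_0+j)^{2ab-2\theta}/\bar{t}^{2ab}\asymp\bar{t}^{1-2\theta}$, not $\bar{t}^{-(3\theta-1)}$). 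As written the rate computation does not follow from the stated ingredients; you need both parts of Lemma\,\ref{lemma:norm_gradient_descent}, one for each Pinelis term.
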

The proof of \Theorem{thm:sample_error_Hilbert} is provided in \Appendix{subsection:upperboundsnoise} and it capitalizes over the properties of the operator $A_t$ and \Lemma{lemma:propositionA3}.

\inlinetitle{Proof of \Theorem{thm:convergence_results_2}}{.}
\begin{proof}
Let us fix $\theta=\frac{2\sourceparameter}{2\sourceparameter+1}$, $a \geq 1$,$b \leq 1$ such that $ab=1$ and we assume $t_0^{\theta} \geq a C^2 +1$. These constants imply that $\theta <1$
and $t_0 \geq 1$ which means $t_0^{1-\theta} \geq b$, $ab = 1 \neq \theta- \frac{1}{2}$, $ab \neq \frac{(3 \theta -1)}{2}$ and $(\sourceparameter-\frac{1}{2})(1-\theta)=\frac{2\sourceparameter-1}{4\sourceparameter+2} \leq 1 $. Then the hypothesis of Theorems \ref{thm:TheoremV.I}-  
\ref{thm:sample_error_Hilbert} are satisfied and we can conclude: 
\begin{equation}
\begin{aligned}
\norm{f_t-r^{\alpha}}_{\Hilbert} & \leq B'_1 \bar{t}^{-ab} + B'_2 b^{\sourceparameter-\frac{1}{2}} \bar{t}^{-(\sourceparameter-\frac{1}{2})(1-\theta)} + B'_3 b^{\sourceparameter-\frac{1}{2}} \bar{t}^{-(\sourceparameter-\frac{1}{2})(1-\theta)} \\
& + a b^{-\frac{1}{2}} B'_4 \bar{t}^{-\left(ab \wedge  \frac{3\theta - 1}{2} \right)}  
+ B'_5 a \bar{t}^{-\left( ab \wedge (\theta- \frac{1}{2})\right)} \\
& \leq   B'_1 \bar{t}^{-ab} + B'_2 a^{\frac{1}{2}-\sourceparameter} \bar{t}^{-\frac{2\sourceparameter-1}{4\sourceparameter+2}} + B'_3 a^{\frac{1}{2}-\sourceparameter}\bar{t}^{-\frac{2\sourceparameter-1}{4\sourceparameter+2}} + a a^{\frac{1}{2}} B'_4 \bar{t}^{-\theta}  \bar{t}^{-\frac{2\sourceparameter-1}{4\sourceparameter+2}}
+ B'_5 a \bar{t}^{-\frac{2\sourceparameter-1}{4\sourceparameter+2}} \\
& = C'_1  \bar{t}^{-ab} + \left[ C'_2 a^{\frac{1}{2}-\sourceparameter}    +  C'_3 a \log \left( \frac{2}{\delta} \right)  \right] \bar{t}^{-\frac{2\sourceparameter-1}{4\sourceparameter+2}},
\end{aligned}
\end{equation}
where:
\begin{equation*}
\begin{aligned}
 B'_1 &= (t_0+1) \norm{f_{\lambda_0}}_{\Hilbert} \leq  \frac{t_0+1}{\alpha \sqrt{\lambda_0}} \ \ \ \ (\text{\tiny{\Eq{eq:upperbound_f_lambda_Hilbert}}}) \\
& \leq  \frac{2 t_0 (t_0)^{\frac{\theta-1}{2}}}{\alpha b^{\frac{1}{2}}} = \frac{2 t_0^{\frac{4\sourceparameter+1}{4\sourceparameter+2}}}{\alpha b^{\frac{1}{2}}} :=   C'_1 \\
C'_2 &:=  B'_2 +B'_3 = \left( (\sourceparameter-\frac{1}{2})^{-1}  + \frac{4(1-\theta)}{\abs{ab-(\sourceparameter-\frac{1}{2})(1-\theta)}} \right) \norm{\CO^{(-\sourceparameter)} r^{\alpha}}_{\Ltwo} \\
&= \left(\frac{2}{2\sourceparameter-1}+ \frac{8}{2\sourceparameter+3} \right)  \norm{\CO^{(-\sourceparameter)} r^{\alpha}}_{\Ltwo} = \frac{20\sourceparameter - 2}{(2\sourceparameter-1)(2\sourceparameter+3)} \norm{\CO^{(-\sourceparameter)} r^{\alpha}}_{\Ltwo}
\end{aligned}
\end{equation*}
\begin{equation*}
\begin{aligned}
\sqrt{a} B'_4 \bar{t}_0^{-\theta} + B'_5 & \leq  \frac{ 2 e}{3 } \left( \frac{1}{\alpha } +  \frac{1}{\alpha C}+ 1\right) \log \left( \frac{2}{\delta}\right) +2 \sqrt{\frac{1}{\abs{ab-\theta+\frac{1}{2}}}} eC    \log\left( \frac{2}{\delta} \right) (\text{\tiny{The fact $t_0^{\theta} \geq aC^2+1$ implies $\sqrt{a} t_0^{-\frac{\theta}{2}} \leq \frac{1}{C}$}}) \\
& \leq e \log \left( \frac{2}{\delta} \right) \left(  3 C + \frac{2}{3} \left( \frac{C+1}{C \alpha} +1 \right) \right) \leq  2  \log \left( \frac{2}{\delta} \right) \left( \frac{9C^2\alpha+ C(2+\alpha) + 2 }{3 C \alpha}\right) \\ &
\leq 2  \log \left( \frac{2}{\delta} \right) \frac{(3C+\sqrt{2})^2}{3 C \alpha} \leq 6 \log \left( \frac{2}{\delta} \right) \frac{(C+1)^2}{C \alpha}  \leq  
C'_3 \log \left( \frac{2}{\delta} \right).
\end{aligned}
\end{equation*}
\end{proof}

\subsection{Upperbounds on the noise terms}{\label{subsection:upperboundsnoise}}

The goal of this section is to detail the upper-bounds on the noise terms presented in Theorems  \ref{thm:sample_error} and \ref{thm:sample_error_Hilbert}.

For the remainder of the discussion we will fix the step-size and the regularization constant sequence as: 
\begin{equation}
\eta_t = \frac{a}{(t+t_0)^{\theta}}  = \frac{a}{\bar{t}^{\theta}} \ \  \ \   \lambda_t= \frac{b}{(t+t_0)^{1-\theta}}=\frac{b}{\bar{t}^{1-\theta}},\ \ \ \  \text{ for some } \ \ \ \  \theta \in [0,1], t_0>0.
\end{equation}
Let us begin with the definition of the following stochastic processes: 
\begin{equation}{\label{eq:LR_terms}}
\begin{aligned}
\LP_t &=  \dotH{\cdot}{K(x_t,\cdot)} K(x_t,\cdot), \qquad\qquad\qquad\qquad\qquad\qquad\qquad  \mathbf{L}= \expec \left[ \LP_t   \,|\, \Xi_{t-1} \right] \\
\LQ_t &= \dotH{\cdot}{K(x'_t,\cdot)} K(x'_t,\cdot), \qquad\qquad\qquad\qquad\qquad\qquad\qquad  \mathbf{R}= \expec \left[ \LQ_t   \,|\, \Xi_{t-1} \right] \\
LR_t &= (1-\alpha) \dotH{\cdot}{K(x_t,\cdot)} K(x_t,\cdot)+ \alpha \dotH{\cdot}{K(x'_t,\cdot)} K(x'_t,\cdot) \ \ \mathbf{LR}=  \expec \left[  (1-\alpha) \LP_t  + \alpha \LQ_t  \,|\, \Xi_{t-1} \right].
\end{aligned}
\end{equation}
Notice that the operator $\mathbf{LR}$  coincides with the covariance operator for $f \in \Xi_{t-1} $ (\Expr{eq:covariance_operator}): 
\begin{equation}{\label{eq:LR_COV}}
\mathbf{LR}(f) = \expec \left[  (1-\alpha) f(x_t) K(x_t,\cdot) + \alpha  f(x'_t) K(x'_t,\cdot)  \,|\, \Xi_{t-1} \right] =  \ExpecAlpha{f(y) K(y,\cdot)} = \CO(f)
\end{equation}

\inlinetitle{Upper-bound for $\mathcal{E}_{\text{sample}}(t)=\norm{\sum_{j=s+1}^{t} \eta_j\bar{\Pi}_{j+1}^t \epsilon_j }_{\Ltwo}$}{.}

We can rewrite the residual between the approximation at time $f_t$ and the relative likelihood-ratio $r^{\alpha}$ in terms of the stochastic processes defined in \ref{eq:LR_terms}: 
\begin{equation}
\begin{aligned}
f_t- r^{\alpha} & = 
\left[ \idH - \eta_t \left((1-\alpha)\LP_t+ \alpha\LQ_t + \lambda_t \idH \right) \right] \left( f_{t-1}  \right) + \eta_t  K(x'_t,\cdot) - r^{\alpha}   \\
& = \left[ \idH - \eta_t \left((1-\alpha)\LP_t+ \alpha\LQ_t + \lambda_t \idH \right) \right] \left( f_{t-1} - r^{\alpha} \right) + \eta_t \left( K(x'_t,\cdot) - \left[(1-\alpha)\LP_t+\alpha \LQ_t \right] r^{\alpha}  \right) - \eta_t \lambda_t r^{\alpha},
\end{aligned}
\end{equation}
%
Let us define the sequences $\{g_t\}_{t \in \N}$, $\{h_t\}_{t \in \N}$: 
\begin{equation*}
g_0= -r^{\alpha}  \ \ \ \ h_0= 0,
\end{equation*}
and 
\begin{equation}{\label{eq:g_t_h_t}}
\begin{aligned}
g_t &= \left[ \idH - \eta_t \left(\ELPQ+ \lambda_t \idH \right) \right] g_{t-1} - \eta_t \lambda_t r^{\alpha} \\ 
h_t &= \left[ \idH - \eta_t \left(LR_t  + \lambda_t \idH \right) \right] h_{t-1} + \eta_t \left( K(x'_t,\cdot) - LR_t r^{\alpha} \right) + \eta_t  \left[ \ELPQ -  LR_t \right]  g_{t-1}. \\
\end{aligned}
\end{equation}
%
By induction over \Expr{eq:g_t_h_t} we can verify:
\begin{equation}{\label{eq:decomposition_residual_g_t_h_t}}
f_t-r^{\alpha}= g_t + h_t.
\end{equation}
Notice that $\{g_t\}_{t \in \N}$ is a deterministic sequence, while $\{h_t\}_{t \in \N}$ is random. 

We can use the aforementioned variables to upperbound the Hilbert norm of the noise term as follows: 
\begin{equation}{\label{eq:norm_noise}}
\begin{aligned}
\expec\left[\norm{\epsilon_t}^2_{\Hilbert}  \,|\, \Xi_{t-1}\right]
&= \expec \left[ {\norm{ K(x'_t,\cdot) - (1-\alpha) L_t f_{t-1} - \alpha R_t f_{t-1} - \left( \CO r^{\alpha} - \CO f_{t-1} \right)}^2_{\Hilbert}  \,|\, \Xi_{t-1}} \right] \ \ \ \ (\text{\tiny{\Eq{eq:chi_t}}}) \\
& \leq  \expec \left[ {\norm{ K(x'_t,\cdot) - (1-\alpha) L_t f_{t-1} - \alpha R_t f_{t-1} }^2_{\Hilbert} \,|\, \Xi_{t-1}} \right]  (\text{\tiny{After developing the norm and using \Eq{eq:cov_ra} and \Eq{eq:LR_COV} }}) \\
& = \expec \left[ {\norm{ K(x'_t,\cdot) - (1-\alpha) L_t \left( r^{\alpha} + g_{t-1} + h_{t-1} \right) - \alpha R_t \left( r^{\alpha} + g_{t-1} + h_{t-1} \right) }^2_{\Hilbert} \,|\, \Xi_{t-1}} \right] (\text{\tiny{\Eq{eq:decomposition_residual_g_t_h_t}}}) \\
& \leq 3 \Bigg[ \expec \left[  \norm{K(x'_t,\cdot) - \left[ (1-\alpha) L_t  + \alpha R_t \right] r^{\alpha} }^2_{\Hilbert}   \,|\, \Xi_{t-1} \right] +  \expec \left[  \norm{ \left[ (1-\alpha) L_t + \alpha R_t \right] g_{t-1} }^2_{\Hilbert}   \,|\, \Xi_{t-1} \right] \\
& +  \expec \left[  \norm{ \left[ (1-\alpha) L_t + \alpha R_t \right] h_{t-1} }^2_{\Hilbert}   \,|\, \Xi_{t-1} \right] 
\Bigg] \ \ \ \  (\text{\tiny{Inequality $2 \dotH{a}{b} \leq  \norm{ a}^2_{\Hilbert} +  \norm{b}^2_{\Hilbert}  $}}).
\end{aligned}
\end{equation}
For the rest of this section, we will focus on  upperbound each of the terms in \Eq{eq:norm_noise}.  

Let us start by analyzing the deterministic sequence in  $\{g_t\}_{t \in \N}$. We rewrite the following inequality shown in 
\cite{Tarres2014} (Lemma B.3):
\begin{lemma}{\label{lemma:g_upperbound}}
Assume $t_0^{\theta} \geq a (C^2+b)$. Then, for all $t \in \N$, 
\begin{enumerate}
    \item $\norm{g_{t}}_{\Ltwo} \leq \frac{1}{\alpha}$
    \item $\norm{g_t+r^{\alpha}}_{\Hilbert} \leq \frac{3}{\alpha\sqrt{\lambda_t}}$.
\end{enumerate}
\end{lemma}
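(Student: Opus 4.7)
I would prove both inequalities by induction on $t$, leveraging spectral calculus for the self-adjoint operator $\mathbf{A}_t = \CO + \lambda_t \idH$ and the contractivity bought by the step-size condition. The preliminary observation is that $t_0^\theta \geq a(C^2+b)$ forces $\eta_t(C^2+\lambda_t)\le 1$ for every $t\ge 1$; combined with $\|\CO\|_{\Hilbert\to\Hilbert}\leq C^2$ (which is immediate from $\sup_x K(x,x)\leq C^2$ together with \Eq{eq:dot_covariance}), this yields
\begin{equation*}
\sigma(\idH - \eta_t \mathbf{A}_t)\subset[\,1-\eta_t(C^2+\lambda_t),\;1-\eta_t\lambda_t\,]\subset[0,\,1-\eta_t\lambda_t].
\end{equation*}
Hence $\idH - \eta_t \mathbf{A}_t$ is a positive contraction with operator norm at most $1-\eta_t\lambda_t$ both on $\Hilbert$ and on $\ltwo$.

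\textbf{Part (1).} The base case is $\|g_0\|_{\ltwo} = \|r^\alpha\|_{\ltwo}\leq 1/\alpha$, which follows from the pointwise bound $r^\alpha\leq 1/\alpha$ of \Sec{sec:notions_LRE}. For the inductive step, taking the $\ltwo$-norm in the recursion $g_t = (\idH - \eta_t \mathbf{A}_t) g_{t-1} - \eta_t \lambda_t r^\alpha$ and applying contractivity together with the triangle inequality gives $\|g_t\|_{\ltwo} \leq (1-\eta_t\lambda_t)/\alpha + \eta_t\lambda_t/\alpha = 1/\alpha$, closing the induction.

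\textbf{Part (2).} Introduce $u_t := g_t + r^\alpha$, so $u_0 = 0$ and a short calculation turns the recursion into $u_t = (\idH - \eta_t \mathbf{A}_t) u_{t-1} + \eta_t \CO r^\alpha$. Since $\mathbf{A}_t f_{\lambda_t} = \mathbf{b}_t = \CO r^\alpha$ by \Eq{eq:w_t} and \Eq{eq:cov_ra}, subtracting $f_{\lambda_t}$ yields the clean identity $u_t - f_{\lambda_t} = (\idH - \eta_t \mathbf{A}_t)(u_{t-1} - f_{\lambda_t})$. Splitting $u_{t-1}-f_{\lambda_t} = (u_{t-1}-f_{\lambda_{t-1}}) + (f_{\lambda_{t-1}}-f_{\lambda_t})$ and taking the $\Hilbert$-norm gives the inductive bound
\begin{equation*}
\|u_t - f_{\lambda_t}\|_\Hilbert \leq (1-\eta_t\lambda_t)\bigl(\|u_{t-1} - f_{\lambda_{t-1}}\|_\Hilbert + \|f_{\lambda_{t-1}} - f_{\lambda_t}\|_\Hilbert\bigr).
\end{equation*}
A direct spectral estimate then gives $\|f_{\lambda_t}\|_\Hilbert \leq \|r^\alpha\|_{\ltwo}/(2\sqrt{\lambda_t}) \leq 1/(2\alpha\sqrt{\lambda_t})$, using $\sup_{\mu\geq 0}\sqrt{\mu}/(\mu+\lambda_t)\leq 1/(2\sqrt{\lambda_t})$. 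Combining with the triangle inequality $\|u_t\|_\Hilbert \leq \|u_t-f_{\lambda_t}\|_\Hilbert + \|f_{\lambda_t}\|_\Hilbert$ then reduces the claim to controlling the accumulated drift.

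\textbf{Main obstacle.} The delicate step is controlling the regularization-path drift $\|f_{\lambda_{t-1}} - f_{\lambda_t}\|_\Hilbert$ along the iteration in a way that closes the induction with the explicit constant $3/(\alpha\sqrt{\lambda_t})$. Using the resolvent identity $f_{\lambda_{t-1}} - f_{\lambda_t} = (\lambda_{t-1} - \lambda_t)\,\mathbf{A}_{t-1}^{-1} f_{\lambda_t}$, another spectral estimate of the form $\sup_{\mu}\sqrt{\mu}/((\mu+\lambda_{t-1})(\mu+\lambda_t)) \lesssim 1/\lambda_t^{3/2}$, together with the prescribed decay $\lambda_t=b/\bar{t}^{1-\theta}$ giving $\lambda_{t-1}-\lambda_t = O(\lambda_t/t)$, I would unroll the recursion on $\|u_t-f_{\lambda_t}\|_\Hilbert$ and exploit the telescoping structure of the $(1-\eta_j\lambda_j)$ factors. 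The summation of the drift contributions should produce an additive $O(1/(\alpha\sqrt{\lambda_t}))$ term; adding the $1/(2\alpha\sqrt{\lambda_t})$ contribution from $\|f_{\lambda_t}\|_\Hilbert$ should then yield the overall constant $3$ asserted in the lemma, as in \cite{Tarres2014}.
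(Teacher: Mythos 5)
Your proposal cannot be checked against a proof in the paper, because the paper gives none: Lemma~\ref{lemma:g_upperbound} is imported verbatim from Lemma~B.3 of \cite{Tarres2014}, on the grounds that $\{g_t\}$ is a purely deterministic sequence whose recursion is formally identical to the one in the regression setting (with $\CO$, $r^{\alpha}$, and the bound $\norm{r^{\alpha}}_{\Ltwo}\leq \frac{1}{\alpha}$ replacing their regression counterparts). What you wrote is essentially a reconstruction of that cited argument, and it is sound. Part~(1) is complete and correct: $t_0^{\theta}\geq a(C^2+b)$ gives $\eta_t(C^2+\lambda_t)\leq 1$, hence $\norm{\idH-\eta_t\mathbf{A}_t}\leq 1-\eta_t\lambda_t$ on $\ltwo$, and the convex combination $(1-\eta_t\lambda_t)\frac{1}{\alpha}+\eta_t\lambda_t\frac{1}{\alpha}=\frac{1}{\alpha}$ closes the induction from $\norm{g_0}_{\Ltwo}=\norm{r^{\alpha}}_{\Ltwo}\leq\frac{1}{\alpha}$. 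Part~(2) identifies the right structure (compare $u_t=g_t+r^{\alpha}$ to the regularization path via $\mathbf{A}_tf_{\lambda_t}=\mathbf{b}_t$, yielding a pure contraction plus drift) but stops short of executing the summation you flag as the main obstacle. For the record, it does close, and it must be closed \emph{without} invoking \Theorem{thm:TheoremV.III}, whose drift bound assumes the source condition $\sourceparameter>\frac{1}{2}$ that is not available here: using only $\norm{r^{\alpha}}_{\Ltwo}\leq\frac{1}{\alpha}$, the spectral bound $\sup_{\mu}\sqrt{\mu}/\bigl((\mu+\lambda_{j-1})(\mu+\lambda_j)\bigr)\leq 1/(2\lambda_{j-1}\sqrt{\lambda_j})$, and $1-\lambda_j/\lambda_{j-1}\leq(1-\theta)/\bar{j}$, one gets $\norm{f_{\lambda_j}-f_{\lambda_{j-1}}}_{\Hilbert}\leq\frac{1-\theta}{2\alpha}\,\bar{j}^{-1}\lambda_j^{-1/2}$; weighting by $\prod_{i=j}^{t}(1-\eta_i\lambda_i)\leq(\bar{j}/\bar{t})^{ab}$ and comparing the sum to an integral bounds the accumulated drift by $\frac{1-\theta}{2ab+1-\theta}\cdot\frac{1}{\alpha\sqrt{\lambda_t}}\leq\frac{1}{\alpha\sqrt{\lambda_t}}$. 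Adding the initialization term $\prod_{i=1}^{t}(1-\eta_i\lambda_i)\norm{f_{\lambda_0}}_{\Hilbert}\leq\frac{1}{2\alpha\sqrt{\lambda_t}}$ and your estimate $\norm{f_{\lambda_t}}_{\Hilbert}\leq\frac{1}{2\alpha\sqrt{\lambda_t}}$ gives a total of $\frac{2}{\alpha\sqrt{\lambda_t}}$, comfortably under the stated constant $3$.
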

As a  consequence of this result we can easily verify the following inequality: 
\begin{lemma}{\label{lemma:LR_g_t}}
   $g_{t-1}$ satisfies the following inequalities:
\begin{equation}
 \norm{\ELPQ g_{t-1}}^2_{\Hilbert} \leq \frac{C^2}{\alpha^2}. 
\end{equation}
\end{lemma}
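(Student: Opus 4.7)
} The plan is to exploit Eq.~\ref{eq:LR_COV}, which identifies the deterministic operator $\ELPQ$ with the covariance operator $\CO$ on elements of $\Hilbert$. Since $g_{t-1}$ is constructed deterministically in Eq.~\ref{eq:g_t_h_t} starting from $g_0 = -r^{\alpha} \in \Hilbert$ and applying bounded linear operations together with the element $r^{\alpha} \in \Hilbert$ (guaranteed by Assumption~\ref{ass:smoothness_t}), we have $g_{t-1} \in \Hilbert$, so $\ELPQ g_{t-1} = \CO g_{t-1} = \ExpecAlpha{g_{t-1}(y) K(y,\cdot)}$, where the expectation is understood as a Bochner integral in $\Hilbert$.

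Next, I would apply Jensen's inequality for the Bochner integral, $\norm{\Exp[X]}_{\Hilbert} \leq \Exp[\norm{X}_{\Hilbert}]$, to the $\Hilbert$-valued random element $X = g_{t-1}(y) K(y,\cdot)$, giving
\begin{equation*}
\norm{\CO g_{t-1}}_{\Hilbert} \leq \ExpecAlpha{\abs{g_{t-1}(y)}\, \norm{K(y,\cdot)}_{\Hilbert}}.
\end{equation*}
By Assumption~\ref{ass:kernel_map_upperbound}, $\norm{K(y,\cdot)}_{\Hilbert} = \sqrt{K(y,y)} \leq C$ uniformly, so $C$ can be factored out of the expectation.

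Then a single Cauchy--Schwarz step in $\ltwo$ gives $\ExpecAlpha{\abs{g_{t-1}(y)}} \leq \sqrt{\ExpecAlpha{g_{t-1}(y)^2}} = \norm{g_{t-1}}_{\ltwo}$, and Lemma~\ref{lemma:g_upperbound}(1) bounds this by $1/\alpha$. Chaining the three estimates yields $\norm{\CO g_{t-1}}_{\Hilbert} \leq C/\alpha$, and squaring produces the claim. The argument has no real obstacle; the only point that deserves care is verifying that $\ELPQ g_{t-1}$ equals $\CO g_{t-1}$, which requires $g_{t-1}\in\Hilbert$ so that the reproducing property underlying Eq.~\ref{eq:LR_COV} is applicable.
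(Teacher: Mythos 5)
Your proof is correct and follows essentially the same route as the paper's: both identify $\ELPQ g_{t-1}$ with $\CO g_{t-1}$, invoke the kernel bound $\sqrt{K(y,y)}\leq C$ from \Assumption{ass:kernel_map_upperbound}, and finish with $\ExpecAlpha{\abs{g_{t-1}(y)}}\leq \norm{g_{t-1}}_{\ltwo}\leq \frac{1}{\alpha}$ from \Lemma{lemma:g_upperbound}. The only cosmetic difference is that you bound the norm directly via the triangle inequality for the Bochner integral, whereas the paper expands the squared norm as the double integral $\ExpecAlpha{\int K(x,y)\, g_{t-1}(x)\, g_{t-1}(y)\, dP^{\alpha}(x)}$ before applying the same estimates.
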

\begin{proof}
As $g_{t-1}$ is $\Xi_{t-1}-$ measurable we have that $\ELPQ(g_{t-1})=\CO g_{t-1}$ (see \ref{eq:LR_COV}),then: 
\begin{equation*}
\begin{aligned}
\norm{\ELPQ(g_{t-1})}^2_{\Hilbert} = \dotH{\CO(g_{t-1})}{\CO(g_{t-1})} &=\ExpecAlpha{\CO(g_{t-1})(y)g_{t-1}(y)} \ \ \ \ (\text{\tiny{\Eq{eq:dot_covariance} }}) \\
&=\ExpecAlpha{ \int K(x,y) g_{t-1}(x) g_{t-1}(y)  dP^{\alpha}(x) } \ \ \ \ (\text{\tiny{\Eq{eq:covariance_operator} }})  \\ 
& \leq C^2 \left( \ExpecAlpha{g_{t-1}(y)} \right)^2 \ \ \ \ (\text{\tiny{\Assumption{ass:kernel_map_upperbound} }})\\
& \leq C^2 \ExpecAlpha{g_{t-1}^2(y)} \ \ \ \ (\text{\tiny{Jensen's inequality}}) \\
& \leq \frac{C^2}{\alpha^2} \ \ \ \ (\text{\tiny{\Lemma{lemma:g_upperbound} }}).\\
\end{aligned}
\end{equation*}
\end{proof}
Now, let us continue with the random sequence $\{h_t\}_{t \in \N}$. We will start by defining the following operators for $t \in \N$ and $M_t \in \R_{+} \cup \{+\infty\}$ which will allow us to upper-bound the norm of $h_t$ with respect to a random variable with a bounded variance: 
\begin{equation}{\label{eq:bar_Lt_Rt}}
\begin{aligned}
 \overline{L}_t &= \Ind{ \abs{h_{t-1}(x_t)} \leq M_t } L_t \ \ \ \ \ \ \ \ \ \  \overline{\ELP}= \Expec{\overline{L}_t \,|\, \Xi_{t-1}}  \\
\underline{L}_t & = \Ind{ \abs{h_{t-1}(x_t)} \geq M_t } L_t   \ \ \ \ \ \ \ \ \ \ \underline{\ELP}= \Expec{\underline{L}_t   \,|\, \Xi_{t-1}} \\ 
 \overline{R}_t &= \Ind{ \abs{h_{t-1}(x'_t)} \leq M_t } R_t \ \ \ \ \ \ \ \  \ \  \overline{\ELQ}= \Expec{\overline{R}_t   \,|\, \Xi_{t-1}}   \\ 
\underline{R}_t &= \Ind{ \abs{h_{t-1}(x'_t)} \geq M_t } R_t  \ \ \ \ \ \ \ \  \ \ \underline{\ELQ}= \Expec{\underline{R}_t   \,|\, \Xi_{t-1}} \\
\end{aligned}
\end{equation}
Notice that:
\begin{equation}{\label{eq:Rt_Lt}}
L_t=\underline{L}_t+\overline{L}_t \ \ \ \ R_t=\underline{R}_t+\overline{R}_t.
\end{equation}
For $t \in \N$, define the following variables: 
\begin{equation}{\label{eq:barh_t_k_t}}
\begin{aligned}
     \overline{h}_t &:=  \left[ \idH - \eta_t \left( (1-\alpha) \overline{L}_t+ \alpha \overline{R}_t  + \lambda_t \idH \right) \right] h_{t-1} + \eta_t \left( K(x'_t,\cdot) - LR_t r^{\alpha} \right) + \eta_t  \left( \mathbf{LR} - LR_t \right)  g_{t-1} \\ & = h_t + \eta_t \left[ (1-\alpha) \underline{L}_t  + \alpha \underline{R}_t  \right] h_{t-1} \ \ \ \ (\text{\tiny{\Eq{eq:g_t_h_t}} and \Eq{eq:Rt_Lt} })  
     \\
k_t &:= \overline{h}_t - (1-\eta_t \lambda_t) h_{t-1} \\
&= h_t - \left[ \idH- \eta_t \left( (1-\alpha) \underline{L}_t  + \alpha \underline{R}_t  + \lambda_t \idH \right) \right] h_{t-1} \\   &= \eta_t \left[ -\left[ (1-\alpha)  \overline{L}_t  + \alpha \overline{R}_t \right] h_{t-1}
 + K(x'_t,\cdot) + \ELPQ(g_{t-1}) - LR_t \left( r^{\alpha} + g_{t-1} \right)
\right] \ \ \ \ (\text{\tiny{\Eq{eq:g_t_h_t}} and \Eq{eq:Rt_Lt} }).  
\end{aligned}
\end{equation}
%
\begin{lemma}{\label{lemma:h_component}}
Assume $t_0^{\theta} \geq 2a \left(b + 2 C^2\right)$. For all $t \in  \N$, $M_t \in \R_{+} \cup \{+\infty\}$, we have: 
\begin{equation}
\expec \left[\norm{\overline{h}_{t}}^2_{\Hilbert} \,|\, \Xi_{t-1} \right] \leq (1 - \eta_t \lambda_t)^2 \norm{h_{t-1}}^2_{\Hilbert} + 2 C^2 \eta_t^2 \left( \frac{2+\alpha^2 }{\alpha^2} \right).
\end{equation}
In particular, assume that $\frac{1}{2} \leq \theta \leq 1$ and $t_0^{\theta} \geq \max\{ 2ab,2\gamma, \gamma + \frac{2\theta-1}{\gamma} \}$ where  $\gamma= ab- (\theta-\frac{1}{2}) >0$, and fix  $B_1= a C \sqrt{2 \left( \frac{2+\alpha^2}{\alpha^2 \gamma} \right)}$. Then $\norm{h_{t-1}}_{\Hilbert} \geq  B_1 \bar{t}^{\frac{1}{2}- \theta}$ implies: 
\begin{equation}
\bar{t}^{\theta-\frac{1}{2}}\expec \left[\norm{\overline{h}_{t}}_{\Hilbert}  \,|\, \Xi_{t-1} \right] \leq (\bar{t}-1)^{\theta-\frac{1}{2}} \norm{h_{t-1}}_{\Hilbert}.
\end{equation}
\end{lemma}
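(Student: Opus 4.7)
My plan is to split the lemma into its two assertions and handle them in sequence. For the first inequality, the strategy is to rewrite $\overline{h}_t$ as a deterministic (given $\Xi_{t-1}$) contraction of $h_{t-1}$ plus a conditionally mean-zero perturbation, so that the cross term in the squared norm vanishes under the conditional expectation. Specifically, I would introduce the $\Xi_{t-1}$-measurable self-adjoint operator
\[
\tilde{A}_t := \idH - \eta_t\bigl[(1-\alpha)\overline{\ELP} + \alpha\,\overline{\ELQ} + \lambda_t\,\idH\bigr],
\]
add and subtract $\eta_t[(1-\alpha)\overline{\ELP} + \alpha \overline{\ELQ}]h_{t-1}$ in the definition \eqref{eq:barh_t_k_t} of $\overline{h}_t$, and obtain the decomposition $\overline{h}_t = \tilde{A}_t h_{t-1} + \eta_t \tilde{W}_t$, where
\[
\tilde{W}_t = \bigl[(1-\alpha)(\overline{\ELP}-\overline{L}_t) + \alpha(\overline{\ELQ}-\overline{R}_t)\bigr] h_{t-1} + \bigl(K(x'_t,\cdot)-LR_t r^{\alpha}\bigr) + \bigl(\mathbf{LR}-LR_t\bigr)g_{t-1}.
\]
Each of the three pieces of $\tilde{W}_t$ is centered: the first by definition of $\overline{\ELP}, \overline{\ELQ}$, the second thanks to $\expec[K(x'_t,\cdot)\mid\Xi_{t-1}] = \CO r^{\alpha} = \mathbf{LR} r^{\alpha}$ (equation \eqref{eq:cov_ra}), and the third trivially. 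Hence $\expec[\tilde{W}_t\mid\Xi_{t-1}] = 0$ and
\[
\expec\!\bigl[\norm{\overline{h}_t}^2_{\Hilbert}\,\big|\,\Xi_{t-1}\bigr] = \norm{\tilde{A}_t h_{t-1}}^2_{\Hilbert} + \eta_t^2\,\expec\!\bigl[\norm{\tilde{W}_t}^2_{\Hilbert}\,\big|\,\Xi_{t-1}\bigr].
\]

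The operator bound $\norm{\tilde{A}_t} \leq 1 - \eta_t\lambda_t$ follows from the fact that $(1-\alpha)\overline{\ELP}+\alpha\overline{\ELQ}$ is positive semi-definite with norm at most $C^2$ (since $\norm{L_t}, \norm{R_t} \leq K(x,x) \leq C^2$ by \Assumption{ass:kernel_map_upperbound}), so $\tilde{A}_t$ has eigenvalues in $[1-\eta_t\lambda_t - \eta_t C^2,\; 1-\eta_t\lambda_t]$; the assumption $t_0^{\theta} \geq 2a(b+2C^2)$ forces $\eta_t(C^2+2\lambda_t) \leq 2$, which is exactly what is needed for the operator norm to equal $1-\eta_t\lambda_t$. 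The variance bound $\expec[\norm{\tilde{W}_t}^2_{\Hilbert}\mid\Xi_{t-1}] \leq 2C^2(2+\alpha^2)/\alpha^2$ is obtained by bounding each centered piece of $\tilde{W}_t$ via the kernel bound $\norm{K(x,\cdot)}_{\Hilbert}\leq C$, the uniform bound $r^{\alpha}\leq 1/\alpha$ (exploiting the cancellation $(1-\alpha r^{\alpha})+(1-\alpha)r^{\alpha}\leq 1/\alpha$ to contract $\norm{K(x'_t,\cdot)-LR_t r^{\alpha}}^2$ into $C^2/\alpha^2$), and the $\ltwo$-bound $\norm{g_{t-1}}_{\ltwo}\leq 1/\alpha$ from \Lemma{lemma:g_upperbound} (combined with the identity $(1-\alpha)^2 \Expecn{\cdot} + \alpha^2 \Expeca{\cdot} \leq \Expecalpha{\cdot}$) to control the $g_{t-1}$ piece. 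The main bookkeeping challenge here is to obtain the precise constant $2(2+\alpha^2)/\alpha^2$; I expect this to require using a sharp triangle inequality $\norm{aK(x,\cdot)+bK(y,\cdot)}\leq (|a|+|b|)C$ rather than Cauchy--Schwarz, so that the $r^{\alpha}$ and $1-\alpha r^{\alpha}$ terms collapse nicely.

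For the second assertion, I would start from the first inequality and pass to $\norm{\cdot}_{\Hilbert}$ via Jensen: $\expec[\norm{\overline{h}_t}_{\Hilbert}\mid\Xi_{t-1}]^2 \leq \expec[\norm{\overline{h}_t}^2_{\Hilbert}\mid\Xi_{t-1}]$. Factoring $\norm{h_{t-1}}_{\Hilbert}$ out gives
\[
\expec\!\bigl[\norm{\overline{h}_t}_{\Hilbert}\,\big|\,\Xi_{t-1}\bigr] \leq \norm{h_{t-1}}_{\Hilbert}\sqrt{(1-\eta_t\lambda_t)^2 + \frac{2C^2\eta_t^2(2+\alpha^2)}{\alpha^2\,\norm{h_{t-1}}^2_{\Hilbert}}}.
\]
Under the hypothesis $\norm{h_{t-1}}_{\Hilbert}\geq B_1 \bar{t}^{\frac{1}{2}-\theta}$ with $B_1^2 = 2a^2 C^2(2+\alpha^2)/(\alpha^2\gamma)$, the second quantity under the square root becomes exactly $\gamma/\bar{t}$ after the cancellations $\eta_t^2 = a^2\bar{t}^{-2\theta}$ and $1/\norm{h_{t-1}}_{\Hilbert}^2 \leq \bar{t}^{2\theta-1}/B_1^2$. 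So it remains to check the scalar inequality $(1-ab/\bar{t})^2 + \gamma/\bar{t} \leq (1-1/\bar{t})^{2\theta-1}$. Since $\theta\in[\frac{1}{2},1]$ makes $2\theta-1\in[0,1]$, Bernoulli's inequality gives $(1-1/\bar{t})^{2\theta-1}\geq 1-(2\theta-1)/\bar{t}$, which, after expansion and use of the identity $2ab-\gamma=ab+(\theta-\tfrac{1}{2})$, reduces the required inequality to $a^2b^2/\bar{t}\leq \gamma$, i.e. $\bar{t}\geq (ab)^2/\gamma$. The third lower bound $t_0^{\theta}\geq \gamma+(2\theta-1)/\gamma$ in the hypothesis is designed precisely to make $(ab)^2/\gamma \leq t_0 \leq \bar{t}$, closing the argument. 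The main obstacle I anticipate is the first variance computation: obtaining the precise constant $2(2+\alpha^2)/\alpha^2$ (rather than a looser $O(1/\alpha^2)$) requires carefully avoiding the naive $\norm{a+b+c}^2\leq 3(\norm{a}^2+\norm{b}^2+\norm{c}^2)$ split and exploiting the centering of each summand together with the cancellation identity for mixtures.
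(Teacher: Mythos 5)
Your decomposition $\overline{h}_t = \tilde{A}_t h_{t-1} + \eta_t \tilde{W}_t$ with $\expec[\tilde{W}_t \mid \Xi_{t-1}]=0$ is exactly the paper's starting point (its $\zeta_t$ equals your $\tilde{W}_t$), and your treatment of the second assertion is essentially the paper's, up to cosmetic differences in how the scalar inequality $(1-\tfrac{1}{\bar t})^{1-2\theta}(1-\tfrac{ab}{\bar t})^2 \le 1-\tfrac{\gamma}{\bar t}$ is verified (Bernoulli versus the paper's $\log(1-x)$ bounds). However, there is a genuine gap in your first step.

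The claimed uniform variance bound $\expec[\norm{\tilde{W}_t}^2_{\Hilbert}\mid\Xi_{t-1}] \leq 2C^2(2+\alpha^2)/\alpha^2$ is not attainable. The first piece of $\tilde{W}_t$, namely $\bigl[(1-\alpha)(\overline{\ELP}-\overline{L}_t)+\alpha(\overline{\ELQ}-\overline{R}_t)\bigr]h_{t-1}$, has conditional second moment of order
\begin{equation*}
C^2\,\expec\!\left[(1-\alpha)h^2_{t-1}(x_t)\Ind{\abs{h_{t-1}(x_t)}\le M_t}+\alpha h^2_{t-1}(x'_t)\Ind{\abs{h_{t-1}(x'_t)}\le M_t}\,\middle|\,\Xi_{t-1}\right],
\end{equation*}
which scales with $\norm{h_{t-1}}^2_{\ltwo}$ and is not bounded by any constant — the lemma explicitly allows $M_t=+\infty$, in which case the truncation is vacuous. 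No "sharp triangle inequality" for the $r^\alpha$ terms fixes this; the difficulty you flag as the main bookkeeping challenge is not where the real obstruction lies. The paper's proof circumvents it by \emph{not} collapsing $\norm{\tilde{A}_t h_{t-1}}^2_{\Hilbert}$ to $(1-\eta_t\lambda_t)^2\norm{h_{t-1}}^2_{\Hilbert}$ via the operator norm. Instead it expands the square and keeps the negative cross term
\begin{equation*}
-2\eta_t(1-\eta_t\lambda_t)\,\expec\!\left[(1-\alpha)h^2_{t-1}(x_t)\Ind{\abs{h_{t-1}(x_t)}\le M_t}+\alpha h^2_{t-1}(x'_t)\Ind{\abs{h_{t-1}(x'_t)}\le M_t}\,\middle|\,\Xi_{t-1}\right],
\end{equation*}
which is first order in $\eta_t$ and absorbs the $h_{t-1}$-dependent contributions of order $\eta_t^2$ coming from both $\eta_t^2\norm{(1-\alpha)\overline{\ELP}h_{t-1}+\alpha\overline{\ELQ}h_{t-1}}^2_{\Hilbert}$ and the variance of $\tilde{W}_t$; the combined coefficient is $-\eta_t(2-2\eta_t\lambda_t-5C^2\eta_t)\le 0$ precisely because $t_0^{\theta}\ge 2a(b+2C^2)$. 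Only the $h_{t-1}$-free part of the variance, namely the $g_{t-1}$ and $K(x'_t,\cdot)-LR_t r^{\alpha}$ contributions, survives as the additive constant $2C^2\eta_t^2(2+\alpha^2)/\alpha^2$. As written, your argument cannot establish the first inequality of the lemma.
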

\begin{proof}
For all $t \in \N$, let us define the random variable: 
\begin{equation*}
    \zeta_t := \left[ (1-\alpha) \overline{\ELP} + \alpha \overline{\ELQ}
    - \left( (1-\alpha)  \overline{L}_t + \alpha \overline{R}_t) \right) 
\right] h_{t-1}+ (\ELPQ-LR_t) g_{t-1} + (K(x'_t,\cdot) -LR_t r^{\alpha}).
\end{equation*}
Given the definition of $\overline{h}_t$ in \Eq{eq:barh_t_k_t}: 
\begin{equation}{\label{eq:h_bar_t}}
\overline{h}_t = \left[ \idH - \eta_t ((1-\alpha) \overline{\ELP} + \alpha \overline{\ELQ} + \lambda_t \idH ) \right] h_{t-1} + \eta_t \zeta_t.
\end{equation}
The independence of the incoming observations $(x_t,x'_t)$ and the fact $h_{t-1},g_{t-1}$ are $\Xi_{t-1}-$ measurable lead to:  
\begin{equation}{\label{eq:con_expectation_psi}}
\begin{aligned}
\expec[\zeta_t \,|\, \Xi_{t-1} ] &=\expec[K(x'_t,\cdot) - LR_t r^{\alpha}] = \expec[K(x'_t,\cdot) - \left( (1-\alpha) r^{\alpha}(x_t) K(x_t,\cdot) + \alpha r^{\alpha}(x'_t) K(x'_t,\cdot) \right)  \,|\, \Xi_{t-1} ]  \\
& = \ExpecA{K(x',\cdot)} - \ExpecAlpha{r^{\alpha}(y) K(y,\cdot) }=0.
\end{aligned}
\end{equation}
\Eq{eq:h_bar_t} and the last observation implies: 
\begin{equation}{\label{eq:norm_zetat}}
\expec[\norm{\overline{h}_t}^2_{\Hilbert} \,|\, \Xi_{t-1}]= \norm{\left[ \idH - \eta_t ((1-\alpha)\overline{\ELP} + \alpha\overline{\ELQ}  + \lambda_t \idH) \right] h_{t-1}}^2_{\Hilbert} + \eta^2_t \expec[ \norm{\zeta_t}^2_{\Hilbert}  \,|\, \Xi_{t-1}].
\end{equation}
The next step is to upperbound each of the components in \Eq{eq:norm_zetat}. For the first element of the sum we have: 
\begin{equation}
\begin{aligned}
 &  \norm{\left[\idH - \eta_t ((1-\alpha)\overline{\ELP} + \alpha\overline{\ELQ} + \lambda_t \idH) \right] h_{t-1}}^2_{\Hilbert} 
\\ & = (1- \lambda_t \eta_t)^2 \norm{h_{t-1}} ^2_{\Hilbert}   - 2 \eta_t (1-\eta_t \lambda_t) \dotH{ \left[(1-\alpha)\overline{\ELP} + \alpha \overline{\ELQ} \right] h_{t-1}}{h_{t-1}}  + \eta^2_t \norm{  \left[(1-\alpha)\overline{\ELP} + \alpha \overline{\ELQ} \right] h_{t-1}} ^2_{\Hilbert} \\
& =(1- \lambda_t \eta_t)^2 \norm{h_{t-1}} ^2_{\Hilbert} \\
& -2 \eta_t  (1-\eta_t \lambda_t)  \expec \left[ (1-\alpha) h^2_{t-1}(x_t)  \Ind{ \abs{h_{t-1}(x_t)} \leq M_t }
+ \alpha h^2_{t-1}(x'_t)  \Ind{ \abs{h_{t-1}(x'_t)} \leq M_t } 
  \,|\, \Xi_{t-1}  \right] \\
  &  +  \eta^2_t \norm{  \left[(1-\alpha)\overline{\ELP} + \alpha \overline{\ELQ} \right] h_{t-1}} ^2_{\Hilbert}  \ \ \ \ (\text{\tiny{\Eq{eq:bar_Lt_Rt} and the first point of \Eq{RKHS_properties}}}). \\
\end{aligned}
\end{equation}
We can upper bound the last term in the previous expression by: 
\begin{equation}
\begin{aligned}
& \eta^2_t \norm{  \left[(1-\alpha)\overline{\ELP} + \alpha \overline{\ELQ} \right] h_{t-1}} ^2_{\Hilbert}  
\\
& \leq  \eta^2_t 
 \expec \left[ \norm{  (1-\alpha) h_{t-1}(x_t) K(x_t, \cdot) \Ind{ \abs{h_{t-1}(x_t)} \leq M_t } 
 + \alpha h_{t-1}(x'_t) K(x'_t, \cdot)  \Ind{ \abs{h_{t-1}(x'_t)} \leq M_t }}^2_{\Hilbert}  \,|\, \Xi_{t-1} \right] \\ 
& \leq \eta^2_t   \expec \Big[ 
(1-\alpha)^2 K(x_t,x_t) h^2_{t-1}(x_t) \Ind{ \abs{h_{t-1}(x_t)} \leq M_t } \\
& + 2 (1-\alpha) \alpha K(x_t,x'_t) \abs{h_{t-1}(x_t) h_{t-1}(x'_t)} \Ind{ \abs{h_{t-1}(x_t)},\abs{h_{t-1}(x'_t)} \leq M_t } \\ & + \alpha^2 K(x'_t,x'_t) h^2_{t-1}(x'_t) \Ind{ \abs{h_{t-1}(x'_t)} \leq M_t }   \,|\, \Xi_{t-1} \Big]  \\
& \leq \eta^2_t C^2 \expec \Big[ 
(1-\alpha)^2 h^2_{t-1}(x_t) \Ind{ \abs{h_{t-1}(x_t)} \leq M_t }  + 2 (1-\alpha) \alpha  \abs{h_{t-1}(x_t) h_{t-1}(x'_t)} \Ind{ \abs{h_{t-1}(x_t)},\abs{h_{t-1}(x'_t)} \leq M_t } \\
& + \alpha^2 h^2_{t-1}(x'_t) \Ind{ \abs{h_{t-1}(x'_t)} \leq M_t }   \,|\, \Xi_{t-1} \Big] \ \ (\text{\tiny{\Assumption{ass:kernel_map_upperbound} }}) \\
& = \eta_t^2 C^2\expec \Big[ \left( (1-\alpha) h_{t-1}(x_t) \Ind{ \abs{h_{t-1}(x_t)} \leq M_t }  + \alpha h_{t-1}(x'_t) \Ind{ \abs{h_{t-1}(x'_t)} \leq M_t } \right)^2  \,|\, \Xi_{t-1} \Big] \\
& \leq \eta^2_t C^2 \expec \Big[ (1-\alpha) h^2_{t-1}(x_t) \Ind{ \abs{h_{t-1}(x_t)} \leq M_t } + \alpha  h^2_{t-1}(x'_t) \Ind{ \abs{h_{t-1}(x'_t)} \leq M_t }    \,|\, \Xi_{t-1} \Big] (\text{\tiny{Jensen's inequality}})
\end{aligned}
\end{equation}
Let us continue with the second component of \Eq{eq:norm_zetat}.  We start with an upperbound for the following term: 
\begin{equation}{\label{eq:L_R_t_g_t_1}}
\begin{aligned}
& \expec \Big[ \norm{LR_t g_{t-1}}^2_{\Hilbert}  \,|\, \Xi_{t-1} \Big] \\
& = \expec \Big[ \dotH{(1-\alpha)K(x_t,\cdot) g_{t-1}(x_t) + \alpha K(x'_t,\cdot) g_{t-1}(x'_t)}{(1-\alpha)K(x_t,\cdot) g_{t-1}(x_t) + \alpha K(x'_t,\cdot) g_{t-1}(x'_t)}  \,|\, \Xi_{t-1} \Big]    (\tiny{\text{\Eq{RKHS_properties}}}) \\
& = \expec \Big[ (1-\alpha)^2 K(x_t,x_t) g^2_{t-1}(x_t) +2 (1-\alpha) \alpha K(x_t,x'_t) g_{t-1}(x_t)  g_{t-1}(x'_t) \\
& + \alpha^2  K(x'_t,x'_t)  g^2_{t-1}(x'_t)  \,|\, \Xi_{t-1} \Big]  \ \ \ \ (\tiny{\text{The first point of \Eq{RKHS_properties}}}) \\
& \leq C^2 \expec \Big[ \left[ (1-\alpha) g_{t-1}(x_t)  + \alpha  g_{t-1}(x'_t) \right]^2  \,|\, \Xi_{t-1} \Big] \ \ \ \ (\text{\tiny{\Assumption{ass:kernel_map_upperbound} }}) \\
& \leq  C^2 \expec \Big[ (1-\alpha) g^2_{t-1}(x_t) + \alpha g^2_{t-1}(x'_t)  \,|\, \Xi_{t-1} \Big] \ \ \ \  (\text{\tiny{Jensen's inequality given $0\leq \alpha \leq 1$}})
 \\ 
& \leq  C^2 \ExpecAlpha{g^2_{t-1}(y)} \leq \frac{C^2}{\alpha^2} \ \ \ \ (\text{\tiny{\Lemma{lemma:g_upperbound} }}).
\end{aligned}
\end{equation}
Then,the second component of \Eq{eq:norm_zetat} satisfies the inequality : 
\begin{equation}{\label{eq:norm_psi_t}}
\begin{aligned}
& \expec \left[ \norm{\zeta_t}^2_{\Hilbert}    \,|\, \Xi_{t-1}  \right] \\
& \leq 2 \expec \left[  \norm{ \left[ \left((1-\alpha)\overline{\ELP} + \alpha \overline{\ELQ} \right)  -  \left((1-\alpha) \overline{L}_t +  \alpha \overline{R}_t \right) \right] h_{t-1} + \left(\ELPQ - LR_t) g_{t-1} \right) }^2_{\Hilbert}  \,|\, \Xi_{t-1}    \right] 
\\ &+ 2 \expec \left[  \norm{K(x'_t,\cdot) - LR_t r^{\alpha}}^2_{\Hilbert}
 \,|\, \Xi_{t-1}    \right] \ \ \ \  (\text{\tiny{$2 \dotH{a}{b} \leq  \norm{ a}^2_{\Hilbert} +  \norm{b}^2_{\Hilbert}$}})  \\
& \leq  2 \expec \Big[    \norm{ \left((1-\alpha) \overline{L}_t +  \alpha \overline{R}_t \right) h_{t-1} + LR_t g_{t-1}  }^2_{\Hilbert}\,|\, \Xi_{t-1}     \Big] \\
&+ 2 \expec \Big[   \norm{K(x'_t,\cdot)}^2_{\Hilbert}  \,|\, \Xi_{t-1}  \Big] \ \
 (\text{\tiny{After developing the norms and taking conditional expectations}})  \\
&  \leq 2 \Big[ 2  \expec \Big[    \norm{ \left((1-\alpha) \overline{L}_t +  \alpha \overline{R}_t \right) h_{t-1}}^2_{\Hilbert}   \,|\, \Xi_{t-1} \Big]  + 2 \expec \Big[  \norm{LR_t g_{t-1}}^2_{\Hilbert}  \,|\, \Xi_{t-1}     \Big]   +    C^2   \Big] (\text{\tiny{$2 \dotH{a}{b} \leq  \norm{ a}^2_{\Hilbert} +  \norm{b}^2_{\Hilbert}  $ and \Assumption{ass:kernel_map_upperbound} }} )
\\
&  \leq 2 C^2 \left[ 2 \expec \Big[ (1-\alpha) h^2_{t-1}(x_t) \Ind{ \abs{h_{t-1}(x_t)}\leq M_t } + \alpha  h^2_{t-1}(x'_t) \Ind{ \abs{h_{t-1}(x'_t)}\leq M_t }    \,|\, \Xi_{t-1} \Big]  +\frac{2}{\alpha^2} + 1 \right] 
 \ \ \  \ (\text{\tiny{\Eq{eq:L_R_t_g_t_1}}}).
\end{aligned}
\end{equation}
By putting together \Exprs{eq:norm_zetat}-\ref{eq:norm_psi_t}: 
\begin{equation}{\label{eq:h_first_inequality}}
\begin{aligned}
&\expec[\norm{\overline{h}_t}^2_{\Hilbert} \,|\, \Xi_{t-1}] \\ & \leq (1- \lambda_t \eta_t)^2 \norm{h_{t-1}} ^2_{\Hilbert}  \\
& - \eta_t \left( 2 - 2 \eta_t \lambda_t - 5 C^2 \eta_t
 \right)  \expec \left[ (1-\alpha) h^2_{t-1}(x_t)  \Ind{ \abs{h_{t-1}(x_t)} \leq M_t }
+ \alpha h^2_{t-1}(x'_t)  \Ind{ \abs{h_{t-1}(x'_t)} \leq M_t }  \,|\, \Xi_{t-1}  \right] 
\\
& 
 \ \ \ \ \ + 2 C^2 \eta^2_t \left( \frac{2+\alpha^2}{\alpha^2} \right) \\
 & \leq (1- \lambda_t \eta_t)^2 \norm{h_{t-1}} ^2_{\Hilbert}  +2 C^2 \eta^2_t \left( \frac{2+\alpha^2}{\alpha^2} \right).
\end{aligned}
\end{equation}
The last is a consequence of 
the hypothesis $t^{\theta}_0 \geq 2 a (b+2C^2)$ implies  $2- 2 \eta_t \lambda_t - 5 C^2 \eta_t \geq 0$ for all $t \in \N$. Thus we obtain the first inequality of \Lemma{lemma:h_component}. 

The second point of that lemma depends on the following inequality: 
\begin{equation}{\label{eq:inequality_abt}}
(1- \frac{1}{\bar{t}})^{1- 2 \theta} (1-\frac{ab}{\bar{t}})^2 \leq (1-\frac{\gamma}{\bar{t}}),
\end{equation}
where  $\theta \in [\frac{1}{2},1]$ and $\bar{t}=t+t_0$ and $t_0 \geq \max\{2ab, 2\gamma,\gamma + \frac{2\theta -1}{\gamma}\}$, where $\gamma=ab- \frac{(2\theta-1)}{2}$. 

Inequality \ref{eq:inequality_abt} can be verified as follows: 
\begin{equation*}
\begin{aligned}
\log \left[ (1- \frac{1}{\bar{t}})^{1- 2 \theta} (1-\frac{ab}{\bar{t}})^2(1-\frac{\gamma}{\bar{t}})^{-1} \right] & \leq -( 2 \theta-1) \log \left( 1- \frac{1}{\bar{t}} \right) + 2 \log \left( 1-\frac{ab}{\bar{t}} \right) - \log \left( 1-\frac{\gamma}{\bar{t}} \right)  \\
&\leq \frac{(2 \theta-1)}{\bar{t}} + \frac{(2 \theta-1)}{\bar{t}^2} - \frac{2 ab }{\bar{t}} + \frac{\gamma}{\bar{t}}+ \frac{\gamma^2}{\bar{t}^2} \\
& = - \frac{\gamma}{\bar{t}} + \frac{2 \theta-1 + \gamma^2}{\bar{t}^2} = \frac{\gamma}{t} \left(\frac{\gamma}{\bar{t}}-1\right) + \frac{2\theta-1}{t^2} \leq 0 ,
\end{aligned}   
\end{equation*}
where for the second equality we have used the inequalities $\log(1-x) \leq -x$ for all $x \in [0,1]$ and $\log(1-x) \geq -x -x^2$ for $x \in [0, \frac{1}{2} ]$. 

By using Inequalities \ref{eq:h_first_inequality} and \ref{eq:inequality_abt}, we can verify: 
\begin{equation}
\begin{aligned}
\expec \left[ \left(1- \frac{1}{\bar{t}}
\right)^{1-2 \theta} \norm{\bar{h}_t}_{\Hilbert}^2 - \norm{h_{t-1}}_{\Hilbert}^2 \Big\vert  \Xi_{t-1}  \right] & \leq \left(1- \frac{1}{\bar{t}}
\right)^{1-2 \theta}  ( 1- \frac{ab}{\bar{t}})^2 \norm{h_{t-1}}_{\Hilbert}^2 - \norm{h_{t-1}}_{\Hilbert}^2  +   \frac{2 a^2 C^2}{\bar{t}^{2 \theta}} \left( \frac{2+\alpha^2}{\alpha^2} \right) \\
& \leq - \frac{\gamma}{\bar{t}}\norm{h_{t-1}}_{\Hilbert}^2 +  \frac{2 a^2C^2}{\bar{t}^{2 \theta}} \left( \frac{2+\alpha^2}{\alpha^2} \right) (\text{\tiny{Thanks to the hypothesis on $t_0$ and $\theta$}} ).
\end{aligned}
\end{equation}
If $\norm{h_{t-1}}_{\Hilbert} \geq  B_1 \bar{t}^{\frac{1}{2}- \theta}$, we can verify: 
\begin{equation}
\begin{aligned}
\expec \left[ \left(1- \frac{1}{\bar{t}}
\right)^{1-2 \theta} \norm{\bar{h}_t}_{\Hilbert}^2 \Big\vert  \Xi_{t-1}  \right] 
 & \leq \left(1 - \frac{\gamma}{\bar{t}} \right)
\norm{h_{t-1}}_{\Hilbert}^2 +  \frac{2 a^2C^2}{\bar{t}^{2 \theta}} \left( \frac{2+\alpha^2}{\alpha^2} \right)  \\
& \leq \left(1 - \frac{\gamma}{\bar{t}} \right)
\norm{h_{t-1}}_{\Hilbert}^2 + \frac{ B_1^2 \gamma}{\bar{t}^{2 \theta}} \leq
\norm{h_{t-1}}_{\Hilbert}^2.
\end{aligned}
\end{equation}
After applying Jensen's inequality, we obtain the desired result. 
\end{proof}
\begin{lemma}{\label{lemma:k_component}}
Assume $t_0^{\theta} \geq a (C^2 +b)$ and $t^{1-\theta}_{0} \geq b( \alpha (M_t+1)+3)$; then,
\begin{equation}
\norm{k_t}_{\Hilbert} \leq  \frac{Cab^{-1}}{ \alpha\bar{t}^{2\theta-1}} \ \ \ \  \text{and} \ \ \ \  \expec \left[ \norm{k_t}^2_{\Hilbert} \Big\vert  \Xi_{t-1}  \right] \leq \frac{ 3 \eta_t^2 C^2 }{\alpha^2} \left[ \alpha^2(M_t^2+1) +1 \right].
\end{equation}
\end{lemma}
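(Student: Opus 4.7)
The starting point is the closed-form expression $k_t = \eta_t\,\xi_t$, where
$\xi_t := -\overline{L}\overline{R}_t\,h_{t-1} + \bigl(K(x'_t,\cdot) - LR_t\,r^{\alpha}\bigr) + \bigl(\ELPQ - LR_t\bigr)\,g_{t-1}$
and $\overline{L}\overline{R}_t := (1-\alpha)\overline{L}_t + \alpha\overline{R}_t$, which is immediate from the last line of \Eq{eq:barh_t_k_t}. The plan is to bound the three summands of $\xi_t$ in $\Hilbert$ separately, combining them by the triangle inequality for the almost-sure bound and by $\|a+b+c\|_{\Hilbert}^2 \le 3(\|a\|_{\Hilbert}^2+\|b\|_{\Hilbert}^2+\|c\|_{\Hilbert}^2)$ for the conditional second moment.

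For the almost-sure bound, the clipping definition \Eq{eq:bar_Lt_Rt} together with \Assumption{ass:kernel_map_upperbound} gives $\|\overline{L}_t h_{t-1}\|_{\Hilbert} = |h_{t-1}(x_t)|\,\Ind{|h_{t-1}(x_t)|\le M_t}\,\|K(x_t,\cdot)\|_{\Hilbert} \le CM_t$ (and analogously for $\overline{R}_t$), so $\|\overline{L}\overline{R}_t h_{t-1}\|_{\Hilbert}\le CM_t$. Using $r^{\alpha}\le 1/\alpha$ pointwise and $\|K(x,\cdot)\|_{\Hilbert}\le C$ yields $\|K(x'_t,\cdot) - LR_t r^{\alpha}\|_{\Hilbert}\le C + C/\alpha$. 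For the last summand I use that $\ELPQ g_{t-1} = \CO g_{t-1}$ (by \Eq{eq:LR_COV}, since $g_{t-1}$ is deterministic) together with the duality bound $\|\CO g\|_{\Hilbert}\le C\|g\|_{\Ltwo}$ (Cauchy--Schwarz plus $\|\phi\|_{\Ltwo}\le C\|\phi\|_{\Hilbert}$) and \Lemma{lemma:g_upperbound}(i) to obtain $\|\CO g_{t-1}\|_{\Hilbert}\le C/\alpha$; similarly $\|LR_t g_{t-1}\|_{\Hilbert}\le C^2\|g_{t-1}\|_{\Hilbert}$, and $\|g_{t-1}\|_{\Hilbert}$ is controlled via \Lemma{lemma:g_upperbound}(ii) by $3/(\alpha\sqrt{\lambda_{t-1}})$ up to a term depending on $\|r^{\alpha}\|_{\Hilbert}$. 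Collecting gives $\|k_t\|_{\Hilbert}$ bounded by a constant multiple of $\eta_t\bigl(CM_t + C/\alpha + C^2/(\alpha\sqrt{\lambda_t})\bigr)$, and the hypothesis $t_0^{1-\theta}\ge b(\alpha(M_t+1)+3)$ is equivalent to $M_t + 1 + 3/\alpha \le 1/(\alpha\lambda_t)$ (and also forces $\lambda_t\le 1$), which collapses every term into $C\eta_t/(\alpha\lambda_t) = Cab^{-1}/(\alpha\bar{t}^{2\theta-1})$.

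For the conditional second moment I apply the three-term inequality and take $\expec[\cdot\mid\Xi_{t-1}]$ of each squared norm. The first is dominated a.s.\ by $C^2 M_t^2$ from the clipping argument above. Expanding $\|K(x'_t,\cdot) - LR_t r^{\alpha}\|_{\Hilbert}^2$ and using $\sup K\le C^2$ and $r^{\alpha}\le 1/\alpha$ gives a bound of order $C^2(1 + 1/\alpha^2)$. The third term is the variance of $LR_t g_{t-1}$ around its conditional mean $\CO g_{t-1}$, which, by essentially the computation of \Lemma{lemma:LR_g_t} combined with $\|g_{t-1}\|_{\Ltwo}\le 1/\alpha$, is at most $C^2/\alpha^2$. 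Multiplying by $\eta_t^2$ and repackaging the constants yields the stated $3\eta_t^2 C^2[\alpha^2(M_t^2+1)+1]/\alpha^2$.

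The bookkeeping heart of the argument is the almost-sure bound: even though the clipping eliminates the unbounded dependence on $\|h_{t-1}\|_{\Hilbert}$, the $LR_t g_{t-1}$ contribution does not admit a uniform operator-norm estimate because $g_{t-1}$ is controlled only through the regularization-path bound $\|g_{t-1}+r^{\alpha}\|_{\Hilbert}\le 3/(\alpha\sqrt{\lambda_{t-1}})$. The key maneuver is to use the assumption on $t_0^{1-\theta}$ both to dominate $CM_t$ by $C/(\alpha\lambda_t)$ and to ensure $\lambda_t\le 1$, so that the residual $1/\sqrt{\lambda_t}$ can be absorbed into $1/\lambda_t$, collapsing all surviving terms into a single prefactor $C/(\alpha\lambda_t)$.
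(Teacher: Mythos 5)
Your decomposition of $k_t$ and your overall plan (triangle inequality for the almost-sure bound, the three-term quadratic inequality for the conditional second moment, then absorbing $M_t$ into $1/\lambda_t$ via the hypothesis on $t_0^{1-\theta}$) coincide with the paper's. The genuine gap is in how you treat the $g_{t-1}$ contribution in the almost-sure bound. The paper keeps $r^{\alpha}$ and $g_{t-1}$ together inside $LR_t$ and bounds $\norm{LR_t(r^{\alpha}+g_{t-1})}_{\Hilbert}\leq C\norm{r^{\alpha}+g_{t-1}}_{\ltwo}\leq 2C/\alpha$ (\Eq{Eq:LR_t_r_g}), using only the first point of \Lemma{lemma:g_upperbound} ($\norm{g_{t-1}}_{\ltwo}\leq 1/\alpha$) together with $r^{\alpha}\leq 1/\alpha$; every surviving term then carries a single factor of $C$, the total is exactly $\frac{C\eta_t}{\alpha}\left(\alpha(M_t+1)+3\right)$, and the hypothesis on $t_0^{1-\theta}$ turns this into precisely $\frac{C\eta_t}{\alpha\lambda_t}=\frac{Cab^{-1}}{\alpha\bar{t}^{2\theta-1}}$. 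You instead route through $\norm{LR_t g_{t-1}}_{\Hilbert}\leq C^2\norm{g_{t-1}}_{\Hilbert}$ and the Hilbert-norm bound of the second point of \Lemma{lemma:g_upperbound}, which introduces (a) an extra factor of $C$, (b) a $1/\sqrt{\lambda_{t-1}}$ that must then be dominated by $1/\lambda_t$, and (c) an uncontrolled additive term in $\norm{r^{\alpha}}_{\Hilbert}$ that appears nowhere in the lemma's statement. The final ``collapse'' into $C\eta_t/(\alpha\lambda_t)$ does not go through: dominating $C^2/(\alpha\sqrt{\lambda_t})$ by $C/(\alpha\lambda_t)$ requires $C\sqrt{\lambda_t}\leq 1$, which the stated hypotheses do not supply for large $C$ (they only give $\lambda_t\leq 1/3$), and your bound is in any case only ``a constant multiple'' of the target, whereas the lemma asserts the explicit constant $Cab^{-1}/\alpha$. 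That constant is not cosmetic: it is exactly what calibrates the threshold $M_t\geq 2C^2ab^{-1}\bar{t}^{1-2\theta}/\alpha$ in \Lemma{lemma:upperbound_h} (via $\abs{k_t(x)}\leq C\norm{k_t}_{\Hilbert}$), so a bound larger by a factor of $C$ or by an unspecified multiplicative constant breaks the downstream argument. The fix is to control the $LR_t$ term through the $\ltwo$-norm of $r^{\alpha}+g_{t-1}$, not through $\norm{LR_t}\,\norm{g_{t-1}}_{\Hilbert}$.

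A smaller discrepancy of the same kind occurs in your second-moment bound: you estimate $\expec [ \norm{K(x'_t,\cdot)-LR_t r^{\alpha}}^2_{\Hilbert} \,|\, \Xi_{t-1} ]$ by a quantity of order $C^2(1+1/\alpha^2)$, while the paper retains only $C^2$ for that piece (it drops the $LR_t r^{\alpha}$ contribution after developing the norm and taking conditional expectations) and reserves the single $C^2/\alpha^2$ for the $LR_t g_{t-1}$ variance. Your accounting therefore lands at $3\eta_t^2C^2\left[M_t^2+1+2/\alpha^2\right]$ rather than the stated $3\eta_t^2C^2\left[M_t^2+1+1/\alpha^2\right]$.
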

\begin{proof}
Let us start with the following inequality: 
\begin{equation}{\label{eq:LR_t_h_t}}
\begin{aligned}
& \norm{ \left( (1-\alpha) \overline{L}_t + \alpha \overline{R}_t \right)
h_{t-1}}^2_{\Hilbert} 
\\ &\leq (1-\alpha)^2 K(x_t,x_t) h^2_{t-1}(x_t) \Ind{ \abs{h_{t-1}(x_t)} \leq M_t }  \\ 
&+ 2 \alpha (1-\alpha) K(x_t,x'_t) \abs{h_{t-1}(x_t)}\abs{h_{t-1}(x'_t)} \Ind{ \abs{h_{t-1}(x'_t)},\abs{h_{t-1}(x'_t)} \leq M_t }  \\
& + \alpha^2 K(x'_t,x'_t) h^2_{t-1}(x'_t) \Ind{ \abs{h_{t-1}(x'_t)} \leq M_t } \ \ \ \ (\text{\tiny{First point of \ref{RKHS_properties} and after developing the norm}})   \\
& \leq C^2 \left( (1-\alpha)  h_{t-1}(x_t) \Ind{ \abs{h_{t-1}(x_t)}  \leq M_t } + \alpha  h_{t-1}(x'_t) \Ind{ \abs{h_{t-1}(x'_t)} \leq M_t }   \right)^2 \ \ \ \ (\text{\tiny{\Assumption{ass:kernel_map_upperbound} }}) \\
& \leq C^2 \left[  (1-\alpha) h^2_{t-1}(x_t) \Ind{ \abs{h_{t-1}(x_t)}  \leq M_t } + \alpha  h^2_{t-1}(x'_t) \Ind{ \abs{h_{t-1}(x'_t)}  \leq M_t }  \right] \ \ \ \ (\text{\tiny{Jensen's inequality given $0\leq \alpha \leq 1$}}) \\
& \leq C^2 M^2_t.
\end{aligned}
\end{equation}
Moreover by exploiting the hypothesis $r^{\alpha} \in \Hilbert$ and after following the same line of argumentation as in the previous inequality, we verify: 
\begin{equation}{\label{Eq:LR_t_r_g_H_L2}}
\begin{aligned}
\norm{LR_t(r^{\alpha}+g_{t-1})}^2_{\Hilbert} &\leq C^2 \left( (1-\alpha) (r^{\alpha}+g_{t-1})^2(x_t)  + \alpha (r^{\alpha}+g_{t-1})^2(x'_t)  \right) \\
&\leq C^2 (1-\alpha) \ExpecN{(r^{\alpha}+g_{t-1})^2(x)} + \alpha \ExpecA{(r^{\alpha}+g_{t-1})^2(x')} \\
& = C^2 \norm{r^{\alpha}+g_{t-1}}^2_{\ltwo},
\end{aligned}
\end{equation}
which implies: 
\begin{equation}{\label{Eq:LR_t_r_g}}
\norm{LR_t(r^{\alpha}+g_{t-1})}_{\Hilbert} \leq C \norm{r^{\alpha}+g_{t-1}}_{\ltwo}\leq C\left(  \norm{r^{\alpha}}_{\ltwo} +\norm{g_{t-1}}_{\ltwo} \right) \leq \frac{2C}{\alpha}.
\end{equation}
The last line is a consequence of the first point of \Lemma{lemma:g_upperbound} and the fact $r^{\alpha} \leq \frac{1}{\alpha}$.

We can, then, upperbound the norm $\norm{k_t}_{\Hilbert}$: 
\begin{equation*}
\begin{aligned}
\norm{k_t}_{\Hilbert}&= \eta_t  \left[ \norm{ \left( (1-\alpha) \overline{L}_t + \alpha \overline{R}_t \right)
h_{t-1}}_{\Hilbert}  + \norm{K(x_t,\cdot)}_{\Hilbert}  + \norm{\ELPQ g_{t-1}}_{\Hilbert} +  \norm{LR_t\left( g_{t-1} +r^{\alpha}\right)}_{\Hilbert} \right] \\
& = \eta_t   \left[ C(M_t +1 + \frac{1}{\alpha} +\frac{2}{\alpha} ) \right] \ \ \ \ (\text{\tiny{\Eq{eq:LR_t_h_t}, \Assumption{ass:kernel_map_upperbound}, \Lemma{lemma:LR_g_t}, \Eq{Eq:LR_t_r_g} }})   \\
& \leq \frac{C \eta_t}{\alpha} \left[  \alpha(M_t+1) + 3 \right] \\ & \leq \frac{\eta_t C}{\alpha\lambda_t} \ \ \  (\text{\tiny{Hypothesis $t_0^{1-\theta}  \geq b \left[ \alpha \left(M_t+1\right) +3 \right]$ implies $\alpha(M_t+1)+3 \leq \frac{1}{\lambda_t}$}})  \\   & \leq \frac{ C ab^{-1}}{\alpha\bar{t}^{2\theta-1}}. 
\end{aligned}
\end{equation*}
On the other hand, we obtain for the expected norm: 
\begin{equation*}
\begin{aligned}
\expec \left[  \norm{k_t}^2_{\Hilbert}  \Big\vert  \Xi_{t-1} \right] &\leq 3 \eta_t^2 \Big[ \expec \left[  \norm{ \left( (1-\alpha)\overline{L}_t  + \alpha \overline{R}_t \right) h_{t-1}}^2_{\Hilbert}  \Big\vert  \Xi_{t-1} \right] +   \expec \left[  \norm{ K(x'_t,\cdot) - LR_t \left( r^{\alpha} \right) }^2_{\Hilbert}  \Big\vert  \Xi_{t-1} \right] \\
& +   \expec \left[  \norm{ \left( \ELPQ - LR_t \right)  g_{t-1} }^2_{\Hilbert}  \Big\vert  \Xi_{t-1} \right]  \Big] \ \ \  \ (\text{\tiny{$2 \dotH{a}{b} \leq  \norm{ a}^2_{\Hilbert} +  \norm{b}^2_{\Hilbert}  $}} ) \\
& \leq 3 \eta_t^2 \Bigg[ \expec \left[ \norm{ \left( (1-\alpha)\overline{L}_t  + \alpha \overline{R}_t \right) h_{t-1}}^2_{\Hilbert}  \Big\vert  \Xi_{t-1}  \right] + \expec \left[  \norm{ K(x'_t,\cdot) }^2_{\Hilbert}  \Big\vert  \Xi_{t-1} \right] \\
&  + \expec \left[  \norm{   LR_t   g_{t-1} }^2_{\Hilbert}  \Big\vert  \Xi_{t-1} \right]  \Bigg] \ \ \ \  (\text{\tiny{After developing the norms and taking  conditional expectations}})  \\
& \leq  3 \eta_t^2 C^2 \left[  M_t^2 +  1+  \norm{ g_{t-1} }^2_{\ltwo} \right]  \ \ \ \  (\text{\tiny{\Eq{eq:LR_t_h_t}} and the same line of reasoning that in \Eq{Eq:LR_t_r_g_H_L2}}) \\ 
& \leq \frac{ 3 \eta_t^2 C^2 }{\alpha^2} \left[ \alpha^2(M_t^2+1) +1 \right]  \\
\end{aligned}
\end{equation*}
\end{proof}
\begin{lemma}{\label{lemma:upperbound_h}}
    For all $t \in \N$, assume $M_t \geq \frac{2C^2 a b^{-1} \bar{t}^{1-2\theta} }{\alpha}$, $t_0^{\theta} \geq 2a (C^2+b)$ and $t_0^{1-\theta} \geq b \left( \alpha(M_t+1) +3 \right)$, then 
    \begin{equation}
    \norm{h_t}_{\Hilbert} \leq \norm{\overline{h}_t}_{\Hilbert}.
    \end{equation}
\end{lemma}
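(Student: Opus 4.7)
The plan is to recast the norm inequality as an algebraic one. From \Eq{eq:barh_t_k_t} we have $h_t = \overline{h}_t - z_t$ with $z_t := \eta_t \bigl[ (1-\alpha)\underline{L}_t + \alpha \underline{R}_t \bigr] h_{t-1}$, so expanding the square reduces $\norm{h_t}_{\Hilbert}^2 \leq \norm{\overline{h}_t}_{\Hilbert}^2$ to the equivalent $\norm{z_t}_{\Hilbert}^2 \leq 2\,\dotH{\overline{h}_t}{z_t}$. I would verify this by a case analysis on the indicators $I_1 := \Ind{\abs{h_{t-1}(x_t)} \geq M_t}$ and $I_2 := \Ind{\abs{h_{t-1}(x'_t)} \geq M_t}$, exploiting the key fact that on $\{I_1 = 1\}$ the thresholded operator $\overline{L}_t$ annihilates $h_{t-1}$ (its defining indicator $\Ind{\abs{h_{t-1}(x_t)} \leq M_t}$ being zero), and symmetrically for $\overline{R}_t$ on $\{I_2 = 1\}$.

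On $\{I_1 = I_2 = 0\}$, $z_t = 0$ and the inequality is trivial. In the representative case $\{I_1 = 1, I_2 = 0\}$, one has $z_t = \eta_t(1-\alpha) h_{t-1}(x_t)\, K(x_t,\cdot)$, so \Assumption{ass:kernel_map_upperbound} gives $\norm{z_t}_{\Hilbert}^2 \leq \eta_t^2 (1-\alpha)^2 C^2 h_{t-1}^2(x_t)$ and the reproducing property gives $\dotH{\overline{h}_t}{z_t} = \eta_t(1-\alpha) h_{t-1}(x_t)\, \overline{h}_t(x_t)$. Using $\overline{L}_t h_{t-1} = 0$, the defining expression for $\overline{h}_t$ collapses to
\[
\overline{h}_t(x_t) = (1 - \eta_t \lambda_t)\, h_{t-1}(x_t) - \eta_t \alpha\, h_{t-1}(x'_t)\, K(x'_t, x_t) + \eta_t\, \phi_t(x_t),
\]
with $\phi_t := K(x'_t,\cdot) - LR_t\, r^\alpha + (\ELPQ - LR_t)\, g_{t-1}$. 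Multiplying by $h_{t-1}(x_t)$, invoking $\abs{h_{t-1}(x'_t)} \leq M_t$ (from $I_2 = 0$), $\abs{K} \leq C^2$, and a pointwise bound on $\abs{\phi_t(x_t)}$ obtained from \Assumption{ass:kernel_map_upperbound}, $\abs{r^\alpha} \leq 1/\alpha$, \Lemma{lemma:g_upperbound} and the pointwise inequality $\abs{f(x)} \leq C\norm{f}_{\Hilbert}$, the comparison reduces (after using $\abs{h_{t-1}(x_t)} \geq M_t$) to
\[
\bigl[\,2(1 - \eta_t\lambda_t) - \eta_t(1 + \alpha) C^2\,\bigr] M_t \;\geq\; 2\eta_t\, \abs{\phi_t(x_t)}.
\]

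The hypothesis $t_0^{\theta} \geq 2a(C^2 + b)$ keeps the left-hand bracket bounded below by a positive constant, while $M_t \geq 2 C^2 a b^{-1} \bar{t}^{\,1-2\theta}/\alpha$ --- equivalently, $\alpha \lambda_t M_t \geq 2 C^2 \eta_t$ --- ensures that $M_t$ dominates the quadratic scale in $\norm{z_t}_{\Hilbert}^2$, and $t_0^{1-\theta} \geq b(\alpha(M_t+1) + 3)$ gives precisely the slack needed to absorb $\eta_t\,\abs{\phi_t(x_t)}$. The case $\{I_1 = 0, I_2 = 1\}$ is symmetric under $(x_t, 1-\alpha) \leftrightarrow (x'_t, \alpha)$. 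On $\{I_1 = I_2 = 1\}$, both $\overline{L}_t h_{t-1}$ and $\overline{R}_t h_{t-1}$ vanish, so $\overline{h}_t = (1 - \eta_t \lambda_t)\, h_{t-1} + \eta_t\, \phi_t$, and the cross term in $\norm{z_t}_{\Hilbert}^2$ involving $K(x_t, x'_t)$ is again bounded by $C^2$ via \Assumption{ass:kernel_map_upperbound}, so that the two diagonal contributions are controlled simultaneously by the same calibration. The main obstacle will be the careful calibration of $\phi_t$, whose pointwise bound carries a $\lambda_{t-1}^{-1/2}$ growth through \Lemma{lemma:g_upperbound}; the hypotheses on $M_t$ and $t_0$ are tuned exactly to dominate this growth while preserving a strictly positive coefficient of $M_t\,\abs{h_{t-1}(x_t)}$, after which the remaining manipulations are routine.
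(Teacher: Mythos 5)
Your proposal is correct and follows essentially the same route as the paper: the same identity $\norm{h_t}^2_{\Hilbert}=\norm{\overline{h}_t}^2_{\Hilbert}-2\dotH{\overline{h}_t}{z_t}+\norm{z_t}^2_{\Hilbert}$, the same case analysis on the two threshold indicators, and the same pointwise lower bound on $\overline{h}_t$ at points where $\abs{h_{t-1}}$ exceeds $M_t$, with the three hypotheses entering in exactly the same roles. The only cosmetic difference is that the paper packages your perturbation $\eta_t\phi_t - \eta_t\alpha h_{t-1}(x'_t)K(x'_t,\cdot)$ as the quantity $k_t=\overline{h}_t-(1-\eta_t\lambda_t)h_{t-1}$ and cites the norm bound of \Lemma{lemma:k_component} (together with $\abs{k_t(x)}\leq C\norm{k_t}_{\Hilbert}$) instead of bounding the terms pointwise inline.
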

\begin{proof}

We start with the following inequality that relates $\bar{h}_{t-1}$ and $k_t$, take $x \in \mathcal{X}$ such that $h_{t-1}(x) \geq M_t$, then: 
\begin{equation}{\label{eq:upperbound_h}}
\begin{aligned}
\bar{h}_t(x) &= (1- \lambda_t \eta_t) h_{t-1} (x) + k_t(x) \ \ \ \ (\text{\tiny{\Eq{eq:barh_t_k_t} }})   \\
&\geq (1- \lambda_t \eta_t) h_{t-1} (x)  -  \frac{C^2 ab^{-1}}{\alpha\bar{t}^{2\theta-1}} \ \ \ \ (\text{\tiny{Lemma\ref{lemma:k_component} }})  \\
& \geq (1- \lambda_t \eta_t) h_{t-1} (x)  - \frac{1}{2} h_{t-1} (x) \ \ \ \ (\text{\tiny{As a consequence of  $M_t \geq \frac{2C^2 a b^{-1} \bar{t}^{1-2\theta} }{\alpha}$ }}) \\
& \geq   C^2 \eta_t \frac{h_{t-1}(x)}{2}.
\end{aligned}
\end{equation}

The last identity is a consequence  of  assumption $t_0^{\theta} \geq 2 a (C^2+b)$ which implies $1-\eta_t \lambda_t - \frac{C^2 \eta_t}{2} \geq \frac{1}{2}$. 

Suppose we have $h_{t-1}(x_t) \geq M_t$ and $h_{t-1}(x'_t) \geq M_t$, then we have:
\begin{equation}{\label{eq:h_t_bar_h_t}}
h_t = \bar{h}_t - \eta_t \left[ (1-\alpha) L_t  + \alpha R_t  \right] h_{t-1} \ \ (\text{\tiny{\Eq{eq:barh_t_k_t}}}) 
\end{equation}
\begin{equation}{\label{eq:normh_M}}
\begin{aligned}
\norm{h_t}^2_{\Hilbert}&=\dotH{h_t}{h_t}=\norm{\bar{h}_t}^2_{\Hilbert} - 2 \eta_t \dotH{\bar{h}_t}{\left[ (1-\alpha) L_t  + \alpha R_t \right] h_{t-1} } + \eta^2_t\norm{ \left[  (1-\alpha) L_t  + \alpha R_t \right] h_{t-1} }^2_{\Hilbert} \\
& \leq  \norm{\bar{h}_t}^2_{\Hilbert} - 2 \eta_t \Big( (1-\alpha) \bar{h}_t(x_t) h_{t-1}(x_t) + \alpha \bar{h}_t(x'_t) h_{t-1}(x'_t) \Big) + \eta^2_t \left[ (1-\alpha)^2 \norm{L_t h_{t-1} }^2_{\Hilbert} \right. \\
&  \left. + \alpha^2 \norm{R_t h_{t-1} }^2_{\Hilbert} + 2 (1-\alpha) \alpha K(x_t,x'_t) h_{t-1}(x_t)  h_{t-1}(x'_t)   \right]  
 \\
& \leq \norm{\bar{h}_t}^2_{\Hilbert}
-  \eta_t^2 C^2 \left[ (1-\alpha) h^2_{t-1}(x_t) +  \alpha h^2_{t-1}(x'_t)\right] + 2 \eta_t^2 C^2 \alpha(1-\alpha)\abs{h_{t-1}(x_t)h_{t-1}(x'_t)}  \\
& + \eta^2_t \left[ (1-\alpha)^2 \norm{L_t h_{t-1} }^2_{\Hilbert}  + \alpha^2 \norm{R_t h_{t-1} }^2_{\Hilbert}   \right] \ \ \ \  (\text{\tiny{\Eq{eq:h_t_bar_h_t} and \Eq{eq:upperbound_h}}}) \\
& \leq \norm{\bar{h}_t}^2_{\Hilbert}
-  \eta_t^2 C^2 \left[ (1-\alpha) h_{t-1}(x_t) +  \alpha h_{t-1}(x'_t)\right]^2 + 2 \eta_t^2 C^2 \alpha(1-\alpha)\abs{h_{t-1}(x_t)h_{t-1}(x'_t)}\\ &+ \eta^2_t \left[ (1-\alpha)^2 \norm{L_t h_{t-1} }^2_{\Hilbert}  + \alpha^2 \norm{R_t h_{t-1} }^2_{\Hilbert}   \right] \ \ \ \  (\text{\tiny{Jensen's inequality given $0\leq \alpha \leq 1$}}) \\
& \leq \norm{\bar{h}_t}^2_{\Hilbert}
-  \eta_t^2 C^2 \left[ (1-\alpha)^2 h^2_{t-1}(x_t) +  \alpha^2 h^2_{t-1}(x'_t) \right] + \eta^2_t \left[ (1-\alpha)^2 \norm{L_t h_{t-1} }^2_{\Hilbert}  + \alpha^2 \norm{R_t h_{t-1} }^2_{\Hilbert}   \right] \\
& \leq\norm{\bar{h}_t}^2_{\Hilbert}
+ \eta^2_t (1-\alpha)^2 h^2_{t-1}(x_t) \left [ K(x_t,x_t) -C^2 \right] + \eta^2_t \alpha^2 
h^2_{t-1}(x'_t) \left [ K(x'_t,x'_t) -C^2 \right] \\
& \leq \norm{\bar{h}_t}^2_{\Hilbert} \ \ \ \ (\text{\tiny{\Assumption{ass:kernel_map_upperbound} }}).
\end{aligned}
\end{equation}
Let us continue with the case  $h_{t-1}(x_t) \geq M_t$  and $h_{t-1}(x'_t) < M_t$, then we have: 
\begin{equation*}
h_t= \bar{h}_t - (1-\alpha) \eta_t L_t h_{t-1} \ \ (\text{\tiny{\Eq{eq:barh_t_k_t}}}).
\end{equation*}
Then by following the same line of argumentation than in the previous point we get: 
\begin{equation*}
\begin{aligned}
\norm{h_t}^2_{\Hilbert} &= \norm{\bar{h}_t}^2_{\Hilbert} - 2 (1-\alpha) \eta_t \dotH{L_t h_{t-1}}{\bar{h}_t} + \eta_t^2 (1-\alpha)^2 \norm{L_t h_{t-1}}^2_{\Hilbert} \\
&  =\norm{\bar{h}_t}^2_{\Hilbert} - 2 (1-\alpha) \eta_t h_{t-1}(x_t) \bar{h}_t(x_t) + \eta_t^2(1-\alpha)^2 h^2_{t-1}(x_t) K(x_t,x_t) \\
& \leq \norm{\bar{h}_t}^2_{\Hilbert} + \eta_t^2(1-\alpha)^2 h^2_{t-1}(x_t) K(x_t,x_t) - (1-\alpha) C^2 \eta^2_t h^2_{t-1}(x_t) \ \ \ \ (\text{\tiny{ $\bar{h}_t(x_t) \geq C^2 \eta_t \frac{h_{t-1}(x_t)}{2}$ }}) \\
& \leq \norm{\bar{h}_t}^2_{\Hilbert} + \eta_t^2(1-\alpha)^2 h^2_{t-1}(x_t) K(x_t,x_t) - (1-\alpha)^2 C^2 \eta^2_t h^2_{t-1}(x_t) \\
&= \norm{\bar{h}_t}^2_{\Hilbert} + (1-\alpha)^2 \eta^2_t h^2_{t-1}(x_t) \left( K(x_t,x_t) - C^2 \right) 
 \\ & \leq \norm{\bar{h}_t}^2_{\Hilbert} \ \ \ \ (\text{\tiny{\Assumption{RKHS_properties}}}).
\end{aligned}
\end{equation*}
The case $h_{t-1}(x_t) < M_t$  and $h_{t-1}(x'_t) \geq M_t$ can be solve in a symmetric way. Finally, for $h_{t-1}(x_t) < M_t$  and $h_{t-1}(x'_t) < M_t$ , the inequality follows directly.
\end{proof}
\begin{lemma}{\label{lemma:upperbound_h_2}}
    Assume $\theta \in [\frac{1}{2},1]$,$t_0 \geq 3$, $b=a^{-1}$ , $t_0^{\theta} \geq 2+ 4C^2 a$ and $t_0^{1-\theta} \geq 8b$, Then, with probability at least $1-\delta$:
    \begin{equation}
    \sup_{0 \leq k \leq t} \norm{h_k}_{\Hilbert}(k+t_0+1)^{\theta - \frac{1}{2}} \leq \ \frac{ aC}{\alpha} \left[ 5 at^{\frac{1}{2}-\theta}_0  + (14Ca^2+18)\sqrt{\log{(\bar{t})}} \right]  \log\left(\frac{2}{\delta} \right):=B_{t,\delta}.
    \end{equation}
\end{lemma}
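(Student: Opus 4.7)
The strategy is to combine the three preceding lemmas with a maximal Bernstein-type inequality for Hilbert-valued martingales applied to the scaled process $V_k := (k+t_0+1)^{\theta-\frac{1}{2}}\norm{h_k}_\Hilbert$. First I would specialize the truncation level by setting $M_t := \frac{2C^2 a b^{-1}}{\alpha}\bar{t}^{1-2\theta}$, which is non-increasing in $t$ because $\theta \geq \frac{1}{2}$, and then verify under the running hypotheses ($b = a^{-1}$, $t_0^{\theta} \geq 2 + 4 C^2 a$, $t_0^{1-\theta} \geq 8b$, $t_0 \geq 3$) that the remaining hypotheses of \Lemma{lemma:h_component}, \Lemma{lemma:k_component} and \Lemma{lemma:upperbound_h} are met. \Lemma{lemma:upperbound_h} then yields $\norm{h_t}_\Hilbert \leq \norm{\bar{h}_t}_\Hilbert$, reducing the task to uniform control of $\norm{\bar{h}_k}_\Hilbert$.

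Next, I would unroll the recursion isolated in \Lemma{lemma:h_component} as
\begin{equation*}
\bar{h}_t = \sum_{j=1}^{t} \eta_j\,\Gamma^t_j\,\zeta_j, \qquad \Gamma^t_j := \prod_{i=j+1}^{t}\bigl[\idH - \eta_i\bigl((1-\alpha)\overline{\ELP}_i + \alpha\overline{\ELQ}_i + \lambda_i\idH\bigr)\bigr],
\end{equation*}
where each $\Gamma^t_j$ is a self-adjoint contraction on $\Hilbert$ (since $\eta_i\lambda_i \leq \tfrac{1}{2}$ and $\overline{\ELP}_i, \overline{\ELQ}_i$ are positive) and $\{\eta_j\zeta_j\}_{j\geq 1}$ is a martingale difference adapted to $\{\Xi_j\}$. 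Using the identity $\eta_t\zeta_t = k_t - \eta_t\bigl((1-\alpha)\overline{\ELP}_t + \alpha\overline{\ELQ}_t\bigr)h_{t-1}$ together with \Lemma{lemma:k_component} and the truncation inequality $\norm{\overline{L}_t h_{t-1}}_\Hilbert,\norm{\overline{R}_t h_{t-1}}_\Hilbert \leq CM_t$, I would extract an almost-sure jump bound $\norm{\eta_j\zeta_j}_\Hilbert \leq U_j \sim C\eta_j M_j/\alpha$ and a conditional second-moment bound $\expec[\norm{\eta_j\zeta_j}^2_\Hilbert\mid\Xi_{j-1}] \leq \sigma_j^2 \sim C^2\eta_j^2(M_j^2 + 1)/\alpha^2$.

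With these ingredients I would introduce the stopping time $\tau := \inf\{k : V_k > B_1\}$, where $B_1 = aC\sqrt{2(2+\alpha^2)/(\alpha^2\gamma)}$ is the threshold from the second half of \Lemma{lemma:h_component} (with $\gamma = ab-(\theta-\tfrac{1}{2})$), so that on $\{k < \tau\}$ we already have the deterministic bound $\norm{h_k}_\Hilbert \leq B_1\bar{k}^{\frac{1}{2}-\theta}$, and on $\{k \geq \tau\}$ the supermartingale contraction of \Lemma{lemma:h_component} governs the evolution. Applying Pinelis' Banach-space maximal inequality in its Freedman/Bernstein refinement to the stopped Hilbert-valued martingale $\sum_{j\leq k\wedge\tau}\eta_j\Gamma^{k\wedge\tau}_j\zeta_j$, and summing $\sum_{j\leq t}\bar{j}^{2\theta-1}\sigma_j^2 \lesssim C^4 a^6 \alpha^{-2}\log\bar{t}$ together with $\bar{t}^{2\theta-1}U_{\max} \lesssim C a^2/\alpha$, one recovers with probability at least $1-\delta$ a uniform bound on $\sup_{k\leq t}V_k$ of the form $\tfrac{aC}{\alpha}\bigl[A\,t_0^{\frac{1}{2}-\theta} + B\sqrt{\log\bar{t}}\bigr]\log(2/\delta)$, with $A,B$ explicit constants to be matched to $5a$ and $14Ca^2+18$.

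The main obstacle is the bookkeeping: tracking the precise Pinelis/Freedman constants, and splitting the stopping-time dichotomy so that the deterministic contribution from the "small $V_k$" regime produces exactly $5a\,t_0^{\frac{1}{2}-\theta}$ while the stochastic contribution produces exactly $(14Ca^2+18)\sqrt{\log\bar{t}}\log(2/\delta)$. The factor $a^2$ traces back to $M_j\propto a^2$; the additive constant $18$ absorbs the $\alpha^2+1$ inside the bracket of \Lemma{lemma:k_component} together with the a.s.\ jump contribution; and the $\sqrt{\log\bar{t}}$ factor is intrinsic to the Freedman tail because the summed variance grows logarithmically at the critical exponent $\theta=\tfrac{1}{2}$ (and is dominated by the same quantity for $\theta>\tfrac{1}{2}$ under the chosen scaling).
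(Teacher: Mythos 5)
Your setup (fixing $M_t=\tfrac{2C^2ab^{-1}\bar t^{1-2\theta}}{\alpha}$, verifying the hypotheses of \Lemma{lemma:h_component}--\Lemma{lemma:upperbound_h}, reducing to $\norm{\overline h_k}_\Hilbert$, and dichotomizing on whether $\norm{h_{k}}_\Hilbert$ exceeds $B_1(k+t_0)^{\frac12-\theta}$) matches the paper. The gap is in the central step: the unrolled representation $\overline h_t=\sum_{j\le t}\eta_j\Gamma^t_j\zeta_j$ is not available. First, the recursion of \Lemma{lemma:h_component} reads $\overline h_t=[\idH-\eta_t((1-\alpha)\overline{\ELP}+\alpha\overline{\ELQ}+\lambda_t\idH)]h_{t-1}+\eta_t\zeta_t$ with $h_{t-1}$ on the right-hand side, not $\overline h_{t-1}$; since $h_{t-1}=\overline h_{t-1}-\eta_{t-1}[(1-\alpha)\underline L_{t-1}+\alpha\underline R_{t-1}]h_{t-2}$, telescoping produces extra terms that are not mean-zero, so the product formula does not close. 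Second, and more fundamentally, the operators $\overline{\ELP}_i,\overline{\ELQ}_i$ are conditional expectations of truncated operators whose truncation levels involve $h_{i-1}$; hence $\Gamma^t_j$ is a \emph{random}, $\Xi_{t-1}$-measurable operator that is correlated with $\zeta_j$ (both depend on the observations up to $j$ and beyond). Consequently $\{\eta_j\Gamma^t_j\zeta_j\}_j$ is not a martingale difference sequence with respect to $\{\Xi_j\}$, and the Pinelis/Freedman maximal inequality cannot be applied to that sum. (This is precisely the obstruction that forces the paper's two decompositions in \Sec{sec:martingale_reversed_martingale} to use either the deterministic operators $\bar\Pi_j^t$ or a reversed filtration.)

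The paper's proof avoids unrolling altogether: it passes to the \emph{scalar} process $\xi_i:=\norm{\overline h_i}_\Hilbert-\expec[\norm{\overline h_i}_\Hilbert\mid\Xi_{i-1}]$, which is automatically a real-valued martingale difference, and whose jump and conditional variance are controlled through $k_i$ via \Lemma{lemma:k_component} (note $\overline h_i-\expec[\overline h_i\mid\Xi_{i-1}]=k_i-\expec[k_i\mid\Xi_{i-1}]$). The one-step conditional contraction of the second part of \Lemma{lemma:h_component} gives $u_i\le (i+t_0)^{\theta-\frac12}\xi_i+u_{i-1}$ whenever $\norm{h_{i-1}}_\Hilbert\ge B_1(i+t_0)^{\frac12-\theta}$, the Pinelis--Bernstein bound (\Lemma{lemma:propositionA3}) is applied to the weighted scalar martingale $\nu_i=\sum_{j\le i}(j+t_0)^{\theta-\frac12}\xi_j\Ind{\norm{h_{j-1}}_\Hilbert\ge B_1(j+t_0)^{\frac12-\theta}}$, and the argument closes with a \emph{last-exit} index $m=\max\{j\le k:\norm{h_j}_\Hilbert<B_1(j+t_0+1)^{\frac12-\theta}\}$ giving $u_k\le u_{m+1}+\nu_k-\nu_{m+1}$; your first-passage stopping time $\tau$ would not guarantee the contraction holds at every step between $\tau$ and $k$, whereas the last-exit time does. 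Replacing your Hilbert-valued unrolling with this scalar martingale plus last-exit recursion is what makes the bound, with the stated constants, go through.
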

\begin{proof}
Let us start by verifying that the hypothesis of \Lemmas{lemma:h_component}, \ref{lemma:k_component}, and \ref{lemma:upperbound_h}. 

First  $t^{\theta}_0 \geq 2+ 4C^2 a = 2 a(b + 2C^2)$ and $\gamma=ab - \left( \theta - \frac{1}{2} \right) \in [\frac{1}{2},1]$, where we 
we have used the assumption $ab=1$.
The assumption $t_0 \geq 3$ implies
$t_0 \geq \max \left(2ab,2 \gamma, \gamma + \frac{2\theta-1}{\gamma} \right)$.
 
Finally, if we fix $M_t  = \frac{2C^2ab^{-1} \bar{t}^{1-2\theta}}{\alpha}$
, the fact that $t_0^{\theta} \geq 4C^2 a $ and $t_0^{1-\theta} \geq 8b$ implies:
%
\begin{equation*}
\begin{aligned}
    t_0^{1-\theta} \geq \frac{ t_0^{1-\theta}}{2} +  \frac{ t_0^{1-\theta}}{2}= \frac{ t_0^{1-\theta}}{2} +  \frac{ t_0^{1-2 \theta} (t_0^{\theta})}{2} 
    \geq b \left( 4  + 2 C^2 ab^{-1} t_0^{1-2\theta} \right) \geq b \left( 4 + \alpha M_t  \right) \geq b \left( 3 + \alpha (M_t+1) \right).
\end{aligned}
\end{equation*}
Then the required assumptions are satisfied. 

Take $i \in \N$, if $\norm{h_{i-1}}_{\Hilbert} \geq B_1 \bar{t}^{\frac{1}{2}-\theta}$, where $B_1 = a C\sqrt{2 \left(\frac{2+\alpha^2}{\alpha^2 \gamma}\right)}$ , then: 
\begin{equation}{\label{eq:h_xi}}
\begin{aligned}
\norm{h_i}_{\Hilbert} & \leq \norm {\bar{h}_i}_{\Hilbert} \ \ \ \ (\text{\tiny{Lemma\ref{lemma:upperbound_h} }})  \\
& = \norm {\bar{h}_i}_{\Hilbert} - \expec \left[  \norm {\bar{h}_i}_{\Hilbert} \,|\, \Xi_{i-1}  \right] + \expec \left[  \norm {\bar{h}_i}_{\Hilbert} \,|\, \Xi_{i-1}  \right]  \\
&\leq \xi_i + \left[1 -\frac{1}{i+t_0} \right ]^{\theta-\frac{1}{2}} \norm{h_{i-1}}_{\Hilbert} \ \ \ \ (\text{\tiny{Second part of Lemma\ref{lemma:h_component} }}),
\end{aligned}
\end{equation}
where $\xi_i:=\norm{\bar{h}_i}_{\Hilbert} - \expec \left[ \norm{\bar{h}_i}_{\Hilbert} \,|\,  \Xi_{i-1}  \right]$.

Notice that the stochastic process $\{\xi_k\}_{k \in \N}$ defines a martingale difference sequence, which additionally satisfies the following inequalities: 
\begin{equation}{\label{eq:xi_first_inequality}}
\begin{aligned}
\abs{\xi_i}  &\leq \norm{\bar{h}_i-\expec \left[  \bar{h}_i \,|\,  \Xi_{i-1} \right]}_{\Hilbert} \\  
&=  \norm{ k_{i}
-\expec \left[ k_{i} \,|\,  \Xi_{i-1} \right]}_{\Hilbert} \ \ \ \ (\text{\tiny{\Eq{eq:barh_t_k_t} }})
\\
&\leq \norm{ k_{i}}_{\Hilbert} + \norm{\expec \left[ k_{i} \,|\,  \Xi_{i-1} \right]}_{\Hilbert} 
\\ & \leq \norm{ k_{i}}_{\Hilbert} + \expec  \left[ \norm{k_{i}}_{\Hilbert} \,|\,  \Xi_{i-1} \right] \ \ \ \ (\text{\tiny{Jensen's inequality}}) \\
&\leq \frac{2 C ab^{-1}}{\alpha(i+t_0)^{2\theta-1}} \ \ \ \ (\text{\tiny{\Lemma{lemma:k_component}}}).
\end{aligned}
\end{equation}
In a similar manner we can verify:
\begin{equation}{\label{eq:xi_second_inequality}}
\begin{aligned}
\expec  \left[ \xi_i^2   \,|\, \Xi_{i-1} \right] & = \expec  \left[\norm{ k_{i}
-\expec \left[ k_{i} \,|\,  \Xi_{i-1} \right] }_{\Hilbert}^2  \,|\, \Xi_{i-1} \right] \\
&\leq \expec  \left[  \norm{k_i}_{\Hilbert}^2  \,|\, \Xi_{i-1} \right]
-  \norm{\expec \left[ k_{i} \,|\,  \Xi_{i-1} \right]}^2_{\Hilbert} \\
&\leq \expec  \left[  \norm{k_i}_{\Hilbert}^2  \,|\, \Xi_{i-1} \right] \\
&\leq  \frac{ 3 \eta_i^2 C^2 }{\alpha^2} \left[ \alpha^2(M_i^2+1) +1 \right]   \ \ \ \ (\text{\tiny{\Lemma{lemma:k_component}}}) \\ & = \frac{3\eta_i^2 C^2}{\alpha^2} \left(\alpha^2 ( \frac{4 C^4 a^2b^{-2} (t+i)^{2(1-2\theta)}}{\alpha^2}+1)+1 \right) \\
& \leq \frac{3 \eta_i^2 C^2}{\alpha^2}( 4C^2 a^2 b^{-2} + 2) \\
&=  \frac{12 \eta_i^2 C^2}{\alpha^2}( C^2 a^2 b^{-2} + \frac{1}{2}) \\
& \leq \frac{12 \eta_i^2 C^2}{\alpha^2} (Cab^{-1}+1)^2
\end{aligned}
\end{equation}
Notice that the sequence $\left\{(k+t_0)^{\theta-\frac{1}{2}} \xi_{k}\right\}$ defines a difference martingale as well which satisfies the inequalities:
\begin{equation}{\label{eq:norm_zeta_M}}
\begin{aligned}
\abs{(i+t_0)^{\theta-\frac{1}{2}} \xi_{i}} & \leq \frac{2C a^2 (i+t_0)^{\frac{1}{2}-\theta}}{\alpha} \ \ \ \ (\text{\tiny{\Eq{eq:xi_first_inequality} }})  \leq  \frac{2C a^2 t_0^{\frac{1}{2}-\theta}}{\alpha}
\end{aligned}
\end{equation}
\begin{equation}{\label{eq:norm_zeta_sigma}}
\begin{aligned}
\sum_{k=1}^{t} \expec  \left[ \left( (k+t_0)^{\theta-\frac{1}{2}}\xi_k \right)^2 \,|\, \Xi_{k-1} \right] & \leq \frac{12 a^2 C^2}{\alpha^2} (Cab^{-1}+1)^2  \sum_{k=1}^t (k+t_0)^{-1} \ \ \ \ (\text{\tiny{\Eq{eq:xi_second_inequality} }})  \\
& \leq\frac{12 a^2 C^2}{\alpha^2} (Cab^{-1}+1)^2\log(1+\frac{t}{t_0}).
\end{aligned}
\end{equation}
Let us define the term: 
\begin{equation}{\label{eq:nu_definition}}
\nu_i = \sum_{j=1}^{i} \xi_j (j+t_0)^{\theta-\frac{1}{2}} \Ind{\norm{h_{j-1}}_{\Hilbert} \geq B_1 (j+t_0)^{\frac{1}{2}-\theta}}.
\end{equation}
Inequalities \ref{eq:norm_zeta_M} and  \ref{eq:norm_zeta_sigma} imply the hypothesis of preposition A.3 in \cite{Tarres2014} (\Lemma{lemma:propositionA3}) are satisfied. Then the probability of the event $\Delta$,
$P(\Delta) \geq 1-\delta$, where: 
\begin{equation}{\label{eq:def_event_delta}}
\begin{aligned}
\Delta= \Big\{ \sup_{1 \leq i \leq t} \abs{\nu_i}  &\leq 2 \left(  \frac{2C a^2 t_0^{\frac{1}{2}-\theta}}{3\alpha} + \frac{2\sqrt{3} a C}{\alpha} (Cab^{-1}+1) \sqrt{ \log(1+\frac{t}{t_0})} \right) \log{\left(\frac{2}{\delta}\right)} \\
&\leq \frac{4 a C}{\alpha} \left( \frac{  a t_0^{\frac{1}{2}-\theta}}{3} + \sqrt{3}(Cab^{-1}+1)\sqrt{\log\left(1+\frac{t}{t_0}\right)} \right) \log{\left(\frac{2}{\delta}\right)} \\
& = \frac{4 a C}{\alpha} \left( \frac{  a t_0^{\frac{1}{2}-\theta}}{3} +\sqrt{3}(Ca^2+1) \sqrt{\log\left(1+\frac{t}{t_0}\right)} \right) \log{\left(\frac{2}{\delta}\right)} \Big\}.
\end{aligned}
\end{equation}
Assume that the event $\Delta$ holds and  let $u_k$ for all $k \in \N$ be: 
\begin{equation}
u_k= \norm{h_k} (k+t_0)^{\theta-\frac{1}{2}}.
\end{equation}
For all the elements $k \leq t $, let: 
\begin{equation}{\label{eq:def_m}}
m= \max\{ j \leq k : \norm{h_j}_{\Hilbert} < B_1(j+t_0+1)^{\frac{1}{2}-\theta} \}.
\end{equation}
If $m<k$, then:
\begin{equation}{\label{eq:upperbound_um}}
\begin{aligned}
u_{m+1} &\leq \left[ \left(\frac{m+t_0}{m+1+t_0} \right)^{\theta-\frac{1}{2}}  \norm{h_{m}}_{\Hilbert}  +   \abs{\xi_{m+1}} \right]  (m+1+t_0)^{\theta-\frac{1}{2}}  \ \ \ \ (\text{\tiny{\Eq{eq:h_xi} }})  \\
& <  \left[ \left(\frac{m+t_0}{m+t_0+1} \right)^{\theta-\frac{1}{2}} (m+t_0+1)^{\frac{1}{2}-\theta} B_1 +\frac{2 C a^2}{\alpha}(m+1+t_0)^{1-2 \theta}
\right] (m+1+t_0)^{\theta-\frac{1}{2}} \ \ (\text{\tiny{\Eq{eq:xi_first_inequality} and \Eq{eq:def_m}  }})  \\
&   \leq   a C \sqrt{2 \left( \frac{2+\alpha^2}{\alpha^2 \gamma} \right)}  + \frac{2C a^2}{\alpha} t_0^{\frac{1}{2} -\theta} \leq  a C \sqrt{ 4 \left(\frac{2+\alpha^2}{\alpha^2} \right)}  + \frac{2C a^2}{\alpha} t_0^{\frac{1}{2} -\theta} \ \ \ \ (\text{\tiny{ $\gamma \in [\frac{1}{2},1]$}  })  
\\ & \leq \frac{aC}{\alpha} \left( \sqrt{4(2+\alpha^2)} + 2 a t_0^{\frac{1}{2}-\theta}   \right) \leq \frac{2aC}{\alpha} \left( \sqrt{3} + a t_0^{\frac{1}{2}-\theta}   \right).
\end{aligned}
\end{equation}
Given \Expr{eq:h_xi}, we can verify:

\begin{equation}
(i+t_0)^{\theta-\frac{1}{2}} \norm{h_i}_{\Hilbert} \leq (i+t_0)^{\theta-\frac{1}{2}} \xi_i + (i-1+t_0)^{\theta-\frac{1}{2}} \norm{h_{i-1}}_{\Hilbert} 
\end{equation}

Then by recursion and given \ref{eq:nu_definition}, we get: 
\begin{equation}
u_k \leq u_{m+1} + \nu_{k} -\nu_{m+1}.
\end{equation}
For $\delta$ sufficiently small, we have: 
\begin{equation}
\begin{aligned}
u_k &\leq u_{m+1}+ \abs{\nu_k} + \abs{\nu_{m+1}}\\
& \leq \frac{2aC}{\alpha} \left[ \left(\frac{4}{3}+1
\right) at^{\frac{1}{2}-\theta}_0 + \sqrt{3} + 4 \sqrt{3} (Ca^2+1)
\sqrt{\log \left(1+\frac{t}{t_0}\right)}\right]   \log \left(\frac{2}{\delta} \right)  \ \ \ \ (\text{\tiny{\Eq{eq:def_event_delta}  and \Eq{eq:upperbound_um} }})  \\
&=\frac{aC}{\alpha} \left[ \left(\frac{14}{3}
\right) at^{\frac{1}{2}-\theta}_0 + 2\sqrt{3} + 8 \sqrt{3} (Ca^2+1)
\sqrt{\log \left(1+\frac{t}{t_0}\right)}\right]   \log \left(\frac{2}{\delta} \right)  \ \ \ \ (\text{\tiny{\Eq{eq:def_event_delta}  }})  \\
& \leq  \frac{aC}{\alpha} \left[ 5 at^{\frac{1}{2}-\theta}_0 + 4 + 14(Ca^2+1)\sqrt{\log{\left(\frac{\bar{t}}{t_0}\right)}} \right]  \log\left(\frac{2}{\delta} \right) \\
&\leq  \frac{ aC}{\alpha} \left[ 5 at^{\frac{1}{2}-\theta}_0  + (14Ca^2+18)\sqrt{\log{(\bar{t})}} \right]  \log\left(\frac{2}{\delta} \right),
\end{aligned}
\end{equation}
where the fact that $t_0 \geq 3$ implies
$\sqrt{\log{(t+t_0)}} \geq 1$, meaning $ 4 + 14(Ca^2+1)\sqrt{\log{\left(\frac{\bar{t}}{t_0}\right)}} \leq (14Ca^2+18)\sqrt{\log{(\bar{t})}}  \log\left(\frac{2}{\delta} \right)$.
\end{proof}

\inlinetitle{Proof of \Theorem{thm:sample_error}}{.}~
\begin{proof}
Let us start with a basic inequality that will be useful during the proof, suppose $f$ is 
$\Xi_{j-1}$ measurable then we have: 
\begin{equation}{\label{eq:norm_H_norm_L_t_1}}
\begin{aligned}
&\expec \left[ \norm{LR_j f}^2_{\Hilbert}  \,|\, \Xi_{j-1} \right] \\
& =  \expec \left[ \dotH{(1-\alpha)f(x_j)K(x_j,\cdot) + \alpha f(x'_j) K(x'_j,\cdot) }{(1-\alpha)f(x_j)K(x_j,\cdot) + \alpha f(x'_j) K(x'_j,\cdot)}  \,|\, \Xi_{j-1} \right]  \\ 
& = \expec \left[ (1-\alpha)^2 K(x_j,x_j)f^2(x_j)+ 2 (1-\alpha) \alpha K(x_j,x'_j) f(x_j)f(x'_j) + \alpha^2 K(x'_j,x'_j) f^2(x'_j)  \,|\, \Xi_{j-1} \right] \\
& \leq  C^2 \expec \left[ \left((1-\alpha) f(x_j)+\alpha f(x'_j) \right)^2 \,|\, \Xi_{j-1} \right]  \ \ \ \ (\text{\tiny{\Assumption{ass:kernel_map_upperbound} }})  \\
& \leq  C^2 \expec  \left[(1-\alpha) f^2(x_j)+\alpha f^2(x'_j)  \,|\, \Xi_{j-1} \right]  (\text{\tiny{Jensen's inequality }})
= C^2 \norm{f}^2_{\ltwo}.
\end{aligned}
\end{equation}
Notice that for $f \in \Hilbert$:
\begin{equation}{\label{eq:norm_operator_LR}}
\begin{aligned}
    \norm{LR_j f}_{\Hilbert} &\leq (1-\alpha) \abs{\dotH{K(x_j,\cdot)}{f}} \norm{K(x_j,\cdot)}_{\Hilbert} + \alpha \abs{\dotH{K(x'_j,\cdot)}{f}} \norm{K(x_j,\cdot)}_{\Hilbert} \\
   & \leq \left[
    (1-\alpha) \norm{K(x_j,\cdot)}^2_{\Hilbert}  + \alpha \norm{K(x'_j,\cdot)}^2_{\Hilbert} \right]\norm{f}_{\Hilbert}  \ \ \ \ (\text{\tiny{Cauchy–Schwarz inequality}}) \\
    &\leq C^2 \norm{f}_{\Hilbert}
\end{aligned}
\end{equation}
which implies $\norm{LR_j} \leq C^2$.

Fix $t\in \N$, $\delta \in [0,1]$, and let 
\begin{equation}
B_{t,\delta}=   \frac{ aC}{\alpha} \left[ 5 at^{\frac{1}{2}-\theta}_0  + (14Ca^2+18)\sqrt{\log{(\bar{t})}} \right]  \log\left(\frac{2}{\delta} \right).
\end{equation}
And the following stochastic process.
\begin{equation}
\Upsilon_j= \eta_j \bar{\Pi}_{j+1}^t \epsilon_j \Ind{\norm{h_{j-1}}_{\Hilbert} (j+t_0)^{\theta-\frac{1}{2}} \leq B_{t,\delta} }.
\end{equation}
Verifying that the sequence $\{\Upsilon_j\}_{j\in \N}$ is a difference martingale is easy. The idea to finish the proof is to apply \Lemma{lemma:propositionA3} to the sequence $\Upsilon_j$, this means, we should  show the existence of $M>0$ and $\sigma^2 >0$ such that $\norm{\Upsilon_j}_{\ltwo} \leq M$ and $\sum_{j=1}^t \expec \left[\norm{\Upsilon_j}^2_{\ltwo} \,|\, \Xi_{j-1}\right ] \leq \sigma^2$. 

Let us start by identifying $\sigma^2$. Suppose $\norm{h_{j-1}}_{\Hilbert} \leq B_{t,\delta} (j+t_0)^{\frac{1}{2}-\theta}$, then by using the decomposition  $f_j= r^{\alpha} + g_{j} + h_{j} $ and the inequalities stated in \Lemma{lemma:g_upperbound} we have: 
\begin{equation}{\label{eq:norm_noise_2}}
\begin{aligned}
\expec \left[ \norm{\epsilon_j}^2_{\Hilbert} \,|\, \Xi_{j-1}\right] &= \expec \left[ \norm{ ( \CO -  LR_j) f_{j-1} + K(x'_j,\cdot) -  \CO r^{\alpha}}^2_{\Hilbert} \,|\, \Xi_{j-1} \right]  \\
& \leq  \expec \left[\norm{K(x'_j,\cdot) - LR_j f_{j-1}}^2_{\Hilbert}   \,|\, \Xi_{j-1}\right]  \ \  (\text{\tiny{After developing the norm and taking conditional expectations}})  \\
& \leq  \expec \left[\norm{  K(x'_j,\cdot) - LR_j r^{\alpha}  - LR_j  g_{j-1} - LR_j h_{j-1}}^2_{\Hilbert}   \,|\, \Xi_{j-1}\right] \\ 
& \leq 4 \expec\left[ \norm{K(x'_j,\cdot)}^2_{\Hilbert} + \norm{LR_j r^{\alpha} }^2_{\Hilbert} +\norm{LR_j  g_{j-1}  }^2_{\Hilbert} + \norm{LR_j h_{j-1}}^2_{\Hilbert} \,|\, \Xi_{j-1}\right] \ \  (\text{\tiny{$2 \dotH{a}{b} \leq  \norm{ a}^2_{\Hilbert} +  \norm{b}^2_{\Hilbert}  $}} )  \\
& \leq 4 \left[ C^2 + \frac{C^2}{\alpha^2} + \frac{C^2}{\alpha^2} +\expec\left[ \norm{LR_j}^2 \norm{h_{j-1}}^2_{\Hilbert} \,|\, \Xi_{j-1}\right] \right]  \  \ (\text{\tiny{\Assumption{ass:kernel_map_upperbound} },  \Lemma{lemma:g_upperbound} and \Eq{eq:norm_H_norm_L_t_1}})  \\
&  \leq 4 C^2 \left[1+\frac{2}{\alpha^2} + C^2 (j+t_0)^{1-2\theta} B^2_{t,\delta}\right] \ \ (\text{\tiny{ \Eq{eq:norm_operator_LR}}}) \\
&:= B'_{j,t,\delta}.   \\
\end{aligned}
\end{equation}
If we use the isometry of the operator $\CO^{\frac{1}{2}}: \ltwo \rightarrow \Hilbert$ and the fact that it is a compact operator then there exists an orthonormal eigensystem $(\mu_{k},\phi_{k})_{k \in \N}$ of $\CO$, where $\{\mu_{k}\}_{k \in \N}$  are strictly positive and arranged in decreasing order (see Proposition 2.2 in \cite{Dieuleveut2017}). Let us define $a_i= \eta_i \lambda_i + \eta_i \mu_j$. 

First notice that for $j \leq t$, given Eq \ref{eq:pi_operator}: 
\begin{equation}{\label{eq:norm_pi_operator_upperbound}}
\begin{aligned}
\norm{\bar{\Pi}_{j+1}^t}  &\leq
\norm{\prod_{i=j+1}^{t} (\idH- \eta_i \mathbf{A}_i)}  \leq  \prod_{i=j+1}^{t} (1-\eta_i (\lambda_i+ \mu_{j})) = \prod_{i=j+1}^{t} (1-a_i).
\end{aligned}
\end{equation}
Then we can verify the following inequality: 
\begin{equation}
\begin{aligned}
\sum_{j=1}^t \expec \left[ \norm{\Upsilon_j}^2_{\ltwo}  \,|\, \Xi_{j-1}\right] &= \sum_{j=1}^t \expec \left[ \norm{ \CO^{\frac{1}{2} }\Upsilon_j}^2_{\Hilbert}  \,|\, \Xi_{j-1}\right]  = \sum_{j=1}^{t} \eta^2_j 
\expec \left[ \norm{ \CO^{\frac{1}{2}}  \bar{\Pi}_{j+1}^t\epsilon_j}^2_{\Hilbert}  \,|\, \Xi_{j-1}\right] \\
& = \sum_{j=1}^t \left( \eta_j^2 \norm{\bar{\Pi}_{j+1}^t \CO \bar{\Pi}_{j+1}^t} \right) \expec \left[ \norm{\epsilon_j}^2_{\Hilbert}  \,|\, \Xi_{j-1}\right] \\
& \leq \sum_{j=1}^t \eta_j^2 B'_{j,t,\delta} \norm{\bar{\Pi}_{j+1}^t \CO \bar{\Pi}_{j+1}^t}  \ \ \ \ (\text{\tiny{\Eq{eq:norm_noise_2}}}) \\
& \leq \sup_{\{\mu_k: k \in \N\}} \sum_{j=1}^t \eta^2_j B'_{j,t,\delta} \mu_{k} \prod_{i=j+1}^t (1-a_i)^2 
\ \ \ \ (\text{\tiny{\Eq{eq:norm_pi_operator_upperbound}}}) 
\\
&= \sup_{\{\mu_k: k \in \N\}}  \left[ \sup_{j} \eta_j B'_{j,t,\delta} \prod_{i=j+1}^t (1-a_i) \right] \left[ \sum_{j=1}^{t} \eta_j  \mu_k \prod_{i=j+1}^t (1-a_i)  \right].
\end{aligned}
\end{equation}
For a large value of $t_0$ we can verify for the first element of the product: 
\begin{equation}{\label{eq:upperboundB_ai}}
\begin{aligned}
\sup_{j} \eta_j B'_{j,t,\delta} \prod_{i=j+1}^t (1-a_i) 
&\leq \sup_{j} \eta_j B'_{j,t,\delta} \prod_{i=j+1}^t (1-\eta_i \lambda_i)  \\ & \leq 
\sup_{j} \eta_j B'_{j,t,\delta} \prod_{i=j+1}^t (1-\eta_i \lambda_i) \\
&\leq 4a C^2 \sup_{j}  \frac{j+t_0}{\bar{t}}\left( \frac{1+\frac{2}{\alpha^2}}{(j+t_0)^{\theta}} + \frac{ C^2 B^2_{t,\delta}}{(j+t_0)^{3 \theta-1}} \right) \\
& \leq \frac{4 a C^2}{\bar{t}^{\theta}} \left( 1+ \frac{2}{\alpha^2} + \frac{ C^2 B^2_{t,\delta}}{\bar{t}^{(2 \theta-1)}}  \right).
\end{aligned}
\end{equation}
For the second element of the product, we can verify: 
\begin{equation}
{\label{eq:sum_eigenvalues}}
\begin{aligned}
\sum_{j=1}^t \eta_j \mu_k \prod_{i=j+1}^t (1- a_i) &\leq \sum_{j=1}^t (1 -(1-\eta_j \mu_k)) \prod_{i=j+1}^t( 1- \eta_i \mu_k) \\
&= 1 - \prod_{i=1}^{t} (1-\eta_i \mu_k) \leq 1.
\end{aligned}
\end{equation}
By combining both bounds \ref{eq:upperboundB_ai} and \ref{eq:sum_eigenvalues}  we obtain: 
\begin{equation}{\label{eq:sigma_t}}
\sum_{j=1}^t \expec \left[ \norm{\Upsilon_j}^2_{\ltwo} \,|\, \Xi_{j-1}\right ] \leq \frac{4aC^2}{\bar{t}^{\theta}} \left(1+\frac{2}{\alpha^2} + \frac{C^2B^2_{t,\delta}}{\bar{t}^{2\theta-1}}  \right)
\end{equation}
Now we will identify $M$. Let us start by upperbounding the following term via \Lemma{lemma:g_upperbound}: 
\begin{equation}{\label{eq:chi_M}}
\begin{aligned}
\norm{K(x'_j,\cdot) - LR_j(f_{j-1})}_{\Hilbert} &= 
\norm{K(x'_j,\cdot) - LR_j(r^{\alpha} + g_{j-1} +h_{j-1})}_{\Hilbert} \\
& \leq \norm{K(x'_j,\cdot)}_{\Hilbert}+ \norm{LR_j(r^{\alpha} + g_{j-1})}_{\Hilbert} +  \norm{LR_j(h_{j-1})}_{\Hilbert} \\
& \leq C + \norm{LR_j} \norm{(r^{\alpha} + g_{j-1})}_{\Hilbert} + \norm{LR_j} \norm{h_{j-1}}_{\Hilbert} \\
& \leq  C + \frac{3 C^2}{\alpha\sqrt{\lambda_{j-1}}}  + C^2 B_{t,\delta} (j+t_0)^{\frac{1}{2}-\theta}  (\text{\tiny{\Lemma{lemma:g_upperbound}, \Assumption{ass:kernel_map_upperbound} and \Eq{eq:norm_operator_LR}}}) \\
& := C_{j,t,\delta}.
\end{aligned}
\end{equation}
By using the fact that $f_{j-1}$ is $\Xi_{j-1}$ measurable we have: 
\begin{equation}{\label{eq:chi_j}}
\begin{aligned}
\epsilon_j &=( \CO - LR_{j})f_{j-1} +  K(x'_j,\cdot) - \CO r^{\alpha}\\
&=( \CO - LR_{j})f_{j-1} + K(x'_j,\cdot) - \ExpecAlpha{K(y,\cdot) r^{\alpha}(y)} (\text{\tiny{\Expr{eq:covariance_operator} }}) \\
& = K(x'_j,\cdot) - LR_j(f_{j-1}) - \expec \left[ K(x'_j,\cdot) -LR_j(f_{j-1})   \,|\, \Xi_{j-1} \right].
\end{aligned}
\end{equation}
where the last inequality is due to the definition of the likelihood-ratio and the fact that the observations are independent in time. 

We upperbound the norm $\norm{\epsilon_{j}}_{\Hilbert}$ by: 
\begin{equation}{\label{eq:upperbound_chi_j}}
\begin{aligned}
\norm{\epsilon_j}_{\Hilbert} &= \norm{K(x'_j,\cdot) - LR_j(f_{j-1}) - \expec \left[ K(x'_j,\cdot) -LR_j(f_{j-1})   \,|\, \Xi_{j-1} \right]}_{\Hilbert} \ \ \ \ (\text{\tiny{\Eq{eq:chi_j})}})  \\
& \leq \norm{K(x'_j,\cdot) - LR_j(f_{j-1})}_{\Hilbert} + \norm{\expec \left[ K(x'_j,\cdot) -LR_j(f_{j-1})   \,|\, \Xi_{j-1} \right]}_{\Hilbert} \\
&\leq \norm{K(x'_j,\cdot) - LR_j(f_{j-1})}_{\Hilbert} + \expec \left[ \norm{ K(x'_j,\cdot) -LR_j(f_{j-1})}_{\Hilbert}  \,|\, \Xi_{j-1} \right] \ \ \ \ (\text{\tiny{Jensen's inequality}})   \\
& \leq 2 \left( C + \frac{3 C^2 }{\sqrt{\lambda_{j-1}}}  + C^2 B_{t,\delta} (j+t_0)^{\frac{1}{2}-\theta} \right)  
\ \ \ \ (\text{\tiny{\Eq{eq:chi_M}}}) \\ &= 2 C_{j,t,\delta}.
\end{aligned}
\end{equation}
Therefore by using the hypothesis $t_0^{\theta} \geq 2+ 4 C^2 a$, we can deduce $\frac{C \sqrt{a}}{\bar{t}^{\frac{\theta}{2}}} \leq 1$ and: 
\begin{equation}{\label{eq:M}}
\begin{aligned}
\norm{\Upsilon_j}_{\ltwo} &=
\norm{ \CO^{\frac{1}{2}}\Upsilon_j}_{\Hilbert}  \leq \eta_j \norm{\CO^{\frac{1}{2}}\Pi_{j+1}^{t} \epsilon_j}_{\Hilbert}  \leq  2 \eta_j  C_{j,t,\delta} \norm{\Pi_{j+1}^{t}\CO\Pi_{j+1}^{t}}^{\frac{1}{2}} \ \ \ \ (\text{\tiny{\Eq{eq:upperbound_chi_j}}}) \\
& \leq 2 C \sup_{j} \eta_j C_{j,t,\delta} \prod_{i=j+1}^t (1- \eta_j \lambda_j)  \ \ \ \ (\text{\tiny{As $\norm{\CO} \leq C^2$}})  \\
& = 2 a C^2  \sup_{j} \frac{j+t_0}{\bar{t}} \left(\frac{1}{(j+t_0)^{\theta}} + \frac{ 3 C\sqrt{a}}{\alpha (j+t_0)^{\left(\frac{3\theta-1}{2}\right)}}  + \frac{C B_{t,\delta}}{(j+t_0)^{2\theta-\frac{1}{2}}}  \right) \ \ \ \ (\text{\tiny{\Eq{eq:chi_M}}}) \\
& \leq  2 a C^2 \left(  \frac{1}{\bar{t}^\theta} +\frac{3C\sqrt{a}}{\alpha \bar{t}^{\left(\frac{3\theta-1}{2}\right)}}  + \frac{C B_{t,\delta}}{\bar{t}^{2\theta-\frac{1}{2}}} \right) \\
& \leq  \frac{2 a C^2}{\bar{t}^{\theta}} \left( 1 +\frac{3C\sqrt{a}}{\alpha \bar{t}^{\left(\frac{\theta-1}{2}\right)}}  + \frac{C B_{t,\delta}}{\bar{t}^{\theta-\frac{1}{2}}} \right) \\
& \leq  \frac{2 \sqrt{a} C}{\bar{t}^\frac{\theta}{2}} \left(  1 + \frac{3C^2 a}{\alpha \bar{t}^{\left(\theta-\frac{1}{2}\right)}} + \frac{ C B_{t,\delta}}{\bar{t}^{\theta-\frac{1}{2}}} \right) \ \ \ \ (\text{\tiny{The hypothesis $t_0^{\theta} \geq 2+ 4C^2a$ implies $\frac{t_0^{\frac{\theta}{2}}}{C \sqrt{a}}\geq 1$}})
\end{aligned}
\end{equation}
Both Inequalities\,\ref{eq:sigma_t} and \ref{eq:M} imply that the hypothesis of Proposition A.3 is satisfied. Then the inequality holds with a probability at least 
$1-\delta$ for $\delta \in (0,1)$ we have: 
\begin{equation*}
\begin{aligned}
\sup_{1 \leq k \leq t} \norm{\sum_{j=1}^k \Upsilon_j}_{\ltwo} & \leq 4 \frac{\sqrt{a}C}{\bar{t}^{\frac{\theta}{2}}}  \left(  \frac{1}{3} + \frac{C^2a}{\alpha \bar{t}^{\left(\theta-\frac{1}{2}\right)} } + \frac{C B_{t,\delta}}{3 \bar{t}^{\left(\theta-\frac{1}{2}\right)}} + 1 + \frac{\sqrt{2}}{\alpha} + \frac{ C B_{t,\delta}}{\bar{t}^{\left(\theta-\frac{1}{2}\right)} }   \right) \log \left(\frac{2}{\delta} \right)  \\
& =  4 \frac{\sqrt{a}C}{\bar{t}^{\frac{\theta}{2}}}  \left( \frac{4}{3}  + \frac{\sqrt{2}}{\alpha}+ \left(\frac{4C}{3}\right)
\frac{ B_{t,\delta}}{\bar{t}^{\left(\theta-\frac{1}{2}\right)}}
+ \frac{C^2 a}{ \alpha\bar{t}^{\left(\theta-\frac{1}{2}\right)}}
\right) \log \left(\frac{2}{\delta} \right) \\
& \leq 
8 \frac{\sqrt{a}C}{\bar{t}^{\frac{\theta}{2}}}   \left(  \frac{1+\alpha}{\alpha}  \right) \log \left(\frac{2}{\delta} \right)  +  \left( \frac{16C^3}{3 \alpha} \right) \left[  5  a^{\frac{5}{2}} + a^{\frac{3}{2}}(14Ca^2+18)\sqrt{\log{(\bar{t})}}   \right] \frac{\log^2 \left(\frac{2}{\delta} \right)}{\bar{t}^{\frac{3\theta-1}{2}}} \\
& + \left( \frac{4 a^{\frac{3}{2}} C^3}{ \alpha} \right) \frac{\log \left(\frac{2}{\delta}\right)  }{\bar{t}^{\frac{3\theta-1}{2}}}   \\ 
&  \leq 
8 \frac{\sqrt{a}C}{\bar{t}^{\frac{\theta}{2}}}   \left(  \frac{1+\alpha}{\alpha}  \right) \log \left(\frac{2}{\delta} \right)  + \left( \frac{16C^3}{3 \alpha} \right) \left[   5   a^{\frac{5}{2}} + a^{\frac{7}{2}}(14C+2)\sqrt{\log{(\bar{t})}}  \right] \frac{\log^2 \left(\frac{2}{\delta} \right)}{\bar{t}^{\frac{3\theta-1}{2}}} \\
& + \left( \frac{4 a^{\frac{7}{2}} C^3}{ \alpha} \right) \frac{\log \left(\frac{2}{\delta}\right)  }{\bar{t}^{\frac{3\theta-1}{2}}} \ \ \ \ (\text{\tiny{The hypothesis $a \geq 4$}}) \\
& \leq 
8 \frac{\sqrt{a}C}{\bar{t}^{\frac{\theta}{2}}}   \left(  \frac{1+\alpha}{\alpha}  \right) \log \left(\frac{2}{\delta} \right)  +  \left[   \frac{32 a^{\frac{5}{2}} C^3}{\alpha} + \frac{4a^{\frac{7}{2}}C^3}{\alpha}(20C+4)\sqrt{\log{(\bar{t})}}  \right] \frac{\log^2 \left(\frac{2}{\delta} \right)}{\bar{t}^{\frac{3\theta-1}{2}}} \\
& + \left( \frac{4 a^{\frac{7}{2}} C^3}{ \alpha} \right) \frac{\log \left(\frac{2}{\delta}\right)  }{\bar{t}^{\frac{3\theta-1}{2}}} \ \ \ \ \\
& \leq 
8 \frac{\sqrt{a}C}{\bar{t}^{\frac{\theta}{2}}}   \left(  \frac{1+\alpha}{\alpha}  \right) \log \left(\frac{2}{\delta} \right)  +  \left[  \frac{32 a^{\frac{5}{2}} C^3}{\alpha} + \frac{4a^{\frac{7}{2}}C^3}{\alpha}(20C+4+ \frac{1}{\log(2)})\sqrt{\log{(\bar{t})}}  \right] \frac{\log^2 \left(\frac{2}{\delta} \right)}{\bar{t}^{\frac{3\theta-1}{2}}} \\
& \leq 
8 \frac{\sqrt{a}C}{\bar{t}^{\frac{\theta}{2}}}   \left(  \frac{1+\alpha}{\alpha}  \right) \log \left(\frac{2}{\delta} \right)  +  \left[  \frac{32 a^{\frac{5}{2}} C^3}{\alpha} + \frac{8a^{\frac{7}{2}}C^3}{\alpha}(10C+3)\sqrt{\log{(\bar{t})}}  \right] \frac{\log^2 \left(\frac{2}{\delta} \right)}{\bar{t}^{\frac{3\theta-1}{2}}} \\
& \leq   \frac{\sqrt{a}  B_4}{\bar{t}^{\frac{\theta}{2}}} \log \left(\frac{2}{\delta} \right) + \left[ B_5 a^{\frac{5}{2}} + B_6 a^{\frac{7}{2}} \sqrt{\log{\bar{t}}} \right] \frac{\log^2 \left(\frac{2}{\delta} \right)}{\bar{t}^{\frac{3\theta-1}{2}}}.
\end{aligned}
\end{equation*}
Where we have used the assumption $t_0 \geq 3$ and the constants $B_4,B_5,B_6$. 
\begin{equation*}
8 C \left(  \frac{1+\alpha}{\alpha} \right) \leq  \frac{16 C}{\alpha}=B_4  \ \ \ B_5= \frac{32 C^3}{\alpha} \ \ \ B_6= \frac{8 C^3(10C+3)}{\alpha}.
\end{equation*}
\end{proof}

\inlinetitle{Upperbound for $\mathcal{E}'_{\text{sample}}(t)=\norm{\sum_{j=1}^{t} \eta_j\Pi_{j+1}^t (A_jf_{\lambda_j}-b_j) }_{\Hilbert}$}{.}

In this section, we focus on developing the required components for proving \Theorem{thm:sample_error_Hilbert}.
\begin{lemma}{\label{lemma:norm_gradient_descent}} We have:
\begin{enumerate}
\item $\norm{A_t f_{\lambda_t}-b_t}_{\Hilbert} \leq  \frac{1}{\sqrt{\lambda_t}} \left( \frac{C+1}{\alpha} + C\right),$ \text{ if } $t_0^{1-\theta} \geq b$; 
\item $\expec \left[ \norm{A_t f_{\lambda_t} - b_t}^2_{\Hilbert} \right] \leq 2 C^2  \left( \frac{1+\alpha^2}{\alpha^2} \right)$.
\end{enumerate}%
\begin{proof}
By the definition given of $f_{\lambda_t}$ in \Eq{eq:f_lambda}, we have that for any $\lambda>0$: 
\begin{equation}
 \ExpecAlpha{(f_{\lambda}-r^{\alpha})^2(y)}  + \lambda \norm{f_{\lambda}}^2_{\Hilbert} \leq  \ExpecAlpha{(r^{\alpha})^2(y)} \leq \frac{1}{\alpha^2},
\end{equation}
which implies: 
\begin{equation}{\label{eq:upperbound_f_lambda_Hilbert}}
\norm{f_{\lambda}}_{\Hilbert} \leq \frac{1}{\alpha \sqrt{\lambda}}.
\end{equation}
On the other hand, using the close-form solution of $f_{\lambda}$ we get: 
\begin{equation}{\label{eq:upperbound_f_lambda_ltwo}}
\norm{f_{\lambda}}_{\Ltwo}=\norm{(\CO+\lambda I)^{(-1)} \CO r^{\alpha}}_{\Ltwo} \leq \norm{(\CO+\lambda I)^{(-1)} \CO } \norm{r^{\alpha}}_{\Ltwo} \leq \frac{1}{\alpha}.
\end{equation}
Moreover, we know: 
\begin{equation}{\label{eq:LRt_f_lambdat}}
\begin{aligned}
\expec \left[ \norm{LR_t f_{\lambda_t}}^2_{\Hilbert}\right] &=  \expec \left[ \dotH{(1-\alpha)f_{\lambda_t}(x_t)K(x_t,\cdot) + \alpha f_{\lambda_t}(x'_t) K(x'_t,\cdot) }{(1-\alpha)f_{\lambda_t}(x_t)K(x_t,\cdot) + \alpha f_{\lambda_t}(x'_t) K(x'_t,\cdot)} \right]  \\ 
& = \expec \left[ (1-\alpha)^2 K(x_t,x_t)f_{\lambda_t}^2(x_t)+ 2 (1-\alpha) \alpha K(x_t,x'_t) f_{\lambda_t}(x_t)f_{\lambda_t}(x'_t) + \alpha^2 K(x'_t,x'_t) f_{\lambda_t}^2(x'_t) \right] \\
& \leq  C^2 \expec \left[ \left((1-\alpha) f_{\lambda_t}(x_t)+\alpha f_{\lambda_t}(x'_t) \right)^2\right]  \ \ \ \ (\text{\tiny{\Assumption{ass:kernel_map_upperbound} }})  \\
& \leq  C^2 \expec  \left[(1-\alpha) f_{\lambda_t}^2(x_t)+\alpha f_{\lambda_t}^2(x'_t)  \right] \ \ \ \ (\text{\tiny{Jensen's inequality}})
\\ 
&= C^2 \norm{f_{\lambda_t}}^2_{\ltwo} \leq \frac{C^2}{\alpha^2}.
\end{aligned}
\end{equation}
%
Then by putting together these elements, we can proof the first point of \Lemma{lemma:norm_gradient_descent}:
\begin{equation*}
\begin{aligned}
\norm{A_t f_{\lambda_t}-b_t}_{\Hilbert} &\leq (1-\alpha)\norm{ K(x_t,\cdot) f_{\lambda_t}(x_t)}_{\Hilbert} + \alpha \norm{K(x'_t,\cdot) f_{\lambda_t}(x'_t)}_{\Hilbert} + \lambda_t \norm{ f_{\lambda_t}}_{\Hilbert} + \norm{K(x'_t,\cdot)}_{\Hilbert} \\
& \leq  C( \frac{1}{\alpha \sqrt{\lambda_t}}+1)+ \frac{\sqrt{\lambda_t}}{\alpha} \ \ \ \ (\text{\tiny{\Eq{eq:upperbound_f_lambda_Hilbert} }})\\
& \leq \frac{1}{\sqrt{\lambda_t}} ( \frac{C}{\alpha}+C\sqrt{\lambda_t}+ \frac{\lambda_t}{\alpha})\\
& \leq \frac{1}{\sqrt{\lambda_t}} \left( \frac{C+1}{\alpha} + C\right),
\end{aligned}
\end{equation*}
where in the last equality we have used the hypothesis $t_0^{1-\theta} \geq b$, which implies $\lambda_t \leq 1$. 

Given the definition of $f_{\lambda}$, we have
$\CO f_{\lambda}+\lambda f_{\lambda}= \CO r^{\alpha}$,  which leads to: 
\begin{equation}
    A_tf_{\lambda_t}-b_t=LR_tf_{\lambda_t}-K(x'_t,\cdot) + \lambda_t f_{\lambda_t}=  (LR_t - \CO) f_{\lambda_t} +  \CO r^{\alpha} - K(x'_t,\cdot),
\end{equation}
Then, we verify the second point of \Lemma{lemma:norm_gradient_descent}: 
\begin{equation*}
\begin{aligned}
\expec \left[ \norm{A_t f_{\lambda_t}-b_t}^2_{\Hilbert} \right] &= \expec \left[ \norm{(LR_t - \CO) f_{\lambda_t} +  \CO r^{\alpha} - K(x'_t,\cdot)}^2_{\Hilbert} \right] \\
& \leq 2 \left[  \expec \left[ \norm{(LR_t - \CO) f_{\lambda_t}}^2_{\Hilbert} +  \norm{\CO r^{\alpha} - K(x'_t,\cdot)}^2_{\Hilbert} \right] \right] \ \ \  \ (\text{\tiny{$2 \dotH{a}{b} \leq  \norm{ a}^2_{\Hilbert} +  \norm{b}^2_{\Hilbert}  $}} )  \\
&\leq 2 \left[  \expec \left[ \norm{LR_t  f_{\lambda_t}}^2_{\Hilbert} +  \norm{ K(x'_t,\cdot)}^2_{\Hilbert} \right] \right]  \ \ \ \  (\text{\tiny{After developing the norm and taking expectations}}) \\
& \leq 2 C^2  \left( \frac{1+\alpha^2}{\alpha^2} \right)  \ \ \ \  (\text{\tiny{\Eq{eq:LRt_f_lambdat} and \Assumption{ass:kernel_map_upperbound} }}).
\end{aligned}
\end{equation*}
\end{proof}
\inlinetitle{Proof \Theorem{thm:sample_error_Hilbert}}{.}~
\begin{proof}
The idea of the proof is to use \Lemma{lemma:propositionA3} to generate a probabilistic bound for the quantity: 
\begin{equation*}
\mathcal{E}'_{\text{sample}}(t)=\norm{\sum_{j=1}^{t} \eta_j\Pi_{j+1}^t (A_jf_{\lambda_j}-b_j) }_{\Hilbert}.
\end{equation*}
We have shown in \Sec{sec:martingale_reversed_martingale} that the process $\{\eta_j \Pi_{j+1}^t (A_j f_{\lambda_j} - b_j)\}_{j=1}^{t}$ is a reversed martingale difference with respect to the sequence of sigma algebras $\mathcal{B}_j=\sigma((x_j,x'_j),...,(x_t,x'_t),...)$. The only element to finish the proof is to identify  
$M$ and $\sigma^2$. 

Given the definition of the random variables $\{A_t\}_{t\in \N}$, then for $t>1$ we can verify $A_t$ is a positive linear operator: 
\begin{equation}
    \dotH{A_tf}{f}=(1-\alpha) f^2(x_t) + \alpha f^2(x'_t) + \lambda_t \norm{f}^2_{\Hilbert} \geq 0\ \ \text{ for } f \in \Hilbert.
\end{equation}
Moreover as $\norm{A_t}\geq \lambda_t$ we have:
\begin{equation}
\norm{\idH-\eta_t A_t} \leq (1-\eta_t \lambda_t).
\end{equation}
Let us consider the following group of expressions: 
\begin{equation}{\label{eq:random_operator_upperbound}}
\begin{aligned}
\eta_j  \norm{\Pi_{j+1}^t} &=\eta_j \prod_{i=j+1}^{t} (1-\eta_i \lambda_i)  
\leq \eta_j \exp\left(- \sum_{i=j+1}^{t} \eta_i \lambda_i \right) \\
& =  \eta_j \exp\left(- \sum_{i=j+1}^{t} \frac{a b}{t_0+i} \right)  \leq \eta_j \exp\left(-ab \log\left(\frac{\bar{t}}{t_0+j+1}\right)\right) \\
  & = \frac{a}{(j+t_0)^{\theta}} \left(\frac{t_0+j+1}{\bar{t}} \right)^{ab}  =  a \frac{(t_0+j)^{ab-\theta}}{\bar{t}^{ab}} (1+\frac{1}{t_0+j})^{ab} \\
  & \leq a \frac{(t_0+j)^{ab-\theta}}{\bar{t}^{ab}} (1+\frac{1}{t_0})^{ab} \leq  \frac{e a(t_0+j)^{ab-\theta}}{\bar{t}^{ab}}. 
\end{aligned}
\end{equation}
This implies:
\begin{equation}{\label{eq_upperbound_zeta_j_hilbert}}
\begin{aligned}
\expec \left[ \norm{\zeta_j}^2_{\Hilbert}  \,|\, \mathcal{B}_{j+1}  \right] &=\expec \left[ \norm{ \eta_j\Pi_{j+1}^t (A_jf_{\lambda_j}-b_j)}^2_{\Hilbert} \,|\, \mathcal{B}_{j+1} \right] \\
& \leq \eta^2_j  \norm{\Pi_{j+1}^t}^2 \expec \left[ \norm{A_jf_{\lambda_j}-b_j)}^2_{\Hilbert} \,|\, \mathcal{B}_{j+1} \right] \\
& \leq \frac{2 (e a C )^2(t_0+j)^{2ab-2\theta}}{\bar{t}^{2ab}}  \left(\frac{1+\alpha^2}{\alpha^2} \right)  \ \ \ \ (\text{\tiny{\Lemma{lemma:norm_gradient_descent}, \Eq{eq:random_operator_upperbound}} and \Assumptions{ass:independence}, \ref{ass:kernel_map_upperbound}}).
\end{aligned}
\end{equation}
If $t_0 \geq 2$, we will find: 
\begin{equation}
\begin{aligned}
\sum_{j=1}^{t} \expec \left[ \norm{\zeta_j}^2_{\Hilbert} \,|\, \mathcal{B}_{j+1}  \right] & \leq \sum_{j=1}^t \frac{2 (e a C )^2(t_0+j)^{2ab-2\theta}}{\bar{t}^{2ab}}  \left(\frac{1+\alpha^2}{\alpha^2} \right) \\
& \leq \frac{2 (e a C )^2}{\bar{t}^{2ab}}\left(\frac{1+\alpha^2}{\alpha^2} \right) \int_{1}^t (t_0+s)^{2ab-2\theta}ds \\
& \leq \frac{2 (e a C )^2}{\bar{t}^{2ab}} \left(\frac{1+\alpha^2}{\alpha^2} \right)\left(\frac{1}{2ab - 2 \theta+1}\right) \bar{t}^{2ab-2\theta+1} \\ 
&= \begin{cases}
 \frac{ (e a C )^2}{ab - \theta+\frac{1}{2}}  \bar{t}^{- 2 \theta+1} &,  \ \ \text{if } \ \ \ ab > \theta - \frac{1}{2} \\
  \frac{ (e a C )^2}{ \theta-\frac{1}{2}- ab}  \bar{t}^{- 2 ab} &, \ \ \text{if } \ \ \ ab < \theta - \frac{1}{2}
\end{cases} \\
& \leq \frac{ (e a C )^2}{\abs{ab - \theta+\frac{1}{2}}} \bar{t}^{-2\left( ab \wedge (\theta- \frac{1}{2})\right) } 
\end{aligned}
\end{equation}
On the other hand, if $t_0^{1-\theta} \geq b$ we have: 
\begin{equation}
\begin{aligned}
\norm{\zeta_j}_{\Hilbert} &= \norm{ \eta_j\Pi_{j+1}^t (A_jf_{\lambda_j}-b_j)}_{\Hilbert} \\ &\leq  \frac{1}{\sqrt{\lambda_t}} \frac{e a(t_0+j)^{ab-\theta}}{\bar{t}^{ab}} \left( \frac{C+1}{\alpha} + C\right)  \ \ \ \ (\text{\tiny{\Lemma{lemma:norm_gradient_descent} and \Eq{eq:random_operator_upperbound}}})  \\ 
 & = \frac{e a(t_0+j)^{ab-\frac{(3\theta +1)}{2}}}{\sqrt{b} \bar{t}^{ab}} \left( \frac{C+1}{\alpha} + C\right) \\
 &= \begin{cases}
  \frac{e a}{\sqrt{b}} \left( \frac{C+1}{\alpha} + C\right) \bar{t}^{-\frac{(3\theta +1)}{2}} &,  \ \ \text{if } \ \ \ ab >  \frac{3\theta - 1}{2} \\
   \frac{e a}{\sqrt{b}} \left( \frac{C+1}{\alpha} + C\right) \bar{t}^{-2ab} &, \ \ \text{if } \ \ \ ab < \frac{3\theta - 1}{2}
\end{cases} \\
& =   \frac{e a}{\sqrt{b}} \left( \frac{C+1}{\alpha} + C\right) \bar{t}^{-\left(ab \wedge  \frac{3\theta - 1}{2} \right)}
\end{aligned}
\end{equation}
Then by \Lemma{lemma:propositionA3} we get with probability $1-\delta$: 
\begin{equation}
\begin{aligned}
    \mathcal{E}'_{\text{sample}}(t) &\leq  2 \left( \frac{e a}{3 \sqrt{b}} \left( \frac{C+1}{\alpha} + C\right) \bar{t}^{-\left(ab \wedge  \frac{3\theta - 1}{2} \right)}  + \sqrt{\frac{1}{\abs{ab-\theta+\frac{1}{2}}}} eaC \bar{t}^{-\left( ab \wedge (\theta- \frac{1}{2})\right)}  \right) \log\left( \frac{2}{\delta} \right) \\
    &= a b^{-\frac{1}{2}} B'_4 \bar{t}^{-\left(ab \wedge  \frac{3\theta - 1}{2} \right)}    + B'_5 a \bar{t}^{-\left( ab \wedge (\theta- \frac{1}{2})\right)}, 
\end{aligned}
\end{equation}
where 
\begin{equation}
B'_4= \frac{ 2 e}{3 } \left( \frac{C+1}{\alpha} + C\right) \log \left( \frac{2}{\delta}\right)  \ \ \ \ B'_5=2 \sqrt{\frac{1}{\abs{ab-\theta+\frac{1}{2}}}} eC    \log\left( \frac{2}{\delta} \right).
\end{equation}
\end{proof}
\end{lemma}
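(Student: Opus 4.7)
The plan is to prove the two bounds by exploiting the closed-form characterization $f_{\lambda_t}=(\CO+\lambda_t\idH)^{-1}\CO r^{\alpha}$, together with the reproducing property and \Assumption{ass:kernel_map_upperbound}. Before attacking either point, I would establish two auxiliary bounds on $f_{\lambda_t}$ itself, since they appear in both arguments: (i) the Hilbert bound $\norm{f_{\lambda_t}}_{\Hilbert}\leq 1/(\alpha\sqrt{\lambda_t})$, obtained by comparing $f_{\lambda_t}$ against the feasible point $f=0$ in the regularized objective of \Eq{eq:f_lambda}, which gives $\lambda_t\norm{f_{\lambda_t}}_{\Hilbert}^2\leq \ExpecAlpha{(r^{\alpha})^2}\leq 1/\alpha^2$; and (ii) the $\ltwo$-bound $\norm{f_{\lambda_t}}_{\ltwo}\leq 1/\alpha$, obtained by using the closed form together with $\norm{(\CO+\lambda_t\idH)^{-1}\CO}\leq 1$ and $\norm{r^{\alpha}}_{\ltwo}\leq 1/\alpha$ (a consequence of $r^{\alpha}\leq 1/\alpha$).

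For point~1, I would start from the decomposition $A_tf_{\lambda_t}-b_t=A(x_t,x'_t)f_{\lambda_t}+\lambda_t f_{\lambda_t}-K(x'_t,\cdot)$ and apply the triangle inequality. The two kernel-evaluation terms $\norm{K(x_t,\cdot)f_{\lambda_t}(x_t)}_{\Hilbert}$ and $\norm{K(x'_t,\cdot)f_{\lambda_t}(x'_t)}_{\Hilbert}$ are controlled by combining the reproducing inequality $|f_{\lambda_t}(x)|\leq\norm{f_{\lambda_t}}_{\Hilbert}\sqrt{K(x,x)}$ with \Assumption{ass:kernel_map_upperbound} and bound (i) above; together with the convex combination in $\alpha$ this yields a contribution of order $C/(\alpha\sqrt{\lambda_t})$. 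The regularization term contributes $\lambda_t\norm{f_{\lambda_t}}_{\Hilbert}\leq\sqrt{\lambda_t}/\alpha$, which is further simplified using the hypothesis $t_0^{1-\theta}\geq b$ to ensure $\lambda_t\leq 1$; the source term contributes $\norm{K(x'_t,\cdot)}_{\Hilbert}\leq C$. Assembling these pieces and using $\lambda_t\leq 1$ to absorb the $\sqrt{\lambda_t}/\alpha$ term into a $1/(\alpha\sqrt{\lambda_t})$ term gives the desired bound $\frac{1}{\sqrt{\lambda_t}}\bigl(\frac{C+1}{\alpha}+C\bigr)$.

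For point~2, the key is to exploit the defining identity $\CO f_{\lambda_t}+\lambda_t f_{\lambda_t}=\CO r^{\alpha}$ (equivalently \Eq{eq:w_t}) to rewrite
\[
A_tf_{\lambda_t}-b_t=(LR_t-\CO)f_{\lambda_t}+\CO r^{\alpha}-K(x'_t,\cdot),
\]
which is the sum of two terms whose expectation vanishes (by \Eq{eq:LR_COV} and \Eq{eq:cov_ra}). Applying $\norm{a+b}_{\Hilbert}^2\leq 2\norm{a}_{\Hilbert}^2+2\norm{b}_{\Hilbert}^2$, taking expectations, and using $\expec\norm{X-\expec X}^2\leq\expec\norm{X}^2$ on each of the two centered terms reduces the task to bounding $\expec\norm{LR_tf_{\lambda_t}}_{\Hilbert}^2$ and $\expec\norm{K(x'_t,\cdot)}_{\Hilbert}^2$. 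The second is immediately $\leq C^2$ by \Assumption{ass:kernel_map_upperbound}. For the first, expanding the Hilbert inner product via the reproducing property yields
\[
\expec\norm{LR_tf_{\lambda_t}}_{\Hilbert}^2=\expec\!\bigl[(1-\alpha)^2K(x_t,x_t)f_{\lambda_t}^2(x_t)+2(1-\alpha)\alpha K(x_t,x'_t)f_{\lambda_t}(x_t)f_{\lambda_t}(x'_t)+\alpha^2 K(x'_t,x'_t)f_{\lambda_t}^2(x'_t)\bigr];
\]
bounding the kernel factors by $C^2$, applying Jensen's inequality to the square of the convex combination, and recognizing the resulting expectation as $C^2\norm{f_{\lambda_t}}_{\ltwo}^2\leq C^2/\alpha^2$ via bound (ii), delivers the target $2C^2(1+\alpha^2)/\alpha^2$.

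The computation is essentially bookkeeping: no concentration, no martingale machinery, no novel idea beyond the closed-form identity for $f_{\lambda_t}$. The only mild subtlety is to ensure the convex-combination/Jensen step in the $LR_t$ expansion is applied correctly so that the cross term $K(x_t,x'_t)f_{\lambda_t}(x_t)f_{\lambda_t}(x'_t)$ is controlled by $C^2$ rather than losing a factor, since both coordinates are integrated over the product measure and the factorization $\norm{f_{\lambda_t}}_{\ltwo}^2=(1-\alpha)\ExpecN{f_{\lambda_t}^2}+\alpha\ExpecA{f_{\lambda_t}^2}$ must emerge cleanly after bounding the kernels.
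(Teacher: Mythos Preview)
Your proposal is correct and follows essentially the same route as the paper: the same two auxiliary bounds on $\norm{f_{\lambda_t}}_{\Hilbert}$ and $\norm{f_{\lambda_t}}_{\ltwo}$, the same triangle-inequality decomposition for point~1, and for point~2 the same rewriting via $\CO f_{\lambda_t}+\lambda_t f_{\lambda_t}=\CO r^{\alpha}$ followed by the $\norm{a+b}^2\leq 2\norm{a}^2+2\norm{b}^2$ split and the variance-drops-the-mean step. The only cosmetic difference is that you spell out the inequality $\expec\norm{X-\expec X}^2\leq\expec\norm{X}^2$ explicitly where the paper simply says ``after developing the norm and taking expectations''.
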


\section{Auxiliary results}{\label{app:auxiliary_results}}

The following result is frequently used in Appendix \ref{app:technical_results}. It was first proved by the Proposition A.3 in \cite{Tarres2014}. We include the result for completeness.
\begin{lemma}{\label{lemma:propositionA3}} (\textbf{Proposition A.3 (Pinelis-Bernstein)} \cite{Tarres2014} )
Let $\zeta_i$ be a martingale difference sequence in a Hilbert space. Suppose that almost surely $\norm{\zeta_i} \leq M$ and $\sum_{i=1}^{t} \expec \left[ \norm{\zeta_i}^2_{\Hilbert} \,|\, \Xi_{i-1}\right]  \leq \sigma_t^2$. Then the following holds with probability at least $1-\delta$ (with $\delta \in (0,1)$),
\begin{equation*}
\sup_{1 \leq k \leq t} \norm{\sum_{i=1}^k \zeta_i} \leq 2 \left( \frac{M}{3}+ \sigma_t  \right) \log(\frac{2}{\delta}).
\end{equation*}
\end{lemma}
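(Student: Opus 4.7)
The plan is to derive a Bernstein-type exponential tail bound on $\sup_{k\le t}\norm{S_k}_{\Hilbert}$ with $S_k:=\sum_{i=1}^{k}\zeta_i$, and then invert it in $\delta$ to match the stated high-probability form. The classical scalar route proceeds through the cumulant generating function of the sum; in a general Hilbert space the analogue rests on the $2$-smoothness of the squared norm, $\norm{x+y}_{\Hilbert}^{2}=\norm{x}_{\Hilbert}^{2}+2\dotH{x}{y}+\norm{y}_{\Hilbert}^{2}$, together with the martingale-difference property, which yields $\expec[\norm{S_k}_{\Hilbert}^{2}\,|\,\Xi_{k-1}]=\norm{S_{k-1}}_{\Hilbert}^{2}+\expec[\norm{\zeta_k}_{\Hilbert}^{2}\,|\,\Xi_{k-1}]$.

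First, I would construct a nonnegative supermartingale of the form $Z_k:=\cosh\!\bigl(\lambda\norm{S_k}_{\Hilbert}\bigr)\exp\!\bigl(-g(\lambda)\,V_k\bigr)$, where $V_k:=\sum_{i\le k}\expec[\norm{\zeta_i}_{\Hilbert}^{2}\,|\,\Xi_{i-1}]$ is the predictable variance process and $g(\lambda):=\lambda^{2}/(2(1-\lambda M/3))$ is the Bennett cumulant bound valid for $0<\lambda<3/M$. The core one-step estimate is the Hilbert-space analogue of Bennett's MGF inequality, $\expec[\cosh(\lambda\norm{S_k}_{\Hilbert})\,|\,\Xi_{k-1}]\le \cosh(\lambda\norm{S_{k-1}}_{\Hilbert})\exp\!\bigl(g(\lambda)\,\expec[\norm{\zeta_k}_{\Hilbert}^{2}\,|\,\Xi_{k-1}]\bigr)$, which is obtained by expanding $\norm{S_{k-1}+\zeta_k}_{\Hilbert}^{2}$ using $2$-smoothness, conditioning on $\Xi_{k-1}$, and bounding the resulting scalar exponential moment against the almost-sure bound $\norm{\zeta_k}_{\Hilbert}\le M$ via the elementary inequality $e^{u}\le 1+u+u^{2}/(2(1-|u|/3))$ for $|u|\le 3$. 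Chained over $k$, this gives $\expec[Z_k\,|\,\Xi_{k-1}]\le Z_{k-1}$.

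Second, Doob's maximal inequality applied to the nonnegative supermartingale $\{Z_k\}_{k\le t}$ yields, on the event $\{\sup_{k\le t}\norm{S_k}_{\Hilbert}\ge \epsilon\}$, the deterministic lower bound $Z_{\tau}\ge \tfrac{1}{2}\exp(\lambda\epsilon-g(\lambda)\sigma_t^{2})$ at the hitting time $\tau$, where the factor $\tfrac{1}{2}$ comes from $\cosh(x)\ge\tfrac{1}{2}e^{|x|}$; Markov's inequality on $Z_{\tau}$ combined with $\expec[Z_0]=1$ then gives, after optimizing $\lambda$ at $\lambda^{\ast}=\epsilon/(\sigma_t^{2}+M\epsilon/3)\in(0,3/M)$, the Bernstein tail
\begin{equation*}
\Prob\!\Bigl(\sup_{1\le k\le t}\norm{S_k}_{\Hilbert}\ge \epsilon\Bigr)\le 2\exp\!\Bigl(-\frac{\epsilon^{2}}{2\sigma_t^{2}+\tfrac{2}{3}M\epsilon}\Bigr).
\end{equation*}

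Finally, setting the right-hand side equal to $\delta$ gives the quadratic inequality $\epsilon^{2}\le (2\sigma_t^{2}+\tfrac{2}{3}M\epsilon)\log(2/\delta)$; applying $\sqrt{a+b}\le\sqrt{a}+\sqrt{b}$ and solving for $\epsilon$ collapses to $\epsilon\le 2\bigl(\tfrac{M}{3}+\sigma_t\bigr)\log(2/\delta)$, which is the stated bound. The main obstacle is the supermartingale construction of the second paragraph: the scalar Bennett/Bernstein MGF bound is routine, but promoting it to the norm of a vector-valued partial sum forces the $\cosh$-type symmetrization together with an essential use of the Hilbert $2$-smoothness to absorb the cross term $\dotH{S_{k-1}}{\zeta_k}$ into a conditional expectation that vanishes by the martingale property; everything that follows --- Doob's inequality, the $\lambda$-optimization, and the quadratic inversion --- is mechanical.
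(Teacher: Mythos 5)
The paper does not actually prove this lemma: it is imported verbatim as Proposition A.3 of \cite{Tarres2014} (itself a restatement of Pinelis' 1994 Bernstein-type inequality for martingales in $2$-smooth Banach spaces), and the appendix explicitly states the result is included only "for completeness." Your proposal therefore supplies an argument where the paper supplies none, and it is a faithful reconstruction of the standard Pinelis route: the $\cosh$-supermartingale $Z_k=\cosh(\lambda\norm{S_k}_{\Hilbert})\exp(-g(\lambda)V_k)$, optional stopping/Doob, optimization at $\lambda^{\ast}=\epsilon/(\sigma_t^2+M\epsilon/3)$ giving the tail $2\exp\bigl(-\epsilon^2/(2\sigma_t^2+\tfrac{2}{3}M\epsilon)\bigr)$, and the quadratic inversion (which, as you implicitly use, needs $\log(2/\delta)\ge\tfrac12$ to absorb $\sigma_t\sqrt{2\log(2/\delta)}$ into $2\sigma_t\log(2/\delta)$; this holds since $\delta<1$). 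The one place where your sketch is looser than the actual Pinelis argument is the one-step estimate $\expec[\cosh(\lambda\norm{S_k}_{\Hilbert})\,|\,\Xi_{k-1}]\le\cosh(\lambda\norm{S_{k-1}}_{\Hilbert})\exp(g(\lambda)\expec[\norm{\zeta_k}_{\Hilbert}^2\,|\,\Xi_{k-1}])$: you cannot obtain this by "exponentiating" the parallelogram expansion of $\norm{S_{k-1}+\zeta_k}_{\Hilbert}^2$, because $\cosh(\lambda\sqrt{\cdot})$ does not pass through that expansion in the needed direction. Pinelis' actual mechanism is a second-order Taylor expansion of the map $f\mapsto\cosh(\lambda\norm{f}_{\Hilbert})$ along the increment, in which the martingale property annihilates the first-order (Fréchet derivative) term and the identity $\Phi''=\lambda^2\Phi$ for $\Phi=\cosh(\lambda\cdot)$, together with $\norm{\zeta_k}_{\Hilbert}\le M$ and the series bound $e^{u}\le 1+u+u^2/(2(1-u/3))$, yields the factor $g(\lambda)=\lambda^2/(2(1-\lambda M/3))$. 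You correctly flag this step as the crux, and with that substitution the argument closes; everything downstream of the one-step bound is, as you say, mechanical and matches the stated constant $2(\tfrac{M}{3}+\sigma_t)\log(\tfrac{2}{\delta})$ exactly.
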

The last inequality can be as well be applied for $\zeta_i$ being a reversed martingales difference sequence in a Hilbert space. With a small change where 
$\sum_{i=1}^{t} \expec \left[ \norm{\zeta_i}^2_{\Hilbert} \,|\, \Xi_{i-1}\right]  \leq \sigma_t^2$ is replaced by 
$\sum_{i=1}^{t} \expec \left[ \norm{\zeta_i}^2_{\Hilbert} \,|\, \mathcal{B}_{i+1} \right]  \leq \sigma_t^2$, where $\mathcal{B}_{i+1}$ is the sigma-algebra generated by observations after index $i$. 

\end{document}